\documentclass[letterpaper, 10pt, conference]{ieeeconf}  

\IEEEoverridecommandlockouts
\overrideIEEEmargins
\usepackage{mymacro}
\usepackage[whole]{bxcjkjatype}
\usepackage{stackengine}
\def\delequal{\mathrel{\ensurestackMath{\stackon[1pt]{=}{\scriptstyle\Delta}}}}
\theoremstyle{plain}
\newtheorem{Thm}{Theorem}[section]
\theoremstyle{definition}
\newtheorem{Def}[Thm]{Definition}
\theoremstyle{remark}

\newtheorem{Lem}[Thm]{Lemma}
\newtheorem{Asm}[Thm]{Assumption}
\newtheorem{Prop}[Thm]{Proposition}

\usepackage{wrapfig}
\usepackage{cite}
\usepackage{hyperref}
\usepackage{subcaption}

\title{\LARGE \bf
High-Dimensional Learning Dynamics of Quantized Models with Straight-Through Estimator\thanks{Preprint. Under review.}
}

\author{
    Yuma Ichikawa\dag\\
    Fujitsu Limited\\
    \& RIKEN center for AIP\\
    \thanks{\dag~\texttt{ichikawa.yuma@fujitsu.com}}
    \and
    Shuhei Kashiwamura\\
    The University of Tokyo\\
    \and
    Ayaka Sakata\\
    Ochanomizu University \\
    \& RIKEN center for AIP\\
}

\begin{document}

\maketitle
\thispagestyle{empty}
\pagestyle{empty}

\begin{abstract}

Quantized neural network training optimizes a discrete, non-differentiable objective.
The straight-through estimator (STE) enables backpropagation through surrogate gradients and is widely used.
While previous studies have primarily focused on the properties of surrogate gradients and their convergence, the influence of quantization hyperparameters, such as bit width and quantization range, on learning dynamics remains largely unexplored.
We theoretically show that in the high-dimensional limit, STE dynamics converge to a deterministic ordinary differential equation.
This reveals that STE training exhibits a plateau followed by a sharp drop in generalization error, with plateau length depending on the quantization range.
A fixed-point analysis quantifies the asymptotic deviation from the unquantized linear model.
We also extend analytical techniques for stochastic gradient descent to nonlinear transformations of weights and inputs.

\end{abstract}

\section{INTRODUCTION}\label{sec:introduction}

Deep neural networks (DNNs) have transformed machine learning, with remarkable success in computer vision \cite{he2016deep, krizhevsky2012imagenet}, reinforcement learning \cite{mnih2013playing, sutton1999policy}, and natural language processing \cite{achiam2023gpt, touvron2023llama, grattafiori2024llama}.
However, state-of-the-art DNNs often contain billions of parameters, making inference costly in terms of compute, memory, and energy. 
These requirements prevent deployment on resource-constrained devices such as smartphones and Internet of Things (IoT) devices \cite{liang2021pruning}.

Quantization has emerged as a widely used approach.
By representing weights and activations with a few bits, quantized DNNs replace full-precision floating-point operations with low-precision arithmetic, substantially reducing the memory footprint and computation while preserving accuracy.
Training quantized DNNs, or quantization-aware training (QAT) \cite{cai2017deep, choi2018pact, hubara2018quantized, rastegari2016xnor, yin2018binaryrelax}, is challenging because quantization makes the loss function piecewise constant in the quantized variables, thereby preventing standard backpropagation.
A widely adopted method is the straight-through estimator (STE) \cite{bengio2013estimating}, which uses a surrogate gradient in the backward pass.
Despite being a heuristic modification of the chain rule, this approach has enabled low-bit training across a range of tasks and architectures.
Beyond quantization, 
STE has become a general tool for discrete objectives, including neural architecture search \cite{riad2022learning, stamoulis2020single}, discrete latent variables \cite{jang2016categorical, kunes2023gradient, paulus2020rao}, adversarial attacks \cite{athalye2018obfuscated}, and sparse recovery \cite{mohamed2023straight}, among others \cite{mao2022enhance, wagstaff2022universal, xu2019relation}.

Despite its empirical success, STE lacks theoretical justification, raising concerns about the stability and performance of the optimization process.
Efforts to clarify the properties of STE have progressed in recent years. 
Recent analyses have clarified when STE-based updates serve as descent directions and when design choices can destabilize learning \cite{li2017training, yin2019understanding}.
However, the influence of quantization hyperparameters, such as bit width and quantization range, on learning dynamics remains underexplored.
Complementary work has examined generalization error as a function of quantization hyperparameters without assuming a specific training rule \cite{kashiwamura2024effect}. However, these results cannot capture the dynamics induced by the STE.
Moreover, most existing studies address either weight-only quantization \cite{kashiwamura2024effect, ajanthan2021mirror, dockhorn2021demystifying, jin2025parq, liu2023binary} or activation-only quantization \cite{long2021learning, yin2019understanding}, whereas jointly quantizing weights and inputs is more challenging and has recently attracted significant attention in DNN quantization.

We study a minimal yet representative linear regression model with jointly \emph{quantized weights and inputs}, trained using STE in the high-dimensional input limit, which is closely related to layer-wise post-training quantization (PTQ) \cite{frantar2022gptq, lin2024awq, arai2025quantization}.
This analysis enables us to quantify how STE dynamics depend on quantization hyperparameters and to characterize phenomena such as non-monotonic behavior at low bit widths, stability improvements due to quantization, and the performance gap relative to the unquantized model.

\subsection{Contribution}
We develop a theoretical framework that characterizes the learning dynamics of STEs. Our contributions are as follows.
\begin{itemize}
    \item In the high-dimensional limit, the microscopic parameter updates converge to a continuous-time stochastic differential equation (SDE), whereas the macroscopic states follow a deterministic ordinary differential equation (ODE). 
    We also extend the high-dimensional dynamics analyses to settings with nonlinear transformations of both parameters and inputs.
    \item The ODEs predict a typical two-phase trajectory: an extended plateau followed by a sharp drop in generalization error and eventual saturation. 
    The timing depends on the quantization range, underscoring the importance of carefully selecting hyperparameters.
    \item Input quantization can significantly degrade performance. Furthermore, in the low-bit regime, the dynamics of the STE become non-monotonic, leading to slower convergence.
    \item Asymptotic stability analysis identifies the regimes where quantization preserves training stability, even at higher learning rates, thereby clarifying when quantization acts as an implicit regularizer rather than a mere perturbation. 
    We also quantify the performance degradation relative to the unquantized model and express it as an explicit function of the quantization hyperparameters.
\end{itemize}

\section{RELATED WORK}
\subsection{STE Dynamics.}
Research on the theoretical underpinnings of STE has focused primarily on activation quantization in two-layer neural networks.
It has been shown that with appropriate surrogates, STE reduces population risk in regression and, under appropriate conditions, can converge to a perfectly separating classifiers, whereas poor surrogates may induce instability \cite{yin2019understanding,yin2019blended,long2021learning}.
Furthermore, clipped STE has been shown to break scale invariance and thereby promote convergence to empirical risk minima \cite{gongyo2022proper}.
Beyond these settings, STE has also been formulated as a Wasserstein gradient flow to study convergence properties \cite{cheng2019straight}, and interpreted as a first-order approximation \cite{liu2023bridging}.
To the best of our knowledge, however, the STE dynamics in jointly weight-input quantized models remain unexplored.

\subsection{Static Properties of Quantization.}
The static properties of quantized models are increasingly understood.
In terms of representation ability, theoretical results characterize how approximation power scales with bit-width and the parameter overhead required to match full-precision networks \cite{ding2018universal}, and also establish the bit precision necessary to achieve unquantized approximation rates \cite{gonon2023approximation}.
With respect to generalization, PAC-Bayes and description-length analyses suggest that low-bit models have tighter generalization bounds \cite{lotfi2022pac}.
Some studies interpret quantization as an implicit regularizer that biases optimization toward flatter minima, often improving generalization \cite{askarihemmat2024qgen, javed2024qt}.
Statistical physics analyses indicate that, in simple classification settings, performance saturates beyond a small bit width \cite{baldassi2016learning}.
In addition to bit-width, recent work \cite{kashiwamura2024effect} reveals how the quantization range affects generalization and the double descent phenomenon.
Despite progress in static analysis, the training dynamics of quantized models remain largely unexplored, particularly in terms of how bit width and range influence learning.

\subsection{Learning Dynamics in High-dimensional Limit.}
Deterministic dynamical descriptions of SGD in the high-dimensional input limit have been studied within the statistical physics community. 
This line of studies began with single- and two-layer neural networks featuring a few hidden units \cite{kinzel1990improving, kinouchi1992optimal, copelli1995line, biehl1995learning, riegler1995line, vicente1998statistical, yoshida2019data}, based on a heuristic derivation of ODEs that describe typical learning dynamics.
These results have recently been rigorously proven using the concentration phenomena in stochastic processes \cite{ichikawa2024learning, wang2018subspace, wang2019solvable, goldt2019dynamics, veiga2022phase}.
The key idea in this type of analysis is to capture high-dimensional parameter dynamics with a few macroscopic variables. 
However, this analytical method is restricted to systems without nonlinear transformation of both parameters and inputs.
Therefore, it is difficult to directly apply this method to the analysis of STE dynamics that include nonlinear transformations of both weights and inputs.

\section{PROBLEM SETUP}

\subsection{Notation.}
We summarize the notation used in this study.
For $T \in \mab{N}$, we define $[T] \coloneqq \{1,\ldots,T\}$.
Vectors are denoted by bold lowercase letters, e.g., $\B{x}$, and matrices by upper-case letters, e.g., $X$.
The standard normal density and distribution function are defined as $\phi(t) \coloneqq (2\pi)^{-\nicefrac{1}{2}} e^{-\nicefrac{t^{2}}{2}}$, $\Phi(t) \coloneqq \int_{-\infty}^{t} \phi(s) ds$. 
The Heaviside step function $\Theta:\mab{R} \to \{0,1\}$ is defined by $\Theta(x)=\B{1}_{\{x\ge 0\}}=1$ if $x \ge 0$ and $0$ otherwise. 
For all $p \in [1, \infty]$, $\|\B{x}\|_p \coloneqq (\sum_{i=1}^{d} |x_{i}|^{p})^{1/p}$ denotes the $\ell_{p}$ norm of $\B{x} \in \R^{d}$. 
In particular, we denote $\ell_{2}$ norm as $\|\cdot\|$. 
$I_{d} \in \R^{d \times d}$ denotes the $d \times d$ identity matrix. 
$\B{0}_{d}$ denotes the zero vector $(0, \ldots, 0)^{\top} \in \R^{d}$. 
We use asymptotic order notation with respect to a variable $d$.
Specifically, for nonnegative functions $f,g$ of $d$, we define
$f=\mac{O}_{d}(g)$ if there exist constants $C > 0$ and $d_{0}$ such that $f(d) \le C g(d)$ for all $d \ge d_{0}$. 
Similarly, we define $f=\Theta_{d}(g)$ if there exist constants $c, C > 0$ and $d_{0}$ such that $c g(d) \le f(d) \le C g(d)$ for all $d \ge d_{0}$.

\subsection{Data Model.}
Let $n \in \mab{N}$ be the sample size and $d\in\mab{N}$ the input dimension.
We observe $n$ i.i.d. pairs, $\mac{D}=\{(\B{x}_{\mu},y_{\mu})\}_{\mu=1}^{n} \subset \R^{d} \times \R$, 
generated according to the following linear Gaussian model:
\begin{equation}
    \label{eq:data-model}
    \B{x}_\mu \sim \mac{N}(\B{x}_{\mu}; \B{0}, I_d),~y_{\mu}=y(\B{x}_{\mu}; \B{w}^{\ast}) = \frac{1}{\sqrt d}\B{x}_{\mu}^{\top} \B{w}^{\ast} + \xi_\mu,
\end{equation}
with the ground-truth parameter $\B{w}^{\ast} \in \mab{R}^{d}$ and independent noise $\xi_{\mu} \sim \mac{N}(\xi_{\mu}; 0, \sigma^{2})$.
The components of the parameter are independently drawn from the distribution $p_{\B{w}^{\ast}}$ on $\mab{R}$, with bounded moments and satisfying $\|\B{w}^{\ast}\|_{2}^{2} = \rho d$.
Despite its simplicity, this setting is expressive enough to capture a broad class of problems.
In particular, it has been shown that asymptotic behavior of Gaussian universality regression problems, across a large class of features, can be computed to leading order under a simpler model with Gaussian features \cite{loureiro2021learning, goldt2022gaussian, montanari2022universality, gerace2024gaussian,dandi2023universality}. 

\subsection{Quantized Linear Predictor.}
Let $\B{w} \in \mab{R}^{d}$ be a trainable parameter.
We define the prediction through the following quantized linear model:
\begin{equation}
    \hat{y}(\B{x}; \B{w}) = \frac{1}{\sqrt{d}} \B{\psi}(\B{w})^{\top} \B{\psi}(\B{x}), 
\end{equation}
where the quantization map $\B{\psi}:\R^d\to \Omega^{d}$ acts component-wise as $\B{\psi}(\B{w})=(\psi(w_{1}), \ldots, \psi(w_{d}))^{\top}$ and 
$\Omega=\{v_k \mid v_{k} \in \R\}_{k=0}^{L}$ denotes the finite set of quantization levels.
Specifically, 
$\B{\psi}: \R^{d} \to \Omega^d$ maps a real vector $\B{w} \in \mab{R}^{d}$ to its nearest discrete counterpart $\hat{\B{w}} \in \Omega^{d}$.
In the uniform $b$-bit quantization scheme with range $\omega>0$, the number of quantization levels is $L+1=2^{b}-1$.
We define the quantization levels as $\Omega=\{v_{k} \mid v_{k} = -\omega + k \Delta\}_{k=0}^{L}$, 
where the quantization step size $\Delta=\nicefrac{2\omega}{L}$. 
Thus, $\Omega$ uniformly partitions the interval $[-\omega, \omega]$ into $L$ equal subintervals.
The corresponding decision thresholds $\{\theta\}_{k=1}^{L}$ are given by $\theta_{k} = -\omega + (k-\nicefrac{1}{2})\Delta,~k=1, \ldots, L$, 
which uniquely determine the mapping of each real value $x \in \mab{R}$ to its nearest quantization level $v_{k}$.
Formally, the scalar quantizer $\psi:\R\to\Omega$ is defined by
\begin{equation}
  \label{eq:hard-quant}
  \psi(x) = -\omega + \Delta \sum_{k=1}^{L} \Theta(x-\theta_{k}).
\end{equation}
We further generalize a differentiable relaxation of the quantizer, parameterized by a temperature $T \ge 0$:
\begin{equation}
\label{eq:soft-quant}
  \psi_{T}(x)
  = -\omega + \Delta \sum_{k=1}^{L} \Phi\left(\frac{x-\theta_k}{T}\right).
\end{equation}
As $T \to +0$, the relaxed quantizer $\psi_{T}(x)$ converges pointwise to the hard quantizer defined in Eq.~\eqref{eq:hard-quant}.

\subsection{Empirical Risk.}
We define the empirical risk for  $\B{w} \in \mathbb{R}^{d}$ with regularization coefficient $\lambda > 0$ as
\begin{multline}
    \label{eq:train-loss}
    \mathcal{L}(\B{w}; \mathcal{D}) = \sum_{\mu=1}^{n} l(\B{w}; (\B{x}^{\mu}, y^{\mu})), \\
    l(\B{w}; (\B{x}^{\mu}, y^{\mu})) = \tfrac{1}{2} \left(y^{\mu} - \hat{y}(\B{x}^{\mu}; \B{w})\right)^{2} \;+\; \tfrac{\lambda}{2d} \|\B{\psi}(\B{w})\|^{2}.
\end{multline}
Since $\B{\psi}$ is piecewise constant, the resulting objective function is both non-convex and non-differentiable.
Although this formulation is simple, it is closely related to layer-wise PTQ methods \cite{frantar2022gptq, lin2024awq, arai2025quantization, zhao2025benchmarking}, which are among the most widely adopted approaches for quantizing large-scale language models.
These methods achieve state-of-the-art performance by solving layer-wise optimization problems that are structurally equivalent to Eq.~\eqref{eq:train-loss}.
A more detailed discussion of this relationship is provided in Supplement \ref{sec:extended-related-work}.
Therefore, characterizing the quantization properties in this setting has the potential to enable the development of more effective layer-wise PTQ algorithms.

\subsection{Straight Through Estimator.}
Training with quantization is challenging because discrete low-precision weights and inputs make the loss function piecewise constant and thus prohibiting the direct use of standard backpropagation.
A widely used approach is the straight-through estimator (STE) \cite{bengio2013estimating}, which enables gradient-based optimization by employing surrogate gradients for piecewise constant objectives.
STE is an empirically effective heuristic that, during the backward pass only, replaces the almost-everywhere-zero derivative of discrete components with surrogate Jacobian.
Specifically, the following approximation is employed using the chain-rule approximation:
\begin{align}
    \B{w}^{t+1} &= \B{w}^{t} - \eta \pab{\frac{\partial l(\B{w}^{t};(\B{x}^{t}, y^{t}))}{\partial \B{\psi}(\B{w}^{t})}}\pab{\frac{\partial \B{\psi}(\B{w}^{t})}{\partial \B{w}^{t}}} \\
    &\approx \B{w}^{t} - \eta \pab{\frac{\partial l(\B{w}^{t};(\B{x}^{t}, y^{t}))}{\partial \B{\psi}(\B{w}^{t})}}. 
\end{align}
To simplify the theoretical analysis, we consider a one-pass STE dynamics in which each data sample is used only once.
Specifically, at step $t$, the parameter $\B{w}^{t}$ is updated with a new sample $(\B{x}^{t}, y^{t})$ as follows:
\begin{equation}
\label{eq:online-STE-linear}
    \B{w}^{t+1}=\B{w}^{t} - \eta \ab[
    \frac{(\hat{y}(\B{x}^t; \B{w})-y^{t})}{\sqrt{d}}  \B{\psi}(\B{x}^{t})+ \frac{\lambda}{d} \B{\psi}(\B{w}^{t})],
\end{equation}
where $\eta \in \R_{+}$ denotes the learning rate. 
The SGD algorithm induces a Markov process $(\B{w}_{t})_{t \in [T]}$, where $T$ is the number of steps.
Note that our analysis can naturally extend to mini-batch STE where the mini-batch size remains finite, i.e., $\mac{O}_{d}(1)$.

\subsection{Generalization Metric.}
We focus on the generalization error, which quantifies the performance of the current iterate $\B{w}^{t}$. 
Given a newly sampled pair $(\B{x}_{\mathrm{new}}, y_{\mathrm{new}})$ from the same distribution as the dataset $\mac{D}$, we define the generalization error as
\begin{equation}
    \varepsilon_{g}^{t}=  \mab{E}_{(y_{\mathrm{new}}, \B{x}_{\mathrm{new}})}\ab[\ab(y_{\mathrm{new}}-\hat{y}(\B{x}_{\mathrm{new}}; \B{w}^{t}))^{2}],
\end{equation}
where $\E[\cdot]{(\B{x}_{\mathrm{new}}, y_{\mathrm{new}})}$ denotes expectation with respect to the data distribution.
A straightforward calculation gives
\begin{equation}
\label{eq:gen-closed}
  \varepsilon_g^{t} = \sigma_{\psi}^{2} q_{\psi}^{t} - 2 \kappa_{\psi} m^{t}_{\psi} + \rho + \sigma^2,
\end{equation}
where 
\begin{equation}
\label{eq:order-params}
    q_{\psi}^{t} = \frac{\|\B{\psi}(\B{w}^{t})\|_{2}^{2}}{d},~~
    m_{\psi}^{t} = \frac{\B{\psi}(\B{w}^{t})^{\top} \B{w}^{\ast}}{d}.
\end{equation}
We further define $r_{\psi}^{t}=\nicefrac{\B{\psi}(\B{w}^{t})^{\top}\B{w}^{t}}{d}$. 
In this paper, we refer to such statistics derived from $\B{w}^{t}$ as macroscopic states, which are commonly called ``order parameters'' in statistical mechanics.
The constants $\sigma_{\psi}^{2}$ and $\kappa_{\psi}$ depend only on the quantizer $\psi$ and the input distribution:
\begin{align}
    &\sigma_{\psi}^{2}=\mab{E}_{x \sim \mac{N}(0, 1)} [\psi(x)^{2}] =  \sum_{i=0}^{L} v_{i}^{2} (\Phi(\theta_{i+1})-\Phi(\theta_{i})), \\
    &\kappa_{\psi} = \mab{E}_{x \sim \mac{N}(0, 1)}[x \psi(x)] = \sum_{i=1}^{L} (v_{i}-v_{i-1}) \phi(\theta_{i}).
\end{align}
The detailed derivation is provided in Supplement~\ref{sec:proof-generalization-error}.

\section{MICROSCOPIC DYNAMICS}
\label{sec:microscopic-dynamics}
This section derives a microscopic description in which the empirical distribution of the coordinates of the learnable parameter $\B{w}^{t}$ satisfies a partial differential equation (PDE). 
Rather than tracking the full trajectory of the high-dimensional vector $\B{w}^{t}$, we focus on the following empirical distribution.
\begin{Def}
    For $t \in \mab{N}$, let $w_{i}^{t}$ denote the $i$th coordinate of $\B{w}^{t}$, and let $w_{i}^{\ast}$ the corresponding coordinate of the ground-truth vector $\B{w}^{\ast}$.
    We define the empirical measure, given by
    \begin{equation}
        \mu_{t}(\hat{w}^{\ast}, \hat{w}) \coloneqq \frac{1}{d} \sum_{i=1}^{d} \delta(\hat{w}^{\ast}-w_{i}^{\ast}) \delta(\hat{w}-w_{i}^{t}).
    \end{equation}
\end{Def}
We embed the discrete-time process into a continuous-time one by defining $\mu_{\tau}^{(d)} \coloneqq \mu_{\lfloor \tau \times d \rfloor}$ with $t= \lfloor \tau \times d \rfloor$. 
Following the general approach of \cite{wang2017scaling, wang2019solvable}, we establish our results under the following assumptions.
\begin{Asm}
    \label{asm:learning-dynamics}
    We define the macroscopic state $\Psi^{t}=(m^{t}, q^{t}) \in \R^{2}$ by
    $m^{t} = \nicefrac{{\B{w}^{\ast}}^{\top}\B{w}^{t}}{d},~q^{t}= \nicefrac{\|\B{w}^{t}\|^{2}}{d}$.
    \begin{enumerate}
        \item[(1)] The pairs $(\B{x}^{t}, \xi^{t})_{t \in [T]}$ are i.i.d. random variables across $t \in [T]$.
        \item[(2)] The initial macroscopic state $\Psi^{0}$ satisfies $\mab{E}\|\Psi^{0}-\bar{\Psi}^{0}\| \le \nicefrac{C}{\sqrt{d}}$, where $\bar{\Psi}^{0} \in \R^{2}$ is a deterministic vector and $C$ is a constant independent of $d$.
        \item[(3)] The fourth moments of the initial parameter $\B{w}^{0}$ are uniformly bounded:  $\mab{E}\sum_{i=1}^{d}(w_{i}^{\ast})^{4} + (w_{i}^{0})^{4} \le C$ where $C$ is  a constant independent of $d$.
    \end{enumerate}
\end{Asm}
Assumption (A.1) for $(\B{x}^{t}, \xi^{t})$ can be relaxed to non-Gaussian cases if all moments are bounded; however, we use the Gaussian assumption to simplify the proof. 
Assumption (A.2) ensures that the initial macroscopic states concentrate around deterministic values.  
Assumption (A.3) requires the coordinates of the ground-truth vector and the initial parameters to be $\mac{O}_{d}(1)$.
Then, we can show that the following theorem holds:
\begin{Thm}
    Under Assumption \ref{asm:learning-dynamics}, for any finite $T>0$, the empirical measure $\mu_{t}^{(d)}$ converges weakly to a process $\mu_{\tau}$, which is the law of the solution to the stochastic differential equation:
    \begin{align}
        &\mathrm{d}\hat{w}_{\tau} = \eta\ab(\kappa_{\psi}\hat{w}^{\ast}-(\sigma^{2}_{\psi} + \lambda) \psi_{T}(\hat{w}_{\tau}))\mathrm{d}\tau + \eta \sigma_{\psi} \varepsilon^{\frac{1}{2}}_{g}(\tau) \mathrm{d}B_{\tau},
    \end{align}
    where $(\hat{w}^{\ast}, \hat{w}_{0}) \sim \mu_{0}$; $B_{t}$ is the standard Brownian motion; $\varepsilon_{g}(\tau)$ defined by
    \begin{equation}
        \varepsilon_{g}(\tau) = \sigma_{\psi}^{2}q_{\psi}(\tau) - 2\kappa_{\psi} m_{\psi}(\tau) + \rho + \sigma^{2},
    \end{equation}
    where $q_{\psi}(\tau) = \mab{E}_{\mu_{\tau}}[\psi(\hat{w}_{\tau})^{2}]$, $m_{\psi}(\tau)=\mab{E}_{\mu_{\tau}}[\psi(\hat{w}_{\tau})\hat{w}^{\ast})]$.
\end{Thm}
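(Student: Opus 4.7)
The plan is to establish this McKean--Vlasov-type limit by combining (i) one-step conditional moment estimates for an individual coordinate of the STE update with (ii) concentration of the scalar macroscopic state around a deterministic trajectory, and to conclude via a standard diffusion-approximation / propagation-of-chaos argument in the spirit of \cite{wang2017scaling, wang2019solvable, goldt2019dynamics}.

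First I would write the STE update coordinate-wise as
\[
w_i^{t+1}-w_i^t = -\tfrac{\eta}{\sqrt d}\, r^t\, \psi(x_i^t) - \tfrac{\eta\lambda}{d}\,\psi(w_i^t),
\]
with $r^t \coloneqq \hat y(\mathbf x^t;\mathbf w^t) - y^t$, and compute the conditional drift and variance under $\mathcal F_t = \sigma(\mathbf w^0,\ldots,\mathbf w^t)$. Using that $(x_1^t,\ldots,x_d^t,\xi^t)$ are independent, that the symmetric quantizer satisfies $\mathbb E[\psi(x)]=0$, and that only the $i$-th term in the sums defining $h^t = d^{-1/2}\boldsymbol\psi(\mathbf w^t)^\top\boldsymbol\psi(\mathbf x^t)$ and $\zeta^t = d^{-1/2}(\mathbf x^t)^\top\mathbf w^\ast$ correlates with $\psi(x_i^t)$, a direct expansion gives
\[
\mathbb E[w_i^{t+1}-w_i^t\mid\mathcal F_t] = \tfrac{\eta}{d}\bigl[\kappa_\psi w_i^\ast - (\sigma_\psi^2+\lambda)\psi(w_i^t)\bigr] + o(1/d),
\]
and, since $r^t$ is asymptotically independent of $\psi(x_i^t)$ with $\mathbb E[(r^t)^2\mid\mathcal F_t]\to\varepsilon_g^t$,
\[
\mathbb E\bigl[(w_i^{t+1}-w_i^t)^2\mid\mathcal F_t\bigr] = \tfrac{\eta^2}{d}\sigma_\psi^2\varepsilon_g^t + o(1/d).
\]
Under the diffusive rescaling $\tau=t/d$, these are precisely the drift $\eta[\kappa_\psi\hat w^\ast-(\sigma_\psi^2+\lambda)\psi(\hat w_\tau)]$ and squared diffusion $\eta^2\sigma_\psi^2\varepsilon_g(\tau)$ of the target SDE.

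Second I would close the coefficients by establishing concentration of the macroscopic state. A Doob decomposition for $(m_\psi^t, q_\psi^t, r_\psi^t)$, combined with the same moment estimates, yields discrete-time ODEs with $O(1/\sqrt d)$ martingale fluctuations; a Gr\"onwall argument using Assumption~\ref{asm:learning-dynamics} then shows uniform-in-$[0,T]$ proximity to deterministic trajectories $(m_\psi(\tau), q_\psi(\tau), r_\psi(\tau))$, so $\varepsilon_g^t$ concentrates on $\varepsilon_g(\tau)$ and the limiting diffusion coefficient is non-random. Because the cross-coordinate covariances $\mathbb E[(w_i^{t+1}-w_i^t)(w_j^{t+1}-w_j^t)\mid\mathcal F_t]$ for $i\neq j$ are of order $1/d^2$, propagation of chaos holds at the required order, and applying the Ethier--Kurtz martingale-problem framework (tightness on path space following from the uniform fourth-moment bound in Assumption~\ref{asm:learning-dynamics}(3)) shows that $\mu_\tau^{(d)}$ converges weakly to the law of the stated single-particle SDE.

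The main obstacle is that the hard quantizer $\psi$ is discontinuous: the drift $\psi(\cdot)$ is not Lipschitz, so classical McKean--Vlasov well-posedness and diffusion-approximation theorems do not apply verbatim. This is exactly why the statement features the smooth relaxation $\psi_T$ from Eq.~\eqref{eq:soft-quant}: for any $T>0$, $\psi_T$ is bounded and $C^{\infty}$, so both well-posedness of the limit SDE and the coordinate-wise moment expansions above are rigorously valid, while the moment constants $\sigma_\psi^2$ and $\kappa_\psi$ agree with their hard-quantizer values up to vanishing corrections because $\psi$ and $\psi_T$ differ only on shrinking neighborhoods of the thresholds $\{\theta_k\}$ whose Gaussian mass vanishes. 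The hard limit $T\to 0$ is then recovered by a Skorokhod-type coupling, using the bounded density of the marginal of $\mu_\tau$ to control the exceptional events where $\hat w_\tau$ lies near a threshold.
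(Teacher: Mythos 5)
Your plan tracks the paper's Supplement \ref{sec:proof-micro} closely: compute the one-step conditional drift, variance, and higher moments of $\Delta w_i^t$; embed in continuous time with $\tau=t/d$; identify the drift $\eta\,[\kappa_\psi \hat w^\ast-(\sigma_\psi^2+\lambda)\psi_T(\hat w)]$ and squared diffusion $\eta^2\sigma_\psi^2\varepsilon_g(\tau)$; then invoke a general diffusion-approximation / propagation-of-chaos framework (your Ethier--Kurtz martingale-problem route, the paper's \cite{wang2017scaling, wang2019solvable, meleard1987propagation, sznitman2006topics}). Your one-step expansions match Lemmas \ref{lem:micro-conditional-mean}--\ref{lem:micro-third-moment}, and your remark that the theorem is stated with the smoothed quantizer $\psi_T$ precisely to make the drift Lipschitz (with the hard limit recovered only afterwards) is a useful clarification that the paper leaves implicit.

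However, your step (ii) would not survive scrutiny as written. You propose to close the diffusion coefficient by a Doob decomposition plus Gr\"onwall applied to the triple $(m_\psi^t,q_\psi^t,r_\psi^t)$, so that $\varepsilon_g^t$ concentrates on a deterministic trajectory before you apply Ethier--Kurtz. But these three quantities do not evolve as a closed system: Taylor-expanding the increment of, say, $q_\psi^t=d^{-1}\sum_i\psi_T(w_i^t)^2$ brings in new empirical averages such as $d^{-1}\sum_i\psi_T(w_i^t)\psi_T^\prime(w_i^t)$, $d^{-1}\sum_i w_i^\ast\psi_T^\prime(w_i^t)$, and $d^{-1}\sum_i \psi_T^\prime(w_i^t)^2$, none of which are functions of $(m_\psi,q_\psi,r_\psi)$. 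The paper notes exactly this obstruction in Section \ref{sec:macroscopic}. Gr\"onwall cannot be applied to a non-closed recursion, so you cannot pre-deterministicize the diffusion coefficient in this way. The paper's Supplement \ref{sec:proof-micro} avoids the issue by expanding on arbitrary bounded $C^3$ test functions $\zeta(w^\ast,w)$, obtaining the weak PDE directly with $\varepsilon_g(\tau)$ as a genuine nonlinear functional of the limit law $\mu_\tau$, and then deferring well-posedness, tightness, and identification of limits to the cited general framework. The closure into a concrete finite-dimensional ODE system is a separate step, and in the paper it requires the additional isotropy Assumption \ref{asm:isotropy} together with a \emph{different} pair of order parameters $(m^t,q^t)$ in Theorem \ref{theorem:concentration-to-ode} --- that machinery is not available at the level of the present statement. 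To repair your plan, replace step (ii) by the nonlinear martingale-problem formulation: show tightness of $\mu_\tau^{(d)}$ from the moment bounds, verify that any weak limit solves the McKean--Vlasov martingale problem associated with the stated SDE, and conclude by uniqueness of that problem (which holds for $T>0$ because $\psi_T$ is Lipschitz and $\varepsilon_g$ is a bounded Lipschitz functional of $\mu_\tau$ on bounded-support measures).
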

The detailed derivation is provided in Supplement \ref{sec:proof-micro}.
By It\^{o} integral formula, the deterministic measure $\mu_{t}$ is the unique solution to the following PDE: 
for any bounded smooth test function $\zeta(\hat{w}^{\ast}, \hat{w}_{\tau})$,
\begin{multline}
    \label{eq:weak-pde}
    \frac{\mathrm{d}}{\mathrm{d}\tau} \mab{E}_{\mu_{\tau}}[\zeta] = \mab{E}_{\mu_{\tau}}\ab[\eta(\kappa_{\psi} \hat{w}^{\ast}-(\sigma_{\psi}^{2}+\lambda) \psi_{T}(\hat{w}_{\tau})) \partial_{\hat{w}}\zeta] \\    + \frac{\eta^{2}}{2} \sigma_{\psi}^{2}\varepsilon_{g}(\tau) \mab{E}_{\mu_{\tau}}[\partial^{2}_{\hat{w}} \zeta],
\end{multline}
Therefore, at the level of measure, 
\begin{multline}
\label{eq:strong-pde}
    \frac{\partial \mu_{\tau}}{\partial \tau} = - \frac{\partial}{\partial \hat{w}_{\tau}} \ab(\eta\ab(\kappa_{\psi}\hat{w}^{\ast}-(\sigma_{\psi}^{2}+\lambda) \psi_{T}(\hat{w})) \mu_{\tau}) \\
    + \frac{\eta^{2}}{2} \sigma_{\psi}^{2}\varepsilon_{g}(\tau) \frac{\partial^{2}\mu_{\tau}}{\partial \hat{w}^{2}}.
\end{multline}
We refer readers to \cite{wang2017scaling, wang2019solvable} for a general framework establishing the above scaling limit.

\subsection{Numerical Validation.}
We validate the predictions of Eq.~\eqref{eq:strong-pde} by considering the vector $\B{w}^{\ast}=\B{1}_{d}$ in dimension $d=3{,}000$.
The initial distribution is given by $\mu_{0}(\hat{w}| \hat{w}^{\ast}=1)=\mac{N}(0, 1)$.
In all simulations, both the inputs and weights are quantized in the zero-temperature limit ($T\to +0$) with $\omega=2$ and bit-width $b=2$.
Figure \ref{fig:STE-vs-PDE} shows the probability distribution $\mu_{\tau}(\hat{w}|\hat{w}^{\ast}=1)$.
The prediction of Eq.~\eqref{eq:strong-pde} agrees closely with the simulation results.
Furthermore, as $\tau$ increases, the distribution approaches $\hat{w}^{\ast}=1$.
\begin{figure*}
    \centering
    \includegraphics[width=\linewidth]{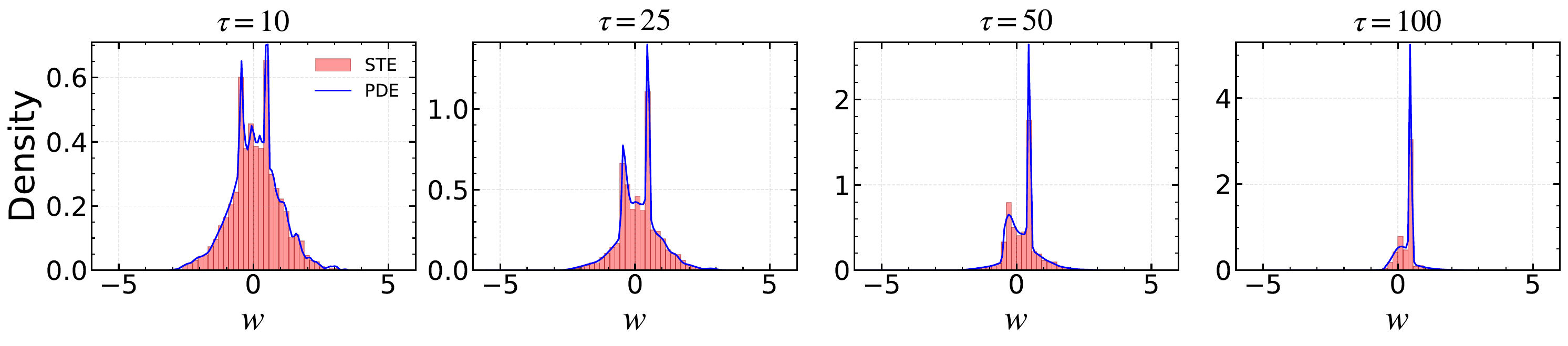}
    \caption{Comparison between STE simulations (red histograms) and the PDE prediction (blue curves) for the probability density $\mu_{\tau}(\hat{w}|w^{\ast}=1)$ at training horizons $\tau\in\{10,25,50,100\}$. Simulations use $d=3{,}000$, $\B{w}^{\ast}=\B{1}$, and $\mu_{0}$ the standard Gaussian. Both the inputs and weights are quantized with $\omega=2$ and bit-width $b=2$.}
    \label{fig:STE-vs-PDE}
\end{figure*}

\section{MACROSCOPIC DYNAMICS}
\label{sec:macroscopic}
As shown in Eq.~\eqref{eq:gen-closed}, the generalization error $\varepsilon_g^{t}$ is a function of $q_{\psi}^{t}$, $m_{\psi}^{t}$.
Our goal is therefore to characterize $\varepsilon_{g}^{t}$ by analyzing the time evolution of the macroscopic states.
To connect the microscopic and macroscopic descriptions, we begin with the PDE in Eq.~\eqref{eq:weak-pde}. 
By selecting test functions $\zeta(\hat{w}^{\ast},\hat{w})$ corresponding to $\psi(\hat{w})^{2}$ and $\hat{w}^{\ast}\psi(\hat{w})$, and substituting them into Eq.~\eqref{eq:weak-pde}, we evaluate evolution equations for $q_{\psi}(\tau)$ and $m_{\psi}(\tau)$.
However, the resulting equations involve additional nonlinear expectations, such as  $\zeta(\hat{w}^{\ast}, \hat{w}) =\psi_{T}(\hat{w})\psi^{\prime}_{T}(\hat{w}), \hat{w}^{\ast} \psi^{\prime}_{T}(\hat{w}),  \psi_{T}^{\prime} \psi^{\prime\prime}(\hat{w})$, among others. 
The detail explanation is provided in Supplement \ref{sec:proof-micro}. 
Although the PDE can in principle be solved numerically to evaluate the generalization error in the high-dimensional limit, this strategy offers limited analytical insight for subsequent investigations such as fixed-point and stability analysis.

\subsection{Isotropy in Orthogonal Directions.}
To obtain a closed and tractable system, we consider relating the macroscopic states $\Psi^{t}$ to the nonlinear macroscopic ones $m_{\psi}^{t}, q_{\psi}^{t}$.
To this end, we decompose 
$\B{w}^{t} = \B{w}_{\parallel}^{t} + \B{w}_{\perp}^{t}$, where $\B{w}_{\parallel}^{t}$ is parallel to the ground truth vector $\B{w}^{\ast}$ and $\B{w}_{\perp}^{t}$ lies in the orthogonal subspace.
Algebraically, we have $\|\B{w}_{\perp}^{t}\|^{2} = d (q_{t}- m_{t}^{2}\rho^{-2})$.

In the high-dimensional limit, SGD could not distinguished direction within the orthogonal subspace under isotropic inputs; except for the ground-truth direction, the dynamics are approximately rotationally symmetric. 
We therefore assume the orthogonal component as isotropic Gaussian in the $d\to\infty$ limit.
\begin{Asm}
\label{asm:isotropy}
For any $t \in [T]$,
\begin{equation}
    p(\B{w}_{\perp}^{t}) \underset{d\to \infty}{\implies} \mac{N}\ab(\B{0}_{d}, \ab(q_{t}-m_{t}^{2}\rho^{-2})I_{d}).
\end{equation}
\end{Asm}
Under Assumption \ref{asm:isotropy}, the  nonlinear macroscopic states can be expressed explicitly as functions of $\Psi^{t}$.
\begin{Prop}
Let $s^{t}=(q^{t}-\nicefrac{(m^{t})^{2}}{\rho})^{1/2}$.
Under Assumption \ref{asm:isotropy}, for any $t \in [T]$,
\begin{align}
    &m_{\psi}(m^{t}, s^{t}) = \mab{E}_{w^{\ast}}\ab[w^{\ast}\ab(- \omega + \Delta \sum_{i=1}^{L} \Phi\ab(\frac{\frac{m^{t} w^{\ast}}{\rho}-\theta_{i}}{s^{t}}))], \\
    &q_{\psi}(m^{t}, s^{t}) = \mab{E}_{w^{\ast}}\ab[v_{0}^{2} + \sum_{i=1}^{L} \ab(v_{i}^{2}-v_{i-1}^{2}) \Phi \ab(\frac{\frac{m^{t} w^{\ast}}{\rho}-\theta_{i}}{s^{t}})], \\
    &r_{\psi}(m^{t}, s^{t}) = \frac{m^{t}m_{\psi}}{\rho} + \Delta s^{t} \sum_{i=1}^{L} \mab{E}_{w^{\ast}}\ab[\phi\ab(\frac{\frac{m^{t}w^{\ast}}{\rho}-\theta_{i}}{s^{t}})], 
\end{align}
\end{Prop}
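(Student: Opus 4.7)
The plan is to exploit the parallel/perpendicular decomposition of $\B{w}^{t}$ together with Assumption \ref{asm:isotropy} to reduce each empirical coordinate average to a two-dimensional Gaussian integral, and then evaluate that integral using the piecewise-constant structure of $\psi$.

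I would first write $\B{w}^{t} = (m^{t}/\rho)\B{w}^{\ast} + \B{w}_{\perp}^{t}$, note that $\|(m^{t}/\rho)\B{w}^{\ast}\|^{2}/d = (m^{t})^{2}/\rho$, and consequently $\|\B{w}_{\perp}^{t}\|^{2}/d = q^{t} - (m^{t})^{2}/\rho = (s^{t})^{2}$. Assumption \ref{asm:isotropy} then allows me, in the $d \to \infty$ limit, to replace $\B{w}_{\perp}^{t}$ by an isotropic Gaussian of variance $(s^{t})^{2}$ per coordinate, so that coordinate-wise $w_{i}^{t} = (m^{t}/\rho) w_{i}^{\ast} + z_{i}$ with $z_{i} \sim \mac{N}(0,(s^{t})^{2})$ independent of $w_{i}^{\ast}$. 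Because each of $m_{\psi}^{t}$, $q_{\psi}^{t}$, $r_{\psi}^{t}$ is an empirical average of a function of the pairs $(w_{i}^{\ast}, z_{i})$, I would argue that they are self-averaging and converge to expectations under $(w^{\ast}, z) \sim p_{\B{w}^{\ast}} \otimes \mac{N}(0,(s^{t})^{2})$.

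Next, I would close these expectations using the staircase representation $\psi(x) = -\omega + \Delta \sum_{k=1}^{L} \Theta(x - \theta_{k})$. With $\mu := m^{t} w^{\ast}/\rho$, the elementary identity $\mab{E}_{z}[\Theta(\mu + z - \theta_{k})] = \Phi((\mu - \theta_{k})/s^{t})$ gives $m_{\psi}$ directly after multiplying by $w^{\ast}$ and averaging over $w^{\ast}$. For $q_{\psi}$, the telescoping identity $\psi(x)^{2} = v_{0}^{2} + \sum_{k=1}^{L}(v_{k}^{2} - v_{k-1}^{2})\Theta(x - \theta_{k})$, valid because $\psi$ takes the constant value $v_{k}$ on $[\theta_{k}, \theta_{k+1})$, reduces the calculation to the same $\Phi$-integral. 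For $r_{\psi}$, I would split $\psi(w_{i}^{t}) w_{i}^{t} = (m^{t}/\rho)\psi(w_{i}^{t}) w_{i}^{\ast} + \psi(w_{i}^{t}) z_{i}$; the first piece integrates to $m^{t} m_{\psi}/\rho$, and the second reduces via the truncated first-moment identity $\mab{E}[z\, \B{1}\{z \geq a\}] = s\,\phi(a/s)$ for $z \sim \mac{N}(0, s^{2})$ to $\Delta s^{t} \sum_{k} \mab{E}_{w^{\ast}}[\phi((\mu - \theta_{k})/s^{t})]$, exactly as stated.

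The main obstacle will be conceptual rather than computational. Assumption \ref{asm:isotropy} posits only a weak limit for $\B{w}_{\perp}^{t}$, which does not by itself imply that the coordinates of $\B{w}^{t}$ are asymptotically i.i.d.\ Gaussian; what the proof really needs is that the empirical joint measure of $(w_{i}^{\ast}, w_{i}^{t})$ concentrates on the claimed product law. Since the three quantities of interest are linear functionals of that empirical measure tested against bounded or at most linearly growing functions, weak convergence combined with a uniform second-moment bound should suffice to pass to the limit; boundedness of $\psi$ handles $m_{\psi}$ and $q_{\psi}$ immediately, while $r_{\psi}$ requires one additional moment of $z$, which follows from the Gaussian tail and the uniform fourth-moment control of $w^{\ast}$ supplied by Assumption \ref{asm:learning-dynamics}(A.3).
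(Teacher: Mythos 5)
Your proposal is correct, and the overall skeleton — the decomposition $w_i^t = (m^t/\rho)w_i^\ast + s^t\xi_i$, the identification $\|\B{w}_\perp^t\|^2/d = (s^t)^2$, and the reduction of the three order parameters to expectations over the product law of $(w^\ast,\xi)$ via a law of large numbers — is exactly the paper's. Where you diverge is in the computation itself: the paper keeps the relaxed quantizer $\psi_T$, evaluates the Gaussian expectations with the convolution identities of Lemmas \ref{lem:mean-smoothing}, \ref{lem:first-mixed-moment} and \ref{lem:two-threshold-smoothing} (the last producing a bivariate normal CDF $\Phi_2$ with correlation $s^2/(s^2+T^2)$ for the square term), and only then sends $T\to 0^+$; you instead work directly with the hard staircase $\psi = -\omega + \Delta\sum_k\Theta(\cdot-\theta_k)$ and, crucially, the telescoping identity $\psi(x)^2 = v_0^2 + \sum_k(v_k^2-v_{k-1}^2)\Theta(x-\theta_k)$, which turns $q_\psi$ into the same single-sum $\Phi$-integral as $m_\psi$ and bypasses the double sum and $\Phi_2$ limit entirely. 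Your $r_\psi$ computation via Stein-type truncated first moments matches the paper's Lemma \ref{lem:first-mixed-moment} in the $T\to 0$ limit. Your route is cleaner for $q_\psi$; the paper's $\psi_T$ route buys one small technical convenience you should be aware of: since $\psi$ is discontinuous at the thresholds, passing weak convergence of the empirical measure through the test functions $u\psi(v)$, $\psi(v)^2$, $v\psi(v)$ requires noting that the limiting law of $w^t$ given $w^\ast$ is absolutely continuous (so the discontinuity set $\{\theta_k\}$ is null), whereas $\psi_T$ is smooth and the issue never arises. Your closing paragraph correctly identifies the real gap (Assumption \ref{asm:isotropy} gives only a weak limit of $\B{w}_\perp^t$, not asymptotic coordinate-wise independence), a gap the paper's proof also elides by simply asserting the LLN step.
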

Once the dynamics of $\Psi^{t}$ are evaluated, the dynamics of $m_{\psi}$, $q_{\psi}$ can be analyzed.
The details are provided in Supplement \ref{subsec:proof-isotropy}.

\subsection{Concentration to ODEs.} 
Combining Assumption \ref{asm:learning-dynamics} and \ref{asm:isotropy} yields the following result.
\begin{Thm}
\label{theorem:concentration-to-ode}
For all $T >0$, under Assumptions \ref{asm:learning-dynamics} and \ref{asm:isotropy}, we have
\begin{equation}
    \max_{0 \le t \le d \times T} \mab{E}\|\Psi^{t}-\Psi(\nicefrac{t}{d})\|_{2} \le Cd^{-\frac{1}{2}},
\end{equation}
where $C$ is a constant independent of $d$ and $\Psi(\tau)=(m(\tau), q(\tau))$ is a unique solution to
\begin{align}
    &\frac{\mathrm{d}m(\tau)}{\mathrm{d}\tau} = -\eta \ab((\sigma_{\psi}^{2}+\lambda) m_{\psi}(\tau)-\kappa_{\psi} \rho) \label{eq:m-dynamics}, \\
    &\frac{\mathrm{d}q(\tau)}{\mathrm{d}\tau} = -2\eta\ab((\sigma_{\psi}^{2}+\lambda)r_{\psi}(\tau)-\kappa_{\psi} m(\tau)) + \eta^{2} \sigma_{\psi}^{2} \varepsilon_{g}(\tau) \label{eq:q-dynamics}, \\
\end{align}
where $\varepsilon_{g}(\tau)= \sigma_{\psi}^{2} q_{\psi}(\tau) - 2 \kappa_{\psi} m_{\psi}(\tau) + \rho + \sigma^{2}$,
and initial condition $\Psi(0)=\bar{\Psi}$.
\end{Thm}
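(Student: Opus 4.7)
The plan is to establish concentration of the macroscopic state $\Psi^t = (m^t, q^t)$ to the ODE solution via a Doob decomposition combined with a discrete Gronwall argument, in the spirit of the high-dimensional SGD analyses in~\cite{wang2017scaling, wang2019solvable}, but adapted to handle nonlinear transformations of both weights and inputs.

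First, I would write $\Psi^{t+1}-\Psi^t = \B{h}_d(\B{w}^t) + \Delta \B{M}^t$, where $\B{h}_d(\B{w}^t) = \mathbb{E}[\Psi^{t+1}-\Psi^t \mid \mathcal{F}^t]$ is the one-step drift and $\Delta \B{M}^t$ a martingale increment. For the $m$-component, $m^{t+1}-m^t = -\tfrac{\eta}{d}(\B{w}^{\ast})^{\top}\B{g}^t$, so the drift reduces to computing $\mathbb{E}[\B{g}^t \mid \B{w}^t]$. Using the independence of the coordinates of $\B{x}^t$ and the Gaussian moment identities $\mathbb{E}[\psi(x_i)]=0$, $\mathbb{E}[\psi(x_i)\psi(x_j)] = \sigma_\psi^2 \delta_{ij}$, $\mathbb{E}[x_i\psi(x_j)]=\kappa_\psi \delta_{ij}$, a direct calculation gives $\mathbb{E}[\B{g}^t\mid \B{w}^t] = \tfrac{1}{d}[(\sigma_\psi^2+\lambda)\B{\psi}(\B{w}^t)-\kappa_\psi \B{w}^{\ast}]$, yielding the drift $-\tfrac{\eta}{d}[(\sigma_\psi^2+\lambda)m_\psi^t - \kappa_\psi \rho]$ which, after the time-rescaling $\tau=t/d$, matches Eq.~\eqref{eq:m-dynamics}. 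The $q$-component additionally requires the second-order term $\eta^2\|\B{g}^t\|^2/d$; its conditional expectation evaluates to $\tfrac{\eta^2}{d}\sigma_\psi^2 \varepsilon_g^t$ up to $O(d^{-2})$ corrections, producing the stochastic noise contribution in Eq.~\eqref{eq:q-dynamics}.

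To close the system, I would invoke Assumption~\ref{asm:isotropy} and the preceding proposition to express the nonlinear order parameters $m_\psi^t, q_\psi^t, r_\psi^t$ as deterministic Lipschitz functions of $\Psi^t$, thereby reducing the drift to a function of $\Psi^t$ alone up to lower-order residuals controlled by Assumption~\ref{asm:learning-dynamics}(A.3). For the concentration half, the per-step martingale variance satisfies $\mathbb{E}[\|\Delta \B{M}^t\|^2 \mid \mathcal{F}^t]=O(d^{-2})$, since each STE step perturbs $\Psi$ by $O(d^{-1})$ and the fourth-moment bound (A.3) controls fluctuations of $\|\B{g}^t\|^2$ around its mean. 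Accumulating over the $O(d)$ steps within horizon $dT$ yields an $L^2$ bound of order $d^{-1/2}$ for the martingale part, and a discrete Gronwall inequality (using Lipschitzness of the closure maps, which follows from smoothness of $\Phi$ in the relaxed setting and passes to the hard limit by mollification) then produces
\begin{equation*}
    \max_{0 \le t \le dT} \mathbb{E}\|\Psi^t-\Psi(t/d)\|_2 \le C d^{-1/2}.
\end{equation*}

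The main obstacle is the closure step. In the frameworks of~\cite{wang2017scaling, goldt2019dynamics} the model is linear in the parameters, so the macroscopic drift depends only on $(m^t,q^t)$ automatically through Gaussian input statistics. Here, because $\B{\psi}(\B{w}^t)$ is a nonlinear coordinate-wise function of $\B{w}^t$, the quantities $m_\psi^t$ and $q_\psi^t$ are \emph{not} functions of $(m^t,q^t)$ alone without further structure on the distribution of $\B{w}^t$. Propagating Assumption~\ref{asm:isotropy} through the STE update, i.e., verifying that the approximate Gaussian isotropy of $\B{w}^t_\perp$ is preserved to leading order despite the nonlinear mappings $\B{\psi}$ acting on both weights and inputs, is the key novel technical step beyond the linear framework and is where I expect the bulk of the proof effort in Supplement~\ref{sec:proof-micro}.
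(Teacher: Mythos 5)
Your plan matches the paper's proof at the level of structure: the paper also splits the one–step increment of $\Psi^{t}$ into a conditional drift and a martingale increment (Lemmas~\ref{lem:first-incre-macro} and~\ref{lem:second-incre-macro}), verifies that the drift equals $d^{-1}F(\Psi^{t})$ up to $O(d^{-3/2})$ and that the conditional second moment is $O(d^{-2})$, and then accumulates over $O(d)$ steps to reach the $d^{-1/2}$ rate. The only organizational difference is that where you invoke a ``discrete Gronwall inequality,'' the paper implements the same estimate via a coupling trick: it introduces an auxiliary process $\mathcal{B}^{t}$ driven by the exact drift $F$ but sharing the martingale increments of $\Psi^{t}$, and a deterministic Euler iterate $\mathcal{S}^{t}$, then bounds $\|\Psi^{t}-\mathcal{B}^{t}\|$ (drift mismatch, Gronwall, $O(d^{-1/2})$), $\|\mathcal{B}^{t}-\mathcal{S}^{t}\|$ (pure martingale accumulation, $O(d^{-1/2})$ in $L^{2}$), and $\|\mathcal{S}^{t}-\Psi(t/d)\|$ (standard Euler error, $O(d^{-1})$) separately. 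This decomposition makes explicit the separation of the martingale part from the Gronwall recursion that your proposal gestures at without spelling out, and it also avoids having the Gronwall constant multiply the martingale term. Both routes reach the same bound with the same ingredients.

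One substantive mismatch concerns your last paragraph. You anticipate that ``propagating Assumption~\ref{asm:isotropy} through the STE update, i.e., verifying that the approximate Gaussian isotropy of $\B{w}^{t}_{\perp}$ is preserved to leading order'' will be ``the key novel technical step'' absorbing ``the bulk of the proof effort.'' In fact the paper does not prove this propagation at all: Assumption~\ref{asm:isotropy} is postulated for \emph{every} $t\in[T]$, so the closure $m_{\psi}^{t}=m_{\psi}(m^{t},s^{t})$, $q_{\psi}^{t}=q_{\psi}(m^{t},s^{t})$, $r_{\psi}^{t}=r_{\psi}(m^{t},s^{t})$ is taken as given rather than derived. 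Your instinct that a fully rigorous treatment would need to establish this is sound --- indeed the paper does not address the finite-$d$ error in the isotropy approximation inside Lemma~\ref{lem:first-incre-macro}, where $m_{\psi}^{t}$ is treated as an exact function of $\Psi^{t}$ --- but you should not expect to find such a derivation in Supplement~\ref{sec:proof-micro}; the paper sidesteps it by assumption. A minor further note: the Lipschitzness you attribute to ``smoothness of $\Phi$ in the relaxed setting'' actually comes primarily from the Gaussian smoothing built into the isotropy closure (the maps $(m,s)\mapsto m_{\psi},q_{\psi},r_{\psi}$ are smooth for $s>0$ even with the hard quantizer), not from the temperature relaxation $\psi_{T}$, which plays its role in the microscopic SDE of Section~\ref{sec:microscopic-dynamics} rather than here.
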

The detailed derivation is given in Supplement \ref{sec:proof-macro}.
The result can be proved using standard convergence technic for stochastic processes, together with a coupling trick \cite{wang2018subspace, wang2019solvable, ichikawa2024learning}.
In brief, the one-step increment can be decomposed into a conditional drift term and a martingale increment as follows:
\begin{equation*}
    \Psi^{t+1}- \Psi^{t} = \mab{E}_{t} \Psi^{t+1} - \Psi^{t} + \left(\Psi^{t+1} - \mab{E}_{t} \Psi^{t+1} \right),
\end{equation*}
where $\mab{E}_{t}$ denotes conditional expectation given the current state of the Markov chain $\B{w}^{t}$. 
Thus, it suffices to verify, for all $t \le dT$, that 
\begin{align*}
    &\mab{E} \|\mab{E}_{t} \Psi^{t+1} - \Psi^{t} - F(\Psi^{t})/d \| \le C d^{-2/3}, \\
    &\mab{E} \|\Psi^{t+1}-\mab{E}_{t} \Psi^{t+1}\|^{2} \le C d^{-2},
\end{align*}
where $F$ denotes the right-hand sides of Eqs.~\eqref{eq:m-dynamics}- \eqref{eq:q-dynamics}.
The first bound ensures that the leading-order drift is captured by the ODEs in Theorem \ref{theorem:concentration-to-ode}, while  
the second bound guarantees that stochastic fluctuations vanish as $d \to \infty$.

\section{RESULTS}\label{sec:results}
This section evaluates the learning dynamics predicted by the ODEs. 
We also verify the theoretical prediction using STE simulation with finite dimension.
Additional ablation studies are provided in Supplement \ref{sec:additional-experiments}.

\subsection{Weight-Only Quantization}

\subsubsection{Dependence on Bit Width.}
We examine how the bit width $b$ influence the trajectory of the generalization error.
In the representative setting of $\eta=0.04$, $\lambda=1$, $\omega=1$, and $T=0$, $\B{w}^{\ast}=\B{1}_{d}$, we compare the ODE under the isotropy assumption with STE simulations at finite dimension of $d=900$. 
Figure~\ref{fig:bit-dependence} shows a close agreement across all bit widths. 
A characteristic two-stage behavior emerges: after an initial plateau, $\varepsilon_g$ drops abruptly on a bit-width-dependent timescale and then settles into a stationary regime, with its floor decreasing as $b$ increases.
A quantitative fixed-point analysis, compared to the unquantized model as a function of quantization hyperparameters $b,\omega$ is presented on Section \ref{subsec:fixed-point}.
We next fix $b$ and demonstrate how the quantization range $\omega$ influences the onset and depth of the transition.

\begin{figure}[tb]
    \centering
    \includegraphics[width=0.85\linewidth]{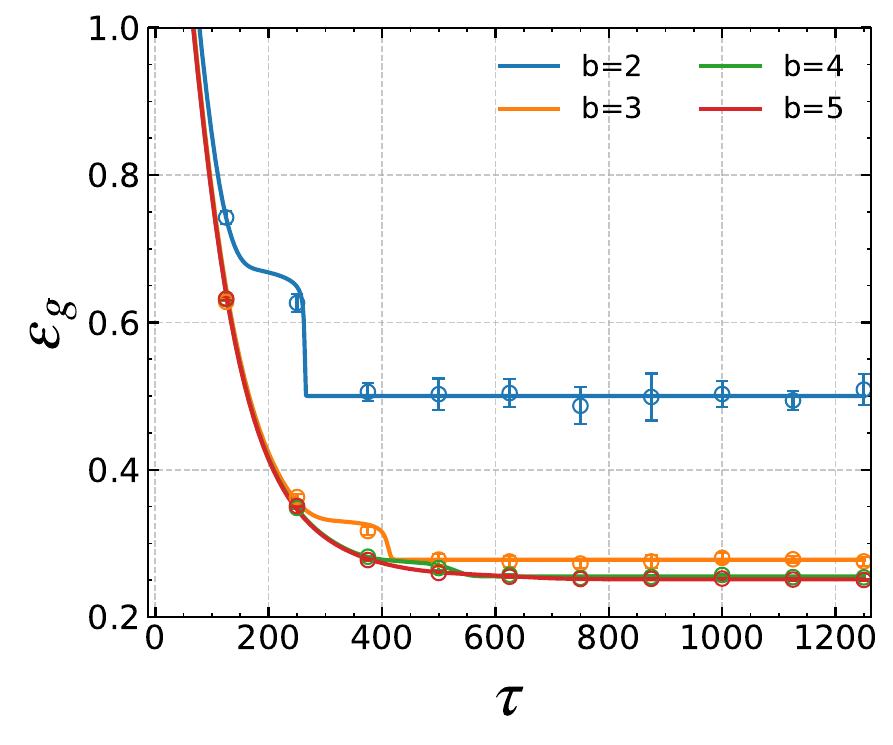}
    \caption{Generalization error $\varepsilon_g$ as a function of training time $\tau$ for bit widths $b\in\{2,3,4,5\}$, with $\eta=0.04$, $\lambda=1$, $\omega=1.0$, and $d=900$. Symbols with error bars indicate STE simulations averaged over five independent runs; solid curves indicate the ODE prediction.}
    \label{fig:bit-dependence}
\end{figure}

\subsubsection{Dependence on Quantization Range.}
Fixing $b=3$ as a representative case, we vary the quantization range $\omega$ while keeping $\eta=0.04$, $\lambda=1$, $T=0$, and $d=900$ fixed.
Figure~\ref{fig:omega-dependence} shows that both the convergence rate and the occurrence and timing of the abrupt drop depend sensitively on $\omega$: small ranges, e.g., $\omega=0.25$, slow convergence and raise the error floor, whereas moderate to large ranges accelerate the transition and yield lower generalization error.
These results suggests that careful calibration of $\omega$ matters not only for quantization fidelity but also for favorable STE dynamics, further motivating learning the range/scale parameter rather than fixing it, as proposed by \cite{cheng2024optimize}.

\begin{figure}[tb]
    \centering
    \includegraphics[width=0.9\linewidth]{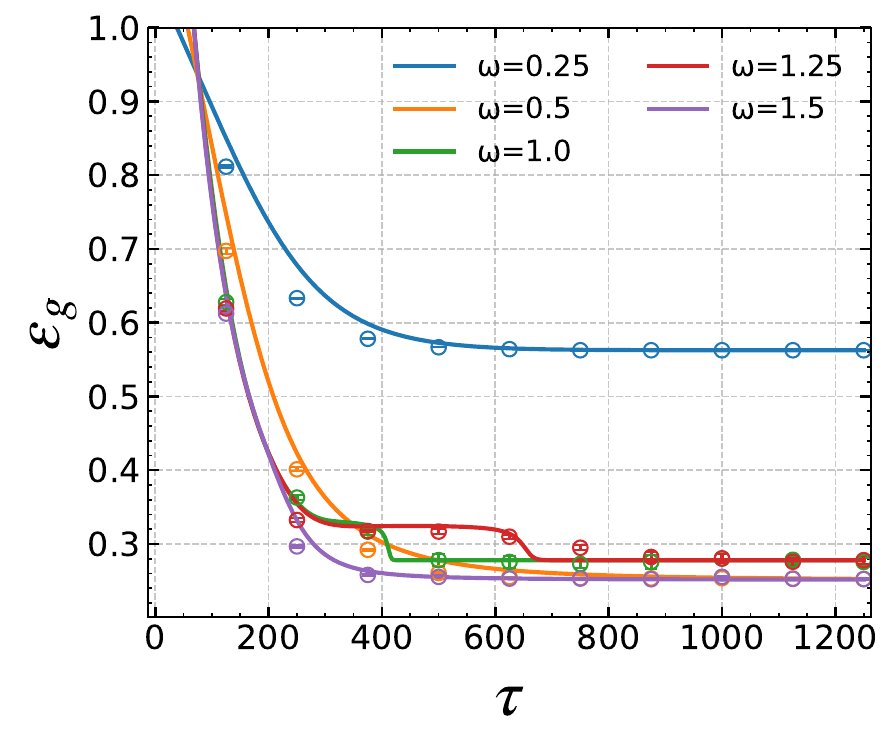}
    \caption{Generalization error $\varepsilon_{g}$ as a function of training time $\tau$ at fixed bit-width $b=3$, for quantization ranges $\omega \in \{0.25, 0.50, 1.00, 1.25, 1.50\}$. 
    Symbols with error bars denote STE simulations averaged over five runs; solid curves denote the ODE prediction.}
    \label{fig:omega-dependence}
\end{figure}

\subsection{Weight-Input Quantization}

\begin{figure}[tb]
  \centering
  \begin{subfigure}{0.85\linewidth}
    \centering
    \includegraphics[width=0.95\linewidth]{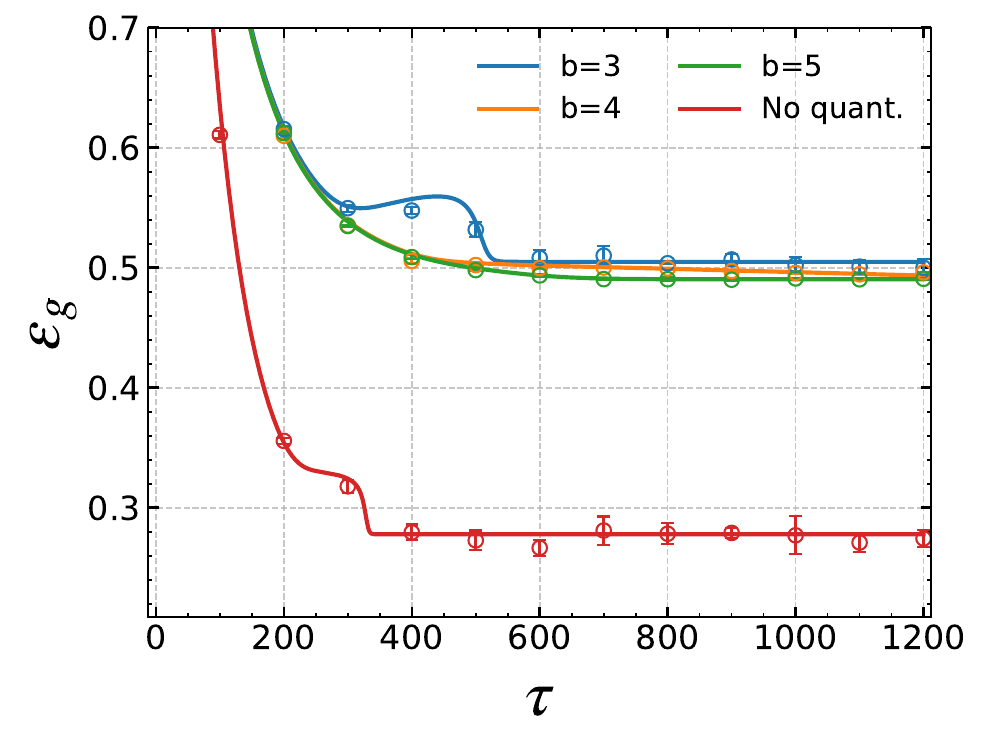}
    \label{fig:activation-comparison:bw3}
  \end{subfigure}
  \par\medskip
  \vspace{-10pt}
  \begin{subfigure}{0.85\linewidth}
    \centering
    \includegraphics[width=0.95\linewidth]{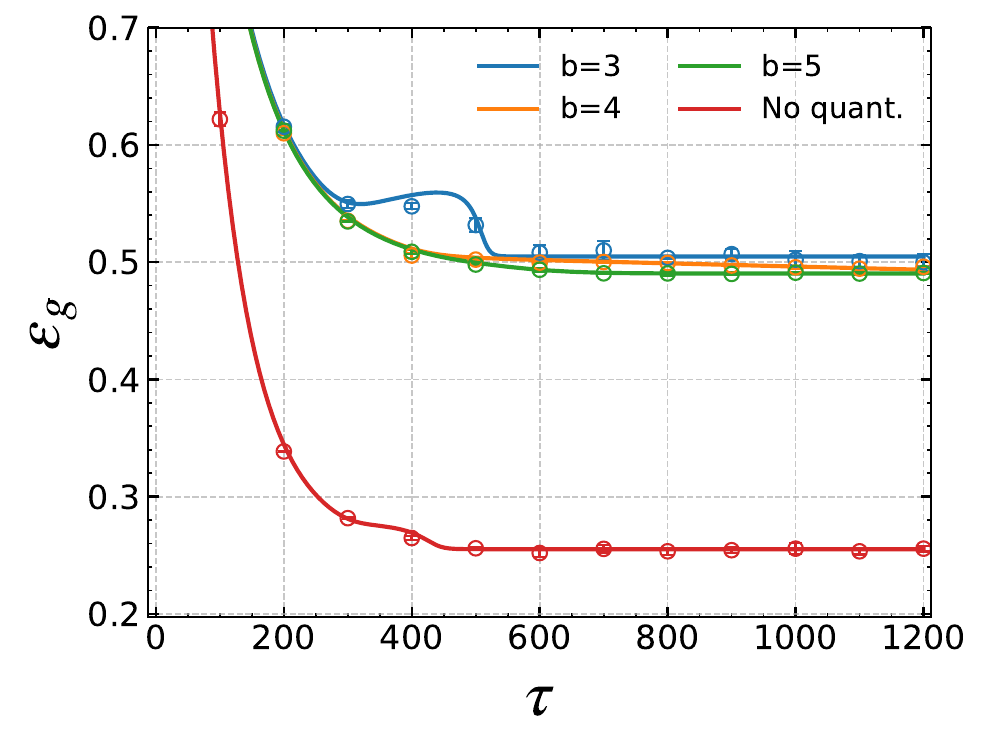}
    \label{fig:activation-comparison:bw4}
  \end{subfigure}
    \vspace{-10pt}
    \caption{Joint quantization of \emph{weights} and \emph{inputs} with the range $\omega=1.0$ and $b \in \{3, 4\}$.
    Training time $\tau$-dependence of generalization error $\varepsilon_{g}$ at $T = 0$, $d=500$, $\eta=0.05$, $\lambda=1.0$ for input bit widths
    $b_x \in \{3,4,5\}$, with an unquantized-input one (``No quant.'').
    Symbols with error bars denote STE simulations averaged over five runs; solid curves show the ODE prediction. Top panel: $b=3$; bottom panel: $b=4$.}
  \label{fig:activation-comparison}
\end{figure}

We next examine the effect of input quantization.
As a representative case, we use $\omega=1.0$, $b \in \{3, 4\}$, $T = 0$, $\eta=0.05$, and $\lambda=1$. 
We vary the input bit width $b_{x} \in \{3,4,5\}$ while fixing the input range to $\omega_{x}=1$.
Figure~\ref{fig:activation-comparison} shows that the ODE track the STE simulation with dimension $d=500$ closely. 
Input quantization markedly degrades performance relative to unquantized inputs: the error floor rises and convergence slows.
As expected, increasing $b_x$ improves performance; however, in this minimal setting the gains beyond moderate precision are incremental.
At $b_x=3$, we also observe non-monotone transients, delayed or oscillatory trajectories before settling.
While commodity hardware often uses same bit widths (e.g., $4$-bit weights $\times$ $4$-bit inputs), these results suggest that higher input precision yields faster convergence and a lower generalization error, whereas low-bit input may slow learning due to the non-monotone dynamics as shown in Figure~\ref{fig:activation-comparison}.
 
\subsection{Fixed–Point Analysis}
\label{subsec:fixed-point}
The learning curves reveal finite-time behavior and not characterize the long-time behavior $\tau \to \infty$: where the dynamics settle, which values the limiting states take, and how stability depends on quantization. 
To study this regime, we analyze the fixed points. 
A fixed point is \emph{locally asymptotically stable} if the Jacobian at the point has its spectrum strictly in the left half–plane, and \emph{marginally stable} if the spectrum lies in the closed left half–plane and includes zero.

\subsubsection{Input–Only Quantization.}
We first isolate the effect of input quantization while keeping the weights real–valued. 
In this setting, both the fixed point and its stability admit closed–form expressions.
\begin{Prop}
    If the learning rate satisfies
    \begin{equation}
        0 < \eta < \nicefrac{2(\sigma_{\psi}^{2}+\lambda)}{\sigma_{\psi}^{4}},
    \end{equation}
    then the macroscopic ODE has a locally asymptotically stable fixed point $(m^{\ast},q^{\ast})$ with generalization error
    \begin{equation}
        \varepsilon_{g}^{\ast}
        = \rho+\sigma^{2}+\sigma_{\psi}^{2} q^{\ast} - 2 \kappa_{\psi} m^{\ast},
    \end{equation}
    where $m^{\ast}=\nicefrac{\rho\kappa_{\psi}}{(\sigma_{\psi}^{2}+\lambda)}$ and 
    \begin{equation}
        q^{\ast}=
        \frac{2\kappa_{\psi}^{2}+\eta\,\sigma_{\psi}^{2}\big((\rho+\sigma^{2})(\sigma_{\psi}^{2}+\lambda)-2\kappa_{\psi}^{2}\big)}
        {(\sigma_{\psi}^{2}+\lambda)\big(2(\sigma_{\psi}^{2}+\lambda)-\eta \sigma_{\psi}^{4}\big)}.
    \end{equation}
\end{Prop}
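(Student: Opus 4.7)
The plan is to specialize the deterministic ODE of Theorem~\ref{theorem:concentration-to-ode} to the input-only quantization case. Here the weight-side quantizer is the identity, so $\B{\psi}(\B{w})=\B{w}$, and consequently the nonlinear macroscopic states collapse to the linear ones: $m_{\psi}=m$, $q_{\psi}=q$, and $r_{\psi}=\B{w}^{\top}\B{w}/d=q$. Substituting into Eqs.~\eqref{eq:m-dynamics}--\eqref{eq:q-dynamics}, the system reduces to the affine $2\times 2$ ODE
\begin{align*}
    \dot m &= -\eta\bigl((\sigma_{\psi}^{2}+\lambda)\,m - \kappa_{\psi}\rho\bigr),\\
    \dot q &= -2\eta\bigl((\sigma_{\psi}^{2}+\lambda)\,q - \kappa_{\psi} m\bigr) \\
           &\quad + \eta^{2}\sigma_{\psi}^{2}\bigl(\sigma_{\psi}^{2} q - 2\kappa_{\psi} m + \rho + \sigma^{2}\bigr),
\end{align*}
with $\sigma_{\psi}^{2}$ and $\kappa_{\psi}$ now entering only as constant coefficients.

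Next I would solve for the fixed point. Since the $m$-equation is decoupled, setting $\dot m = 0$ immediately yields $m^{\ast}=\rho\kappa_{\psi}/(\sigma_{\psi}^{2}+\lambda)$. Substituting into $\dot q = 0$ gives a single scalar linear equation for $q^{\ast}$ whose $q^{\ast}$-coefficient is $2(\sigma_{\psi}^{2}+\lambda)-\eta\sigma_{\psi}^{4}$; solving and collecting terms reproduces the stated closed form. The fixed-point value of the generalization error then follows by inserting $(m^{\ast},q^{\ast})$ into Eq.~\eqref{eq:gen-closed}.

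For local asymptotic stability, I would compute the Jacobian $J$ of the right-hand side at $(m^{\ast},q^{\ast})$. Because the system is affine, $J$ is constant and lower-triangular, with diagonal entries
\begin{equation*}
    J_{11}=-\eta(\sigma_{\psi}^{2}+\lambda),\qquad J_{22}=-2\eta(\sigma_{\psi}^{2}+\lambda)+\eta^{2}\sigma_{\psi}^{4},
\end{equation*}
the off-diagonal entry $J_{21}=2\eta\kappa_{\psi}(1-\eta\sigma_{\psi}^{2})$ being irrelevant to the spectrum. One has $J_{11}<0$ for any $\eta>0$, while $J_{22}<0$ iff $\eta<2(\sigma_{\psi}^{2}+\lambda)/\sigma_{\psi}^{4}$, which is exactly the stability window claimed.

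The argument is essentially a bookkeeping exercise on a two-dimensional affine system, so there is no genuine obstacle. The one step worth care is verifying the reduction itself: that the isotropy-based expressions for $m_{\psi}(m,s)$, $q_{\psi}(m,s)$, and $r_{\psi}(m,s)$ from Section~\ref{sec:macroscopic} indeed collapse to $m$, $q$, and $q$ respectively when the weight quantizer is the identity, so that the $r_{\psi}$-dependence in $\dot q$ reduces to a linear dependence on $q$. Once that is confirmed, (i) the fixed point is the solution of a $2\times 2$ linear system and (ii) stability reduces to reading off two diagonal entries of the constant Jacobian. As a byproduct of affine-ness, the local asymptotic stability in fact upgrades to global exponential stability over the same $\eta$-range.
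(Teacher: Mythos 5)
Your proposal is correct and takes essentially the same approach as the paper's proof: specialize the macroscopic ODEs by collapsing $m_\psi,q_\psi,r_\psi$ to $m,q,q$, solve the decoupled affine system for the fixed point, and read off the eigenvalues of the resulting lower-triangular Jacobian. The only additions beyond the paper's argument — the explicit note that the reduction should be checked via the isotropy formulas, and the remark that affine-ness upgrades local to global exponential stability — are correct and harmless but not needed for the stated claim.
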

Details are provided in Supplement \ref{subsec:proof-fixed-points-input-quantization}.
Setting $\kappa_{\psi}=\sigma_{\psi}^{2}=1$ reproduces the unquantized linear regression baseline.
Figure~\ref{fig:input-fp} focuses on the noiseless, unregularized case, i.e., $\lambda=0$, $\sigma^{2}=0$ and presents two complementary views as functions of the input quantization range $\omega_x$ for bit widths $b_{x} \in \{2, 3, 4, 10\}$.
The left panel shows the stability boundary $\nicefrac{2}{\sigma_{\psi}^{2}}$, above which no asymptotically stable fixed point exists.
Contrary to the naive expectation that lower precision is always less stable, the stability margin is not monotonic in bit width. 
In some ranges, the stable region is larger than in unquantized linear regression, implying that the quantization range acts as an implicit regularizer.

\begin{figure}[tb]
  \centering
  \includegraphics[width=\linewidth]{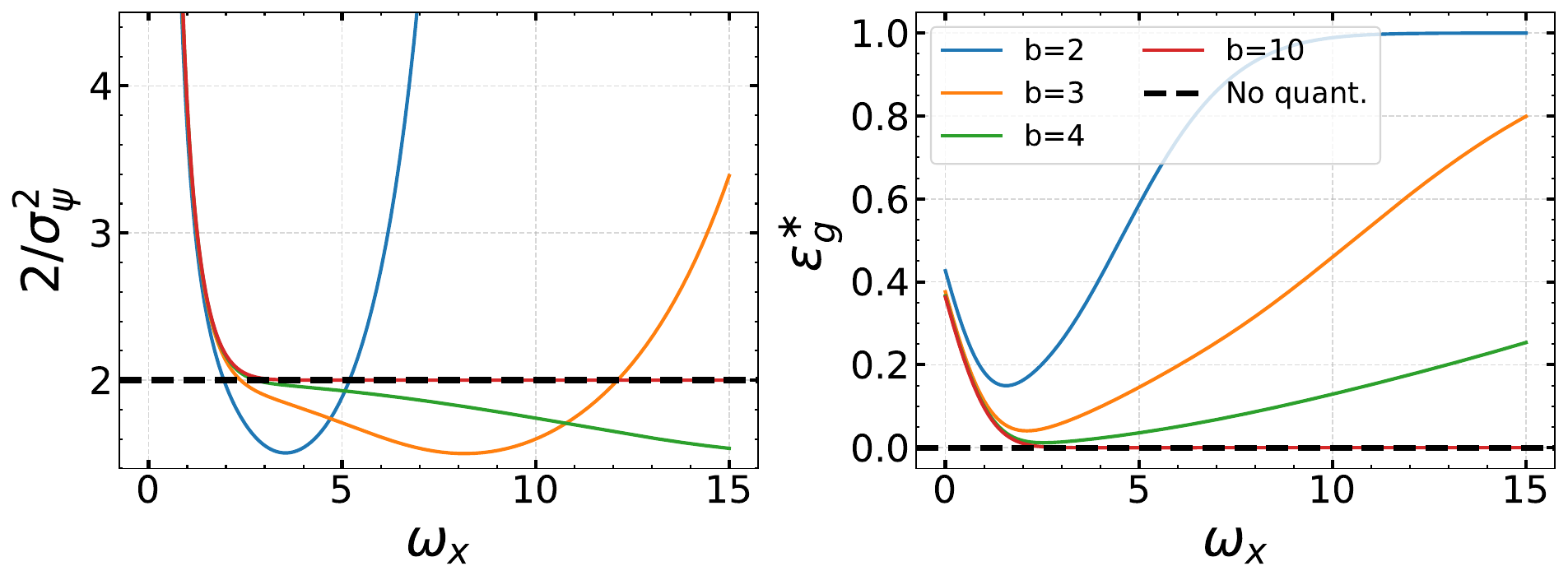}
  \caption{
  Long-time behavior under input-only quantization with $\lambda=0$ and $\sigma^{2}=0$. Left: Stability boundary at $\nicefrac{2}{\sigma_{\psi}^{2}}$; for $\eta>\nicefrac{2}{\sigma_{\psi}^{2}}$, no asymptotically stable fixed point exists. 
  The dashed line indicates the unquantized baseline. Right: Steady-state generalization error $\varepsilon_{g}^{\ast}$.
  }
  \label{fig:input-fp}
\end{figure}

\subsubsection{Joint Input–Weight Quantization.}
When both inputs and weights are quantized, the fixed-point equations are nonlinear:
\begin{align}
    m_{\psi}(m^{\ast},s^{\ast}) &= \frac{\kappa_{\psi}\rho}{\sigma_{\psi}^{2}+\lambda}, \label{eq:weight-input-quant-m}\\
    \frac{2(\sigma_{\psi}^{2}+\lambda)}{\sigma_{\psi}^{2}}
    s^{\ast} \sum_{k=1}^{L-1}\phi\left(\frac{m^{\ast}-\theta_{k}}{s^{\ast}}\right)
    &= \eta \varepsilon_{g}(m^{\ast},s^{\ast}). \label{eq:weight-input-quant-s}
\end{align}
Closed-form solutions are not available, hence we focus on the small-learning-rate limit.
From Eq.~\eqref{eq:weight-input-quant-m}, the solution $m^{\ast}$ is unique; details are provided in ~\ref{subsec:proof-fixed-points-input-weight-quantization}.
Based on this result, we introduce the following definition.
\begin{Def} 
    Let $c=\nicefrac{\kappa_{\psi}\rho}{(\sigma_{\psi}^{2}+\lambda)}$.
    Define the unique index $i^{\ast} \in \{0, \ldots, L-1\}$ by $v_{i^{\ast}} \le c \le v_{i^{\ast}+1}$, and define the fractional position 
    $p \coloneqq \nicefrac{(c-v_{i^{\ast}})}{\Delta} \in [0, 1]$.
\end{Def}
\begin{Thm}[Informal]
    In the small-learning-rate limit, 
    \begin{equation} 
        \varepsilon_{g}^{\ast}= 
        \begin{cases} \varepsilon_{g}^{(0)} + \sigma_{\psi}^{2} \Delta^{2} p(1-p) + o(\eta),~|c| < \omega, p \in (0, 1), \\ \varepsilon_{g}^{(0)} + o(\nicefrac{1}{\sqrt{\log(1/\eta)}}),~|c| < \omega, p \in \{0, 1\}, \\ \rho+\sigma^{2}-2\kappa_{\psi}\omega + \sigma_{\psi}^{2} \omega^{2} + o(\eta),~~|c| \ge \omega, 
        \end{cases} 
    \end{equation} 
    where $\varepsilon_{g}^{(0)} = \rho+\sigma^{2} - 2 \kappa_{\psi}c + \sigma_{\psi}^{2} \omega^{2}$
    denotes the generalization error of the input-only quantized model in the small-learning-rate limit.
\end{Thm}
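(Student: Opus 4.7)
The plan is to analyze the fixed-point system Eqs.~\eqref{eq:weight-input-quant-m}--\eqref{eq:weight-input-quant-s} in the limit $\eta\downarrow 0$ by reducing $m_\psi$ and $q_\psi$ to one-dimensional Gaussian integrals against $\psi$ and applying case-by-case asymptotics. A first easy observation is that Eq.~\eqref{eq:weight-input-quant-s} forces $s^\ast\downarrow 0$: its right-hand side is $O(\eta)$ while its left-hand side is $s^\ast$ times a bounded sum of Gaussian densities that does not vanish generically. Under Assumption~\ref{asm:isotropy}, a coordinate of $\B{w}^t$ conditional on $w^\ast$ is Gaussian with mean $m^\ast w^\ast/\rho$ and variance $s^{\ast 2}$, so the entire analysis reduces to how this narrow Gaussian is cut by the quantizer's thresholds. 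The proof splits according to the location of $c=\kappa_\psi\rho/(\sigma_\psi^2+\lambda)$ relative to the grid.

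For the interior case $|c|<\omega$, $p\in(0,1)$: since $\psi$ is piecewise constant on cells of width $\Delta$ and $s^\ast$ is small, the only way to satisfy $m_\psi(m^\ast,s^\ast)=c\in(v_{i^\ast},v_{i^\ast+1})$ is for $m^\ast/\rho$ to sit within $O(s^\ast)$ of the single threshold $\theta_{i^\ast+1}$. Linearizing in $s^\ast$ yields $m^\ast/\rho=\theta_{i^\ast+1}+s^\ast\Phi^{-1}(p)+o(s^\ast)$, and the conditional distribution of $\hat w$ concentrates on two adjacent cells with weights $1-p$ and $p$. Hence $q_\psi\to (1-p)v_{i^\ast}^2+p v_{i^\ast+1}^2$ up to exponentially small corrections, and the algebraic identity $(1-p)v_{i^\ast}^2+p v_{i^\ast+1}^2=(v_{i^\ast}+p\Delta)^2+p(1-p)\Delta^2=c^2+p(1-p)\Delta^2$ produces the $\sigma_\psi^2 p(1-p)\Delta^2$ correction. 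The threshold sum in Eq.~\eqref{eq:weight-input-quant-s} reduces to the single $O(1)$ term $\phi(\Phi^{-1}(p))$, forcing $s^\ast$ to be of order $\eta$, so the remainder in $\varepsilon_g^\ast$ is $o(\eta)$.

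The boundary case $p\in\{0,1\}$ is the main obstacle and requires a separate asymptotic. Here $c$ already coincides with a grid value $v_{i^\ast}$, so the two-cell pinning of Case 1 breaks down: instead $m_\psi=v_{i^\ast}$ forces $m^\ast/\rho$ to the \emph{midpoint} of the cell, where symmetric tails into the two neighboring cells cancel at leading order. Both neighboring thresholds are then at distance $\Delta/2$ from $m^\ast/\rho$, so the threshold sum in Eq.~\eqref{eq:weight-input-quant-s} scales like $\phi(\Delta/(2s^\ast))$; balancing $s^\ast e^{-\Delta^2/(8s^{\ast 2})}\sim\eta$ yields the logarithmic scaling $s^{\ast 2}\sim\Delta^2/(8\log(1/\eta))$, qualitatively slower than the $\eta$-rate of Case 1. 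A Mills-ratio estimate on the symmetric tail probability then bounds $q_\psi-v_{i^\ast}^2$ by $\eta/\sqrt{\log(1/\eta)}$, giving the $o(1/\sqrt{\log(1/\eta)})$ correction in $\varepsilon_g^\ast$.

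Finally, in the saturated case $|c|\ge\omega$, Eq.~\eqref{eq:weight-input-quant-m} admits no finite solution since $|m_\psi|\le\omega\le|c|$. The macroscopic drift pushes $|m|$ outward without bound, $\psi(\hat w)$ saturates at $\mathrm{sign}(c)\,\omega$ almost surely, so $m_\psi\to\mathrm{sign}(c)\,\omega$ and $q_\psi\to\omega^2$; substituting into the generalization-error formula recovers $\varepsilon_g^\ast=\rho+\sigma^2-2\kappa_\psi\omega+\sigma_\psi^2\omega^2+o(\eta)$. Combining the three regimes completes the proof; the delicate point throughout is the logarithmic scale of $s^\ast$ in the boundary case, since standard expansions in powers of $s^\ast$ break down and one must estimate Gaussian tails directly rather than through a Taylor series.
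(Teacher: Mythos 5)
Your proposal follows essentially the same route as the paper's proof: reduce the two fixed-point equations to a one-variable equation in $s$ via the monotone inverse $m=\mathsf{m}(s)$, localize $m/\rho$ near the threshold $\theta_{i^{\ast}+1}$ (interior case) or cell midpoint (boundary case), compute the two-cell weights $(1-p,p)$ to obtain $q_\psi = c^2 + \Delta^2 p(1-p) + \text{exp. small}$, and balance the diffusion term against the threshold sum to get $s=\Theta(\eta)$ when $p\in(0,1)$ and $s=\Theta(1/\sqrt{\log(1/\eta)})$ when $p\in\{0,1\}$ via Mills-ratio tail bounds. Your algebraic identity $(1-p)v_{i^\ast}^2+p\,v_{i^\ast+1}^2=c^2+p(1-p)\Delta^2$ and the three-regime case split are exactly Lemmas \ref{lem:m-dominant}--\ref{lem:q-dominant} and Proposition \ref{eq:order-s} in the supplement, so there is nothing to add.
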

\begin{figure}
    \centering
    \includegraphics[width=\linewidth]{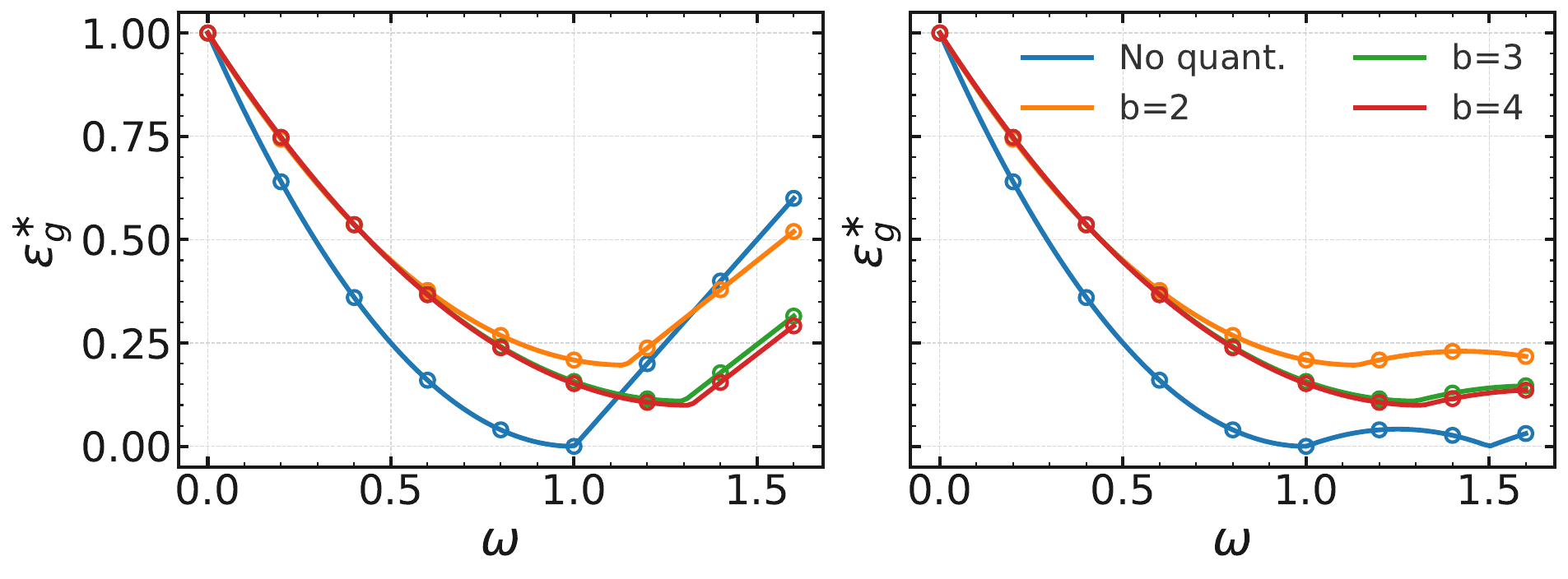}
    \caption{Dependence of the generalization error $\varepsilon_{g}^{\ast}$ on $\omega$ for input quantizers with $b_{x}\in \{2,3,4\}$ and $\omega_{x}=1$, including the unquantized baseline.
    Left: weight quantization $b=2$; Right: $b=3$. 
    All curves use identical settings across panels: regularization $\lambda=0$, noise variance $\sigma^{2}=0$, STE simulation with $d=100$ and learning rate $\eta=10^{-4}$, run up to $8 \times 10^{6}$.}
    \label{fig:weight-input-fixedpoint}
\end{figure}
The detailed derivation is provided in Supplement \ref{subsec:proof-fixed-points-input-weight-quantization}.
Figure~\ref{fig:weight-input-fixedpoint} shows the dependence of the generalization error $\varepsilon_{g}^{\ast}$ on $\omega$, with the input quantizer fixed at $\omega_{x}=1$ and $b_{x} \in \{2, 3, 4\}$ including the unquantized case, while the weight quantizer uses $b \in \{2, 3\}$. 
The results include STE simulations with $d=100$, a learning rate of $\eta=10^{-4}$, and a training duration of up to $\tau=8 \times 10^{6}$.
These results indicate that the fixed point is also non-monotonic in $\omega$, and that the optimal value depends on both the weight and input quantizers.
The leading correction is $\mac{O}_{\Delta}(\Delta^{2})$ and varies non-monotonically with $p$, thereby quantifying how weight quantization perturbs the input-quantized fixed point.

\section{CONCLUSION}
\label{sec:conclusion}
We develop a high-dimensional framework for jointly quantizing \emph{weights} and \emph{inputs}, trained with STE.
Our analysis shows that STE dynamics depend on key hyperparameters, \emph{bit width} and \emph{quantization range}.
We identify several phenomena: (i) a typical two-phase trajectory, an initial plateau followed by a sharp drop in generalization error, and (ii) convergence delays caused by non-monotonic transients induced under low-bit input quantization.
Quantization can expand the stable learning region, allowing higher learning rate and acting as an implicit regularizer rather than merely as a noise.
Methodologically, we extend conventional high-dimensional learning-dynamics analyses to settings with nonlinear transformations to both parameters and inputs. 
This method is not limited to quantizers; it also applies to a broader class of nonlinear transformations, such as weight normalization.
We aim to scale the framework to multilayer architectures and structured data to show how architectural redundancy and depth interact with STE dynamics.

\section*{ACKNOWLEDGMENT}

This work was partially supported by JSPS KAKENHI (22H05117) and JST PRESTO (JPMJPR23J4).

\bibliographystyle{IEEEtran}
\bibliography{ref}


\onecolumn
\appendices

\section{USEFUL LEMMAS}\label{sec:mathematical-tools}
We collect Gaussian convolution identities used repeatedly below. 
Throughout, let $X \sim \mac{N}(m, s^{2}),~s>0$, let $Z \sim \mac{N}(0, 1)$, and let $a \in \mab{R}$, and $T>0$. All standard normal variables introduced below are assumed independent of $X$.

\begin{Lem}
    \label{lem:mean-smoothing}
    \begin{equation}
        \mab{E}\ab[\Phi\ab(\frac{X-a}{T})] = \Phi\ab(\frac{m-a}{\sqrt{s^{2}+T^{2}}}).
    \end{equation}
\end{Lem}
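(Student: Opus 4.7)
The plan is to recognize $\Phi((X-a)/T)$ as a conditional probability and then exchange the order of expectation. Concretely, I would introduce an auxiliary standard normal variable $Z \sim \mathcal{N}(0,1)$ that is independent of $X$, and use the identity
\begin{equation}
    \Phi\!\left(\frac{X-a}{T}\right) = \Pr\!\left(TZ \le X - a \,\middle|\, X\right).
\end{equation}
Taking expectation over $X$ and applying the tower property then gives $\mathbb{E}[\Phi((X-a)/T)] = \Pr(X - TZ \ge a)$.

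The next step is to compute this probability explicitly. Since $X \sim \mathcal{N}(m, s^{2})$ and $Z \sim \mathcal{N}(0,1)$ are independent, the linear combination $X - TZ$ is Gaussian with mean $m$ and variance $s^{2} + T^{2}$. Standardizing yields
\begin{equation}
    \Pr(X - TZ \ge a) = 1 - \Phi\!\left(\frac{a-m}{\sqrt{s^{2}+T^{2}}}\right) = \Phi\!\left(\frac{m-a}{\sqrt{s^{2}+T^{2}}}\right),
\end{equation}
where the last equality uses the symmetry $1-\Phi(t) = \Phi(-t)$. Combining the two displays gives the claim.

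There is really no obstacle here: the identity is a one-line consequence of the fact that the convolution of two centered Gaussians adds variances. A fully equivalent alternative, in case one prefers a direct route, is to write $\mathbb{E}[\Phi((X-a)/T)] = \int \Phi((x-a)/T)\,\varphi_{m,s^{2}}(x)\,\mathrm{d}x$ with $\varphi_{m,s^{2}}$ the density of $\mathcal{N}(m,s^{2})$, interchange the $\Phi$ integral representation with the outer integral by Fubini, and complete the square in the resulting double Gaussian integral to recover $\Phi((m-a)/\sqrt{s^{2}+T^{2}})$. The probabilistic argument above is cleaner and avoids any explicit Gaussian integration.
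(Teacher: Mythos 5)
Your proposal is correct and follows essentially the same route as the paper: both introduce an auxiliary independent standard normal $Z$, rewrite the expectation as $\mathbb{P}(X - TZ \ge a)$, and evaluate it using the fact that $X - TZ \sim \mathcal{N}(m, s^{2}+T^{2})$. The additional remark about the direct Fubini/complete-the-square computation is a fine alternative but not needed.
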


\begin{proof}
    Since $\Phi(u) = \mab{P}[Z \le u]$, 
    \begin{equation}
        \mab{E}\ab[\Phi\ab(\frac{X-a}{T})] = \mab{P}\ab(Z \le \frac{X-a}{T}) = \mab{P}\ab[X-TZ \ge a].
    \end{equation}
    The random variable $X-TZ$ is Gaussian with mean $m$ and variance $s^{2} + T^{2}$, so the right-hand side equals $\Phi(\nicefrac{(m-a)}{\sqrt{s^{2}+T^{2}}})$.
\end{proof}

\begin{Lem}
    \label{lem:gaussian-gaussian-convolution}
    \begin{equation}
        \mab{E}\ab[\phi\ab(\frac{X-a}{T})] = \frac{T}{\sqrt{s^{2}+T^{2}}}
        \phi\ab(
        \frac{m-a}{
        \sqrt{s^{2}+T^{2}}}).
    \end{equation}
\end{Lem}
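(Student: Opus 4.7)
The plan is to recognize $\mathbb{E}[\phi((X-a)/T)]$ as (a rescaling of) the density of a sum of two independent Gaussians evaluated at the point $a$, so that the identity is essentially the restatement of $\mathcal{N}(m,s^{2})\ast\mathcal{N}(0,T^{2})=\mathcal{N}(m,s^{2}+T^{2})$. Concretely, I would introduce an auxiliary standard normal $Z_{0}\sim\mathcal{N}(0,1)$ independent of $X$ and set $Y:=X+TZ_{0}$, so that $Y\sim\mathcal{N}(m,s^{2}+T^{2})$. Its density at $a$ then reads $f_{Y}(a)=\frac{1}{\sqrt{s^{2}+T^{2}}}\phi\!\left(\frac{m-a}{\sqrt{s^{2}+T^{2}}}\right)$, which already matches the right-hand side of the lemma up to the factor $T$.

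For the other side of the equality, I would compute $f_{Y}(a)$ by conditioning on $X$: since $TZ_{0}$ has density $u\mapsto T^{-1}\phi(u/T)$ and $\phi$ is even, $f_{Y}(a)=\mathbb{E}[T^{-1}\phi((a-X)/T)]=T^{-1}\mathbb{E}[\phi((X-a)/T)]$. Equating the two expressions for $f_{Y}(a)$ and multiplying by $T$ delivers the claim. As a cross-check (and as an alternative, purely computational route that mirrors Lemma~\ref{lem:mean-smoothing}), I would evaluate the integral $\int_{\mathbb{R}}\phi(z)\,\phi((m+sz-a)/T)\,dz$ directly by completing the square in $z$: the quadratic coefficient combines to $(s^{2}+T^{2})/T^{2}$, the linear term is absorbed into a shift, and the leftover constant simplifies to $(m-a)^{2}/(s^{2}+T^{2})$; integrating the residual Gaussian in $z$ produces the prefactor $T/\sqrt{s^{2}+T^{2}}$ and reproduces the stated formula.

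There is no genuine obstacle here; the content of the lemma is just the convolution of two Gaussian densities. The only bookkeeping to watch is the factor of $T$, that is, distinguishing the \emph{density} $T^{-1}\phi(\cdot/T)$ of $TZ_{0}$ from the \emph{integrand} $\phi(\cdot/T)$ that appears in the statement, and the use of the evenness $\phi(-u)=\phi(u)$ to align $\phi((a-X)/T)$ with $\phi((X-a)/T)$ when computing the convolution.
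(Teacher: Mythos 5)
Your proposal is correct, but your primary route differs from the paper's. The paper proves Lemma~\ref{lem:gaussian-gaussian-convolution} by a direct integral computation: it writes out $\int \frac{1}{\sqrt{2\pi}s} e^{-(x-m)^2/(2s^2)} \cdot \frac{1}{\sqrt{2\pi}} e^{-(x-a)^2/(2T^2)}\,dx$, completes the square in $x$, pulls out the factor $e^{-(m-a)^2/(2(s^2+T^2))}$, and evaluates the residual Gaussian integral to obtain the prefactor $T/\sqrt{s^2+T^2}$. Your main argument instead introduces an auxiliary independent $Z_0 \sim \mathcal{N}(0,1)$, forms $Y = X + TZ_0 \sim \mathcal{N}(m, s^2+T^2)$, and equates two expressions for the density $f_Y(a)$ — one from the known Gaussian form, one from conditioning on $X$, noting $f_Y(a) = T^{-1}\,\mathbb{E}[\phi((a-X)/T)] = T^{-1}\,\mathbb{E}[\phi((X-a)/T)]$ by evenness. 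This is a genuinely different route: it sidesteps completion of squares by packaging the convolution identity into a distributional fact, and it has the aesthetic advantage of mirroring the paper's own technique in Lemma~\ref{lem:mean-smoothing}, where an auxiliary $Z$ is introduced to rewrite $\Phi$ as a probability. The integral route you offer as a cross-check is essentially the paper's proof (and you have the coefficient $(s^2+T^2)/T^2$ right, which is a useful sanity check since the paper's intermediate display actually mislabels the residual variance — the final line, however, is correct). Both approaches are sound; yours is arguably cleaner and more uniform with the neighboring lemma, while the paper's is more self-contained in that it requires no auxiliary random variable.
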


\begin{proof}
    By definition, 
    \begin{align}
        \mab{E}\ab[\phi\ab(\frac{X-a}{T})] &= \int_{-\infty}^{\infty} \frac{1}{\sqrt{2 \pi} s} e^{-\frac{(x-m)^{2}}{2s^{2}}} \frac{1}{\sqrt{2\pi}} e^{-\frac{(x-a)^{2}}{2T^{2}}} \\
        &= \frac{1}{2\pi s} e^{-\frac{(m-a)^{2}}{2(s^{2}+T^{2})}} \int_{-\infty}^{\infty} e^{-\frac{(x-\mu)^{2}}{2(s^{2}+T^{2})}} dx \\
        &= \frac{T}{\sqrt{s^{2}+T^{2}}} \phi\ab(\frac{m-a}{\sqrt{s^{2}+T^{2}}}).
    \end{align}
\end{proof}

\begin{Lem}
    \label{lem:two-threshold-smoothing}
    \begin{equation}
        \mab{E}\ab[\Phi\ab(\frac{X-a}{T}) \Phi\ab(\frac{X-b}{T})] = \Phi_{2}\ab(\frac{m-a}{\sqrt{s^{2}+T^{2}}}, \frac{m-b}{\sqrt{s^{2}+T^{2}}}; \frac{s^{2}}{s^{2}+T^{2}}),
    \end{equation}
    where $\Phi_{2}(\cdot, \cdot; \rho)$ denotes the standard bivariate normal CDF with correlation $\rho$.
\end{Lem}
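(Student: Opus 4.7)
The plan is to lift the one-dimensional smoothing identity of Lemma~\ref{lem:mean-smoothing} to two factors by introducing two auxiliary standard normals. Concretely, I will take independent $Z_1,Z_2\sim\mathcal{N}(0,1)$ that are also independent of $X$. Writing $\Phi(u)=\mathbb{P}(Z\le u)$ and using independence, I will represent
\begin{equation}
\Phi\!\left(\frac{X-a}{T}\right)\Phi\!\left(\frac{X-b}{T}\right)
= \mathbb{P}\!\left(Z_1 \le \frac{X-a}{T},\; Z_2 \le \frac{X-b}{T}\;\Big|\;X\right).
\end{equation}
Taking expectation over $X$ and applying the tower property then yields
\begin{equation}
\mathbb{E}\!\left[\Phi\!\left(\tfrac{X-a}{T}\right)\Phi\!\left(\tfrac{X-b}{T}\right)\right]
= \mathbb{P}\!\left(X - TZ_1 \ge a,\; X - TZ_2 \ge b\right),
\end{equation}
so the problem is reduced to computing a joint tail probability of two linear combinations of the three independent Gaussians $X,Z_1,Z_2$.

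Next I will identify the joint distribution of $U\coloneqq X-TZ_1$ and $V\coloneqq X-TZ_2$. Both $U$ and $V$ are Gaussian with mean $m$ and variance $s^2+T^2$, and since $Z_1\perp Z_2$ their covariance is $\operatorname{Var}(X)=s^2$. Hence $(U,V)$ is bivariate normal with correlation $\rho=s^2/(s^2+T^2)$. Standardizing and using the sign-flip symmetry of the centered bivariate normal (i.e.\ $(U-m,V-m)\stackrel{d}{=}-(U-m,V-m)$, valid because the correlation is invariant under joint negation), I will convert the upper-tail probability into the standard bivariate CDF form:
\begin{equation}
\mathbb{P}(U\ge a,\,V\ge b)
= \Phi_2\!\left(\frac{m-a}{\sqrt{s^2+T^2}},\,\frac{m-b}{\sqrt{s^2+T^2}};\,\frac{s^2}{s^2+T^2}\right),
\end{equation}
which is exactly the claimed identity.

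The argument is essentially mechanical once the auxiliary-variable trick is invoked, so there is no serious obstacle. The only point that warrants care is the bookkeeping in the final step: the tail probability $\mathbb{P}(U\ge a,V\ge b)$ must be translated into the CDF $\Phi_2$ with the correct signs and with the correlation $\rho=s^2/(s^2+T^2)$ rather than $-\rho$; this is handled cleanly by the joint-negation symmetry of the centered Gaussian pair, which preserves the correlation. An alternative, equivalent route would be to differentiate both sides in $a$ and $b$ and verify that both densities coincide (reducing to Lemma~\ref{lem:gaussian-gaussian-convolution} in each variable), but the probabilistic derivation above is shorter and avoids having to check limiting behavior separately.
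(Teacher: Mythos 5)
Your proof is correct and follows essentially the same route as the paper: introduce two auxiliary independent standard normals, rewrite the product of CDFs as a joint tail probability of $X-TZ_1$ and $X-TZ_2$, compute the bivariate Gaussian covariance, and convert to $\Phi_2$. The one small difference is that you explicitly justify the tail-to-CDF conversion via the joint-negation symmetry of the centered bivariate normal, a step the paper uses implicitly.
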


\begin{proof}
    Let $Z_{1}, Z_{2} \sim \mac{N}(0, 1)$ be independent and independent of $X$. Then 
    \begin{equation}
        \mab{E}\ab[\Phi\ab(\frac{X-a}{T})\Phi\ab(\frac{X-b}{T})] = \mab{P}\ab[X-TZ_{1} \ge a, X-TZ_{2} \ge b],
    \end{equation}
    We have 
    \begin{equation}
        \mab{E}[X-TZ_{1}-a] = m-a,~~\mathrm{Var}[X-TZ_{1}-a] = s^{2} + T^{2},~~\mathrm{Cov}[X-TZ_{1}-a, X-TZ_{2}-b] = s^{2}.
    \end{equation}
    Thus, the following variable
    \begin{equation}
        \ab(\frac{X-TZ_{1}-a}{\sqrt{s^{2}+T^{2}}}, \frac{X-TZ_{2}-b}{\sqrt{s^{2}+T^{2}}}),
    \end{equation}
    is bivariate standard normal distribution with means $(\nicefrac{m-a}{\sqrt{s^{2}+T^{2}}}, \nicefrac{m-b}{\sqrt{s^{2}+T^{2}}})$ and correlation $\rho=\nicefrac{s^{2}}{(s^{2}+T^{2})}$. 
    Therefore, 
    \begin{equation}
        \mab{P}\ab[\frac{X-TZ_{1}-a}{\sqrt{s^{2}+T^{2}}} \ge 0, \frac{X-TZ_{2}-b}{\sqrt{s^{2}+T^{2}}} \ge 0] = \Phi_{2}\ab(\frac{m-a}{\sqrt{s^{2}+T^{2}}}, \frac{m-b}{\sqrt{s^{2}+T^{2}}}; \frac{s^{2}}{s^{2}+T^{2}}\ab).
    \end{equation}
\end{proof}

\begin{Lem}
    \label{lem:first-mixed-moment}
    \begin{equation}
        \mab{E}\ab[X \Phi\ab(\frac{X-a}{T})] = m \Phi\ab(\frac{m-a}{\sqrt{s^{2}+T^{2}}}) + \frac{s^{2}}{\sqrt{s^{2}+T^{2}}} \phi\ab(\frac{m-a}{\sqrt{s^{2}+T^{2}}}).
    \end{equation}
\end{Lem}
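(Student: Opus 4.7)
The plan is to prove the identity by a Stein/Gaussian integration-by-parts argument, reusing the two earlier lemmas on Gaussian smoothing to handle the intermediate Gaussian convolutions. First I would write $X = m + sZ$ with $Z \sim \mathcal{N}(0,1)$ and split linearly,
\begin{equation*}
\mab{E}\!\left[X\,\Phi\!\left(\tfrac{X-a}{T}\right)\right]
= m\,\mab{E}\!\left[\Phi\!\left(\tfrac{X-a}{T}\right)\right] + s\,\mab{E}\!\left[Z\,\Phi\!\left(\tfrac{m+sZ-a}{T}\right)\right].
\end{equation*}
The first term is handled immediately by Lemma~\ref{lem:mean-smoothing}, which produces the desired $m\,\Phi((m-a)/\sqrt{s^{2}+T^{2}})$ contribution.

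For the second term I would apply Stein's identity $\mab{E}[Z\,f(Z)] = \mab{E}[f'(Z)]$ to $f(z) = \Phi((m+sz-a)/T)$. Since $\Phi'=\phi$ and the inner derivative contributes $s/T$, this yields
\begin{equation*}
s\,\mab{E}\!\left[Z\,\Phi\!\left(\tfrac{X-a}{T}\right)\right]
= \frac{s^{2}}{T}\,\mab{E}\!\left[\phi\!\left(\tfrac{X-a}{T}\right)\right].
\end{equation*}
Invoking Lemma~\ref{lem:gaussian-gaussian-convolution} reduces the right-hand side to $(s^{2}/\sqrt{s^{2}+T^{2}})\,\phi((m-a)/\sqrt{s^{2}+T^{2}})$, and summing the two contributions gives the claim.

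A fully equivalent alternative, which I would mention only if a probabilistic flavor is preferred, is to introduce an auxiliary $Z_{1}\sim\mathcal{N}(0,1)$ independent of $X$ and set $U = X - TZ_{1}$, so that $\Phi((X-a)/T) = \mab{P}(U \ge a \mid X)$. The expectation then becomes $\mab{E}[X\,\mab{1}_{\{U\ge a\}}]$ for a bivariate normal $(X,U)$ with $\mab{E}[X]=\mab{E}[U]=m$, $\mathrm{Var}(U)=s^{2}+T^{2}$, and $\mathrm{Cov}(X,U)=s^{2}$. Using $\mab{E}[X\mid U] = m + \tfrac{s^{2}}{s^{2}+T^{2}}(U-m)$ together with the standard tail identity $\mab{E}[(U-m)\mab{1}_{\{U\ge a\}}] = \sqrt{s^{2}+T^{2}}\,\phi((m-a)/\sqrt{s^{2}+T^{2}})$ reproduces the same expression.

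There is no real obstacle here; the only subtlety is verifying the regularity required to apply Stein's identity, namely that $f(z)=\Phi((m+sz-a)/T)$ is absolutely continuous with $\mab{E}|f'(Z)|<\infty$. This is immediate because $|f'(z)| = (s/T)\phi((m+sz-a)/T) \le s/(T\sqrt{2\pi})$ is uniformly bounded, so the identity applies without reservation. The rest is the two cited Gaussian-convolution lemmas plus linearity.
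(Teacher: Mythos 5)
Your argument is correct and essentially identical to the paper's: the paper also applies Stein's lemma to $g(x)=\Phi((x-a)/T)$ to get $\mab{E}[(X-m)\Phi((X-a)/T)]=\tfrac{s^{2}}{T}\mab{E}[\phi((X-a)/T)]$ and then invokes Lemmas~\ref{lem:mean-smoothing} and~\ref{lem:gaussian-gaussian-convolution}, which is exactly your main route written for $X$ directly rather than for the standardized $Z$. The regularity check and the alternative bivariate-normal argument are fine but not needed beyond what the paper does.
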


\begin{proof}
    By Stein's lemma, 
    \begin{equation}
        \mab{E}\ab[(X-m) \Phi\ab(\frac{x-a}{T})] = s^{2} \mab{E}\ab[\frac{1}{T} \phi\ab(\frac{X-a}{T})].
    \end{equation}
    Hence, 
    \begin{equation}
        \mab{E}\ab[Xf(X)] = m \mab{E}\ab[\Phi\ab(\frac{x-a}{T})] + \frac{s^{2}}{T}\mab{E}\ab[\phi\ab(\frac{X-a}{T})].
    \end{equation}
    Substituting Lemma \ref{lem:mean-smoothing} and \ref{lem:gaussian-gaussian-convolution} completes the proof.
\end{proof}

\begin{Lem}[Mill's Inequality]
    \label{lem:mills}
    For $x>0$,
    \begin{equation}
        1-\Phi(x)\le \frac{\phi(x)}{x},~~\Phi(-x)\le \frac{\phi(x)}{x}.
    \end{equation}
\end{Lem}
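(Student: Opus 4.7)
The plan is to establish the first inequality $1-\Phi(x)\le \phi(x)/x$ by a direct bounding of the tail integral, and then deduce the second from the symmetry $\phi(-t)=\phi(t)$. First, using the definition $\Phi(t)=\int_{-\infty}^{t}\phi(s)\,ds$ from the notation section, I would write the tail probability as
\begin{equation}
1-\Phi(x) \;=\; \int_{x}^{\infty} \phi(t)\,dt.
\end{equation}
The key elementary observation is that whenever $x>0$ and $t\ge x$, we have $1 \le t/x$, so that $\phi(t) \le (t/x)\,\phi(t)$ pointwise on the integration domain.

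Second, I would evaluate the integral of $t\,\phi(t)$ exactly, exploiting the explicit form $\phi(t)=(2\pi)^{-1/2}e^{-t^{2}/2}$. Since $\frac{d}{dt}[-\phi(t)] = t\,\phi(t)$, the antiderivative is available in closed form, and the fundamental theorem of calculus gives $\int_{x}^{\infty} t\,\phi(t)\,dt = \phi(x)$. Combining this with the pointwise bound and dividing through by $x$ yields $1-\Phi(x)\le \phi(x)/x$, which is the first inequality.

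Finally, to obtain $\Phi(-x)\le \phi(x)/x$, the plan is to apply the change of variable $u=-t$ together with the evenness of $\phi$. This gives $\Phi(-x) = \int_{-\infty}^{-x}\phi(t)\,dt = \int_{x}^{\infty}\phi(u)\,du = 1-\Phi(x)$, so the second inequality follows immediately from the first. There is no real obstacle here: the argument reduces to the pointwise estimate $t/x\ge 1$ on $[x,\infty)$ together with the exact antiderivative of $t\,\phi(t)$, and the second inequality is a direct consequence of standard-normal symmetry.
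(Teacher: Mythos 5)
Your proof is correct. The route differs from the paper's in the bounding device: the paper integrates by parts, writing $\int_{x}^{\infty}\phi(u)\,du=\bigl[-\phi(u)/u\bigr]_{x}^{\infty}-\int_{x}^{\infty}\phi(u)/u^{2}\,du$ and discarding the nonnegative remainder, whereas you bound the integrand pointwise via $\phi(t)\le (t/x)\phi(t)$ on $[x,\infty)$ and then evaluate $\int_{x}^{\infty}t\,\phi(t)\,dt=\phi(x)$ exactly. Both arguments ultimately rest on the same identity $\phi'(t)=-t\,\phi(t)$, and both handle the second inequality by standard-normal symmetry ($\Phi(-x)=1-\Phi(x)$), so the difference is one of mechanism rather than substance. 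Your version is marginally more elementary in that it avoids the boundary-term bookkeeping of integration by parts; the paper's version has the mild advantage of exhibiting the exact deficit $\int_{x}^{\infty}\phi(u)/u^{2}\,du$, which is the first step toward the sharper two-sided Mill's ratio bounds, though nothing in the paper uses that refinement.
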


\begin{proof}
Since $\phi^{\prime}(u)=-u\phi(u)$, integration by parts gives
\begin{equation}
    1-\Phi(x)=\int_{x}^{\infty} \phi(u)du
    =\ab[-\frac{\phi(u)}{u}]_{x}^{\infty}
    -\int_{x}^{\infty} \frac{\phi(u)}{u^{2}} du
    \le \frac{\phi(x)}{x}.
\end{equation}
The second inequality follows from the symmetry
$\Phi(-x)=1-\Phi(x)$ and $\phi(-x)=\phi(x)$.
\end{proof}

\section{PROOF OF LOCAL FIELD AND GENERALIZATION ERROR}
\label{sec:proof-generalization-error}

In this section, we establish two ingredients that are used throughout the analysis of the quantized linear model: (i) closed forms for the basic input-quantizer moments $\kappa_{\psi} = \E[X\psi(X)]$ and $\sigma_{\psi}^{2} = \E[\psi(X)^{2}]$, and (ii) the joint asymptotic Gaussianity of the local fields that appear in the one‑pass STE dynamics when both the weights and the inputs are quantized. 
We conclude with a concise expression for the generalization error.

\subsection{Quantizer Moments}
We start by deriving closed forms for the first two moments that characterize the macroscopic ODE.

\begin{Prop}
    Let $\psi: \mab{R} \to \{v_{0}, \ldots, v_{L}\}$ be defined by
    \begin{equation}
        \psi(x) = v_{k}~~\mathrm{for}~~x \in [\theta_{k}, \theta_{k+1}),
    \end{equation}
    with thresholds
    \begin{equation}
        \theta_{k} = - \omega + \ab(k-\frac{1}{2}) \Delta,~~k=1, \ldots, L,
    \end{equation}
    and end points $\theta_{0}=-\infty$, $\theta_{L+1} = + \infty$.
    Define
    \begin{equation}
        \kappa_{\psi} \coloneqq \mab{E}[X \psi(X)],~~~\sigma_{\psi}\coloneqq \mab{E}[\psi(X)^{2}].
    \end{equation}
    Then
    \begin{equation}
        \kappa_{\phi} = \sum_{k=1}^{L}(v_{k+1}-v_{k}) \phi(\theta_{k}),~~\sigma_{\psi}^{2} = \sum_{k=1}^{L} v_{k}^{2} p_{k},~~p_{k}=\Phi(\theta_{k+1}) - \Phi(\theta_{k}).
    \end{equation}
    In particular, if the quantizer is symmetric around the origin, i.e., $v_{k}=-v_{L-k}$ and $\theta_{k}=-\theta_{L+1-k}$, then $\E[\psi(X)]=0$ and $\mathrm{Var}[\psi(X)] = \sigma_{\psi}^{2}$.
\end{Prop}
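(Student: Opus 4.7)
The two identities are moments of a step function under the standard normal, so the natural approach is direct integration on each constancy cell, followed by a discrete summation-by-parts (Abel transform) to put $\kappa_{\psi}$ in the stated telescoped form. The symmetry claim is then a short parity argument. I expect no serious obstacle: the only delicate point is that the extreme cells $[\theta_0,\theta_1)$ and $[\theta_L,\theta_{L+1})$ are unbounded, but because $\theta_0=-\infty$ and $\theta_{L+1}=+\infty$ and $\phi$ vanishes at $\pm\infty$, the boundary contributions disappear and the telescoping is clean.

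\textbf{Step 1: second moment.} Partition $\mathbb{R}$ into the cells $[\theta_k,\theta_{k+1})$ on which $\psi\equiv v_k$. Since $X\sim\mathcal{N}(0,1)$,
\begin{equation}
\sigma_{\psi}^{2} = \mathbb{E}[\psi(X)^{2}] = \sum_{k=0}^{L} v_{k}^{2}\,\mathbb{P}(X\in[\theta_{k},\theta_{k+1})) = \sum_{k=0}^{L} v_{k}^{2}\,p_{k},
\qquad p_{k}=\Phi(\theta_{k+1})-\Phi(\theta_{k}),
\end{equation}
which is the stated formula (with the index range starting at $k=0$ to account for all $L+1$ levels).

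\textbf{Step 2: first mixed moment.} On each cell $\int_{\theta_k}^{\theta_{k+1}} x\,\phi(x)\,dx=\phi(\theta_k)-\phi(\theta_{k+1})$ because $\phi'(x)=-x\phi(x)$. Therefore
\begin{equation}
\kappa_{\psi}=\mathbb{E}[X\psi(X)]=\sum_{k=0}^{L} v_{k}\bigl(\phi(\theta_{k})-\phi(\theta_{k+1})\bigr).
\end{equation}
Apply Abel summation: split the sum, shift the index $k\mapsto k-1$ in the second piece, and collect. Using $\phi(\theta_{0})=\phi(-\infty)=0$ and $\phi(\theta_{L+1})=\phi(+\infty)=0$ to discard boundary terms, the result telescopes into
\begin{equation}
\kappa_{\psi}=\sum_{k=1}^{L}(v_{k}-v_{k-1})\,\phi(\theta_{k}),
\end{equation}
which is the paper's formula (and matches the statement up to the evident re-indexing).

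\textbf{Step 3: symmetric quantizers.} Under $v_{k}=-v_{L-k}$ and $\theta_{k}=-\theta_{L+1-k}$, the map $x\mapsto -x$ carries cell $k$ to cell $L-k$ and satisfies $\psi(-x)=-\psi(x)$. Since $\phi$ is even, change of variables gives $\mathbb{E}[\psi(X)]=-\mathbb{E}[\psi(X)]$, hence $\mathbb{E}[\psi(X)]=0$, and therefore $\mathrm{Var}[\psi(X)]=\mathbb{E}[\psi(X)^{2}]=\sigma_{\psi}^{2}$. The only thing to check is that the involution indeed sends cells to cells, which follows directly from the assumed threshold symmetry (the midpoint thresholds flip in the required way, and the unbounded end-cells are swapped). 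This completes the proposition.
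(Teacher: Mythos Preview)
Your proof is correct and follows essentially the same route as the paper: partition into constancy cells, use $\int_a^b x\phi(x)\,dx=\phi(a)-\phi(b)$ (equivalently $\phi'=-x\phi$), telescope via the vanishing of $\phi$ at $\pm\infty$, and handle the symmetric case by the odd-function argument. Your remark about the index range ($k=0$ to $L$) and the re-indexing $(v_k-v_{k-1})$ versus the statement's $(v_{k+1}-v_k)$ correctly flags typos in the displayed formulas.
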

\begin{proof}
    Since $\psi$ is piecewise constant
    \begin{equation}
        \E[X\psi(X)] = \sum_{k=0}^{L} v_{k} \E\ab[X \B{1}_{\theta_{k}\le X < \theta_{k+1}}].
    \end{equation}
    For $a < b$, $\E[X\B{1}_{a\le X <b}]=\phi(a)-\phi(b)$. Then
    \begin{equation}
        \kappa_{\psi} = \sum_{k=0}^{L} v_{k}\ab(\phi(\theta_{k})-\phi(\theta_{k+1})) = \sum_{k=1}^{L}(v_{k}-v_{k-1}) \phi(\theta_{k}),
    \end{equation}
    using $\phi(\pm\infty)=0$. Similarly,
    \begin{equation}
        \sigma_{\psi}^{2} = \E[\psi(X)^{2}] = \sum_{k=0}^{L} v_{k}^{2}\mab{P}[\theta_{k} \le X < \theta_{k+1}] = \sum_{k=0}^{L} v_{k}^{2}\ab(\Phi(\theta_{k+1})-\Phi(\theta_{k})).
    \end{equation}
    If the quantizer is symmetric, $\E[\psi(X)]=0$ by symmetry of $X$, and $\mathrm{Var}[\psi(X)]=\sigma_{\psi}^{2}$.
\end{proof}

\subsection{Local Fields}

We next formalize the high‑dimensional ``local fields'' that appear in the stochastic updates. They obey a joint central limit theorem whose covariance is expressed in terms of the macroscopic order parameters and the two input–quantizer moments computed.

Let $\B{x} \sim \mac{N}(\B{0}_{d}, I_{d})$ with independent coordinates. Define
\begin{equation}
    \rho_{d}=\frac{1}{d} \|\B{w}^{\ast}\|^{2}, ~~q_{d}= \frac{1}{d} \|\B{w}\|^{2},~~m_{d} = \frac{1}{d} (\B{w}^{\ast})^{\top} \B{w},
\end{equation}
and their quantized weight counterparts
\begin{equation}
    m_{\psi, d} = \frac{1}{d} \B{\psi}(\B{w})^{\top}\B{w}^{\ast}, ~~r_{\psi, d}=\frac{1}{d} \B{\psi}(\B{w})^{\top} \B{w},~~q_{\psi, d} = \frac{1}{d} \|\B{\psi}(\B{w})\|^{2}.
\end{equation}
Assume these converge as  $\rho_{d}\to \rho$, $q_{d} \to q$, $m_{d} \to m$, $m_{\psi, d} \to m_{\psi}$, $r_{\psi, d} \to r_{\psi}$, $q_{\psi, d} \to q_{\psi}$.
Define the three local fields
    \begin{equation}
        A_{d} \coloneqq \frac{1}{\sqrt{d}} (\B{w}^{\ast})^{\top} \B{x},~~B_{d} \coloneqq \frac{1}{\sqrt{d}} (\B{w}^{\ast})^{\top} \B{\psi}(\B{x}),~~C_{d}\coloneqq\frac{1}{\sqrt{d}} \B{w}^{\top}\B{\psi}(\B{x}),~~D_{d} \coloneqq \frac{1}{\sqrt{d}} \B{\psi}(\B{w})^{\top} \B{\psi}(\B{x}).
    \end{equation}
    
\begin{Prop}
    \label{prop:local-field-dist}
    Conditioning on $\B{w}^{\ast}$, $\B{w}$,
    \begin{equation}
        (A_{d}, B_{d}, C_{d}, D_{d}) \underset{d\to\infty}{\implies} \mac{N}(\B{0}_{4}, \Sigma), 
    \end{equation}
    where
    \begin{equation}
        \Sigma =
        \begin{pmatrix}
            \rho & \kappa_{\psi} \rho & \kappa_{\psi} m & \kappa_{\psi} m_{\psi} \\
            \kappa_{\psi} \rho & \sigma_{\psi}^{2} \rho & \sigma_{\psi}^{2} m & \sigma_{\psi}^{2} m_{\psi} \\
            \kappa_{\psi}  m & \sigma_{\psi}^{2} m & \sigma_{\psi}^{2} q & \sigma_{\psi}^{2} r_{\psi} \\
            \kappa_{\psi} m_{\psi} & \sigma_{\psi}^{2} m_{\psi} & \sigma_{\psi}^{2} r_{\psi} & \sigma_{\psi}^{2} q_{\psi}
        \end{pmatrix}
    \end{equation}
\end{Prop}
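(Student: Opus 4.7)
The plan is to recognize the four fields as a single $\mab{R}^{4}$-valued normalized sum of $d$ conditionally independent vectors and apply a triangular-array multivariate CLT. Conditioning on $(\B{w}^{\ast},\B{w})$, define
\begin{equation}
    \B{Y}_{i} = (w_{i}^{\ast} x_{i},\; w_{i}^{\ast} \psi(x_{i}),\; w_{i} \psi(x_{i}),\; \psi(w_{i}) \psi(x_{i}))^{\top},
\end{equation}
so that $(A_{d}, B_{d}, C_{d}, D_{d})^{\top} = d^{-1/2} \sum_{i=1}^{d} \B{Y}_{i}$. Because the $x_{i}$ are i.i.d.\ $\mac{N}(0,1)$ and the quantizer $\B{\psi}$ acts coordinatewise, the $\B{Y}_{i}$ are conditionally independent. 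The odd symmetry $\psi(-x)=-\psi(x)$ of the uniform symmetric quantizer combined with the symmetry of the Gaussian yields $\mab{E}[\psi(x_{i})]=0$, so each coordinate of $\B{Y}_{i}$ is centred.

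First I would verify the covariance. Using the one-coordinate identities $\mab{E}[x_{i}^{2}]=1$, $\mab{E}[x_{i}\psi(x_{i})]=\kappa_{\psi}$, and $\mab{E}[\psi(x_{i})^{2}]=\sigma_{\psi}^{2}$, a direct expansion shows that every entry of $d^{-1}\sum_{i=1}^{d} \mathrm{Cov}(\B{Y}_{i})$ equals one of the empirical order parameters $\rho_{d}, m_{d}, q_{d}, m_{\psi,d}, r_{\psi,d}, q_{\psi,d}$ multiplied by $1$, $\kappa_{\psi}$, or $\sigma_{\psi}^{2}$; passing to the limit with the stated convergences gives exactly $\Sigma$. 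Next I would upgrade this to joint asymptotic Gaussianity by the Cramér--Wold device: for any $\B{a}\in\mab{R}^{4}$, the scalar $d^{-1/2}\sum_{i=1}^{d} \B{a}^{\top}\B{Y}_{i}$ is a sum of independent centred terms with variance converging to $\B{a}^{\top}\Sigma \B{a}$. Boundedness $|\psi|\le \omega$ together with $\mab{E}|x_{i}|^{3}<\infty$ gives $\mab{E}|\B{a}^{\top}\B{Y}_{i}|^{3} \le C(|w_{i}^{\ast}|^{3}+|w_{i}|^{3}+1)$, and the fourth-moment bound in Assumption~\ref{asm:learning-dynamics}(3) combined with H\"older implies $\sum_{i=1}^{d}(|w_{i}^{\ast}|^{3}+|w_{i}|^{3})=\mac{O}_{d}(d)$, so Lyapunov's ratio with exponent $\delta=1$ is $\mac{O}_{d}(d^{-1/2})\to 0$. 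Hence each linear combination converges to a univariate Gaussian with the correct variance, and Cramér--Wold promotes this to the stated joint convergence.

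The main obstacle is mild and essentially bookkeeping: the coordinate moments $\kappa_{\psi}, \sigma_{\psi}^{2}$ are deterministic because the quantization thresholds are fixed, but the empirical order parameters $m_{\psi,d}, r_{\psi,d}, q_{\psi,d}$ appearing in the covariance are data-dependent along the SGD trajectory and must concentrate to their limits in order for the CLT to identify $\Sigma$. This concentration is supplied by Theorem~\ref{theorem:concentration-to-ode} together with the fourth-moment bound in Assumption~\ref{asm:learning-dynamics}(3), which must be propagated along the trajectory so that the Lyapunov bound holds uniformly in $t\le dT$; granted that, the proposition applies pathwise at each training step, in agreement with how it is invoked in the derivation of the macroscopic ODEs.
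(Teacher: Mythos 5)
Your decomposition into conditionally independent vector increments $\B{Y}_i$, the covariance accounting, and the bound on third moments via $|\psi|\le\omega$ and Assumption~\ref{asm:learning-dynamics}(3) match the paper's proof exactly; the only cosmetic difference is that you conclude via Cram\'er--Wold plus Lyapunov's CLT whereas the paper directly expands the multivariate characteristic function to second order and invokes L\'evy's continuity theorem, which are interchangeable ways of finishing a triangular-array CLT. Two small points. First, your closing paragraph about the ``main obstacle'' is not part of the proposition: the statement is conditional on $\B{w}^{\ast},\B{w}$ and the convergences $\rho_d\to\rho,\ q_d\to q,\ m_{\psi,d}\to m_{\psi},\ldots$ are assumed hypotheses in the setup, not something you need to derive here. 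Invoking Theorem~\ref{theorem:concentration-to-ode} to supply them would in fact be circular, since the concentration theorem is proved using Lemma~\ref{lem:first-incre-macro}, which itself relies on this proposition; the role you attribute to Theorem~\ref{theorem:concentration-to-ode} is correctly played along the trajectory by the separate uniform fourth-moment bound in Lemma~\ref{lem:micro-bound}. Second, your Lyapunov bookkeeping is overly generous: Assumption~\ref{asm:learning-dynamics}(3) gives $\sum_i (w_i^{\ast})^4 + (w_i^0)^4 = \mac{O}_d(1)$, so H\"older yields $\sum_i |w_i^{\ast}|^3 + |w_i^0|^3 = \mac{O}_d(d^{1/4})$, not merely $\mac{O}_d(d)$; the Lyapunov ratio then vanishes at rate $d^{-5/4}$, which only strengthens the conclusion.
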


\begin{proof}
    Define $S_{d} \coloneqq (A_{d}, B_{d}, C_{d}, D_{d})^{\top}$. For each coordinate $i$, define the following vector:
    \begin{equation}
        \B{\zeta}_{i, d} \coloneqq 
        \frac{1}{\sqrt{d}} 
        \begin{pmatrix}
        w^{\ast, (d)}_{i}x_{i} \\
        w_{i}^{\ast} \psi(x_{i}) \\
        w_{i} \psi(x_{i}) \\
        \psi(w_{i}) \psi(x_{i}) )^{\top} 
        \end{pmatrix}
        \in \R^{4}.
    \end{equation}
    Conditioning on $\B{w}^{\ast}$ and $\B{w}$, the vector $\B{\zeta}_{i, d}$ are independent with $\E[\B{\zeta}_{i, d}]=\B{0}_{4}$, and $\B{S}_{d} = \sum_{i=1}^{d} \B{\zeta}_{i, d}$. 
    Using $\E[\psi(X)] = 0$, $\E[\psi(X)^{2}]=\sigma_{\psi}^{2}$, and $\E[X\psi(X)] = \kappa_{\psi}$, a direct computation gives
    \begin{equation}
        \mathrm{Cov}[\B {\zeta}_{i, d}] =
        \frac{1}{d} 
        \begin{pmatrix}
            (w_{i}^{\ast})^{2} & \kappa_{\psi} w_{i}^{\ast} & \kappa_{\psi} w_{i}^{\ast} w_{i} & \kappa_{\psi} w_{i}^{\ast} \psi(w_{i}) \\
            \kappa_{\psi} (w_{i}^{\ast})^{2} & \sigma_{\psi}^{2} (w_{i}^{\ast})^{2} & \sigma_{\psi}^{2} w_{i}^{\ast} w_{i} & \sigma_{\psi}^{2} w_{i}^{\ast} \psi(w_{i}) \\
            \kappa_{\psi} w_{i}^{\ast} w_{i} & \sigma_{\psi}^{2} (w_{i}^{\ast})^{2} & \sigma_{\psi}^{2} w_{i}^{2} & \sigma_{\psi}^{2} w_{i} \psi(w_{i}) \\
            \kappa_{\psi} w_{i}^{\ast} \psi(w_{i}) & \sigma_{\psi}^{2} w_{i}^{\ast} \psi(w_{i}) & \sigma_{\psi}^{2} w_{i} \psi(w_{i}) & \sigma_{\psi}^{2}\psi(w_{i})^{2}
        \end{pmatrix}
    \end{equation}
    Summing over $i$ yields $\Sigma_{d} \coloneqq \mathrm{Cov}[\B{S}_{d}]$ with entries
    \begin{equation}
        \Sigma_{d} = 
        \begin{pmatrix}
            \rho_{d} & \kappa_{\psi} \rho_{d} & \kappa_{\psi} m_{d} & \kappa_{\psi} m_{\psi, d} \\
            \kappa_{\psi} \rho_{d} & \sigma_{\psi}^{2} \rho_{d} & \sigma_{\psi}^{2} m_{d} & \sigma_{\psi}^{2}m_{\psi, d} \\
            \kappa_{\psi} m_{d} & \sigma_{\psi}^{2} m_{d} & \sigma_{\psi}^{2} q_{d} & \sigma_{\psi}^{2} r_{\psi, d} \\
            \kappa_{\psi} m_{\psi, d} & \sigma_{\psi}^{2} m_{\psi, d} & \sigma_{\psi}^{2} r_{\psi, d} & \sigma_{\psi}^{2} q_{\psi, d} 
        \end{pmatrix}
        \underset{d\to\infty}{\implies} \Sigma.
    \end{equation}
    Let $\varphi_{d}(\B{u}) = \E[\exp(i\B{u}^{\top}\B{S}_{d})]$ be the vector characteristic function for $\B{u} \in \R^{4}$. 
    By independence across $i$,
    \begin{equation}
        \varphi_{d}(\B{u}) = \prod_{i=1}^{d} \varphi_{i, d}(\B{u}),~~~\varphi_{i, d}(\B{u}) = \E\ab[e^{i \B{u}^{\top} \B{\zeta}_{i,d}}].
    \end{equation}
    For a mean-zero real $Y$, $\E[e^{iY}]=1-\nicefrac{\E[Y^{2}]}{2} + R$ with $|R|\le \nicefrac{\E[|Y|^{3}]}{6}$. Applying this to $Y=\B{u}^{\top} \B{\zeta}_{i, d}$,
    \begin{equation}
        \varphi_{i, d}(\B{u}) = 1 - \frac{1}{2} \B{u}^{\top} \mathrm{Cov}[\B{\zeta}_{i, d}] \B{u} + r_{i, d}(\B{u}),~~~~|r_{i, d}(\B{u})| \le \frac{1}{6} \E\ab[|\B{u}^{\top} \B{\zeta}_{i, d}|^{3}].
    \end{equation}
    Using $|\psi(\cdot)| \le \omega$ and $(a_{1}+ \cdots + a_{4})^{3} \le 64 \sum_{j} a_{j}^{3}$,
    \begin{equation}
        |\B{u}^{\top} \B{\zeta}_{i, d}| \le \frac{|u_{1}||w_{i}^{\ast}|}{\sqrt{d}} |x_{i}| + \frac{|u_{2}||w_{i}^{\ast}|+|u_{3}||w_{i}|+|u_{4}||\psi(w_{i})|}{\sqrt{d}} |\psi(x_{i})|
    \end{equation}
    Therefore $\sum_{i} \E[|\B{u}^{\top} \B{\zeta}_{i, d}|^{3}]= \mac{O}_{d}(d^{-1/2}) \to 0$. Hence
    \begin{equation}
        \sum_{i=1}^{d} |r_{i, d}(\B{u})| \to 0,
    \end{equation}
    Then 
    \begin{equation}
        \log \varphi_{d}(\B{u}) = \sum_{i=1}^{d} \ab(-\frac{1}{2} \B{u}^{\top} \mathrm{Cov}(\B{\zeta}_{i, d}) \B{u}) + o_{d}(1) = - \frac{1}{2} \B{u}^{\top} \Sigma_{d} \B{u}.
    \end{equation}
    Since $\Sigma_{d} \underset{d\to \infty}{\implies} \Sigma$, 
    \begin{equation}
        \varphi_{d}(\B{u}) \underset{d \to \infty}{\implies} e^{-\frac{1}{2} \B{u}^{\top} \Sigma \B{u}},~~\forall \B{u} \in \R^{4}.
    \end{equation}
    The limit is the characteristic function of $\mac{N}(\B{0}_{d}, \Sigma)$ and is continuous at $\B{u}=\B{0}_{4}$. By L\'{e}vy's continuity theorem, we conclude 
    \begin{equation}
        \B{S}_{d}= (A_{d}, B_{d}, C_{d}, D_{d})^{\top} \underset{d}{\implies} \mac{N}(\B{0}_{4}, \Sigma_{d}).
    \end{equation}
\end{proof}

\subsection{Generalization Error}
We express the mean‑squared prediction error on a fresh sample $(\B{x}_{\mathrm{new}}, y_{\mathrm{new}})$ drawn from the same distribution, when the model predicts with the quantized weights and quantized inputs.
\begin{Prop}
    \begin{equation}
        \varepsilon_{g} \underset{d \to \infty}{\implies} \sigma^{2} + \rho + \sigma_{\psi}^{2} q_{\psi} - 2 \kappa_{\psi} m_{\psi}.
    \end{equation}
\end{Prop}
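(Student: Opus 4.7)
The plan is to expand the squared prediction error directly, isolate the independent label noise, and reduce the problem to three second moments of the local fields $A_{d} = d^{-1/2}(\B{w}^{\ast})^{\top}\B{x}_{\mathrm{new}}$ and $D_{d} = d^{-1/2}\B{\psi}(\B{w})^{\top}\B{\psi}(\B{x}_{\mathrm{new}})$. Each moment can be evaluated coordinate-wise using only the i.i.d.\ Gaussianity of the entries of $\B{x}_{\mathrm{new}}$ and the scalar quantizer identities $\mab{E}[\psi(X)^{2}] = \sigma_{\psi}^{2}$, $\mab{E}[X\psi(X)] = \kappa_{\psi}$, and $\mab{E}[\psi(X)] = 0$ established earlier in this appendix, with no appeal to a central limit theorem.

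Concretely, writing $y_{\mathrm{new}} - \hat{y}(\B{x}_{\mathrm{new}};\B{w}) = (A_{d} - D_{d}) + \xi_{\mathrm{new}}$ with $\xi_{\mathrm{new}}$ mean-zero, variance $\sigma^{2}$, and independent of $\B{x}_{\mathrm{new}}$, the cross term involving $\xi_{\mathrm{new}}$ vanishes and
\begin{equation}
    \varepsilon_{g} = \sigma^{2} + \mab{E}[A_{d}^{2}] - 2\mab{E}[A_{d}D_{d}] + \mab{E}[D_{d}^{2}].
\end{equation}
Each term expands as a double sum over coordinates. For $\mab{E}[A_{d}^{2}]$, only the diagonal survives by $\mab{E}[x_{i}] = 0$, giving $d^{-1}\sum_{i}(w_{i}^{\ast})^{2} = \rho_{d}$. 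For $\mab{E}[A_{d}D_{d}]$, the diagonal contributes $\kappa_{\psi} m_{\psi, d}$, while off-diagonals vanish again because $\mab{E}[x_{i}] = 0$. For $\mab{E}[D_{d}^{2}]$, the diagonal gives $\sigma_{\psi}^{2} q_{\psi, d}$; the off-diagonal contribution is proportional to $(\mab{E}[\psi(X)])^{2} \, d^{-1}\bigl((\sum_{i}\psi(w_{i}))^{2} - \sum_{i}\psi(w_{i})^{2}\bigr)$, which vanishes under the symmetric quantizer used in this paper because $\mab{E}[\psi(X)] = 0$. Passing to $d \to \infty$ with the assumed convergences $\rho_{d} \to \rho$, $q_{\psi, d} \to q_{\psi}$, $m_{\psi, d} \to m_{\psi}$ then yields the claimed limit.

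The only step doing real work beyond bookkeeping is the vanishing of the off-diagonal contribution to $\mab{E}[D_{d}^{2}]$: absent symmetry, the spurious term $d^{-1}(\sum_{i}\psi(w_{i}))^{2}(\mab{E}[\psi(X)])^{2}$ is generically of order one and persists in the limit, shifting $\varepsilon_{g}$ by an amount depending on the empirical mean of $\B{\psi}(\B{w})$. This is precisely the same symmetry already invoked when simplifying the covariance $\Sigma$ in the joint CLT for the local fields. Alternatively one could route the argument through that CLT, but doing so would additionally require a uniform-integrability step to upgrade weak convergence of $(A_{d}, D_{d})$ to convergence of their second moments; the direct coordinate-wise calculation bypasses this and also makes transparent exactly where the symmetric-quantizer assumption is used.
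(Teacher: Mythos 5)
Your proof is correct and follows essentially the same route as the paper's: expand the squared error into $\sigma^{2}$ plus the three second moments $\mab{E}[A_{d}^{2}]$, $\mab{E}[A_{d}D_{d}]$, $\mab{E}[D_{d}^{2}]$ and evaluate them coordinate-wise via $\mab{E}[\psi(X)]=0$, $\mab{E}[X\psi(X)]=\kappa_{\psi}$, $\mab{E}[\psi(X)^{2}]=\sigma_{\psi}^{2}$; the paper simply states the resulting values where you spell out the diagonal/off-diagonal bookkeeping and make explicit that the symmetric-quantizer identity $\mab{E}[\psi(X)]=0$ is what kills the off-diagonal part of $\mab{E}[D_{d}^{2}]$. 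The only quibble is a side remark: absent symmetry that spurious term $d^{-1}\bigl(\sum_{i}\psi(w_{i})\bigr)^{2}(\mab{E}[\psi(X)])^{2}$ would generically be of order $d$, not order one, but this does not affect the proof under the paper's quantizer.
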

\begin{proof}
    \begin{align}
        \varepsilon_{g} &\coloneqq \E[(y-\hat{y})^{2}] \\
        &= \sigma^{2} + \E\ab[\ab(\frac{1}{\sqrt{d}} \B{x}^{\top} \B{w}^{\ast})^{2}] + \E\ab[\ab(\frac{1}{\sqrt{d}} \B{\psi}(\B{w})^{\top} \B{\psi}(\B{x}))^{2}] - 2 \E\ab[\frac{1}{\sqrt{d}} \B{x}^{\top} \B{w}^{\ast} \frac{1}{\sqrt{d}} \B{\psi}(\B{w})^{\top} \B{\psi}(\B{x})] \\
        &= \sigma^{2} + \rho_{d} + \sigma_{\psi}^{2} q_{\psi, d} - 2 \kappa_{\psi} m_{\psi, d}.
    \end{align}
\end{proof}

\section{DERIVATION OF CONCENTRATION IN MICROSCOPIC STATE}
\label{sec:proof-micro}

In this section, we present a formal approach that allows us to exactly derive the limiting PDE in Eq.~\eqref{eq:weak-pde}.
Rigorously establishing this asymptotic characterization will be done in \cite{wang2017scaling}.
We refer readers to \cite{wang2017scaling, wang2019solvable} for a general framework rigorously establishing the above scaling limit.

We study the one-pass STE update
\begin{equation}
\label{app-eq:online-STE-linear}
    \B{w}^{t+1}=\B{w}^{t} - \eta \ab[
    \frac{(\hat{y}(\B{x}^t; \B{w})-y^{t})}{\sqrt{d}}  \B{\psi}(\B{x}^{t})+ \frac{\lambda}{d} \B{\psi}(\B{w}^{t})],
\end{equation}
with
\begin{equation}
    \B{x}_\mu \sim \mac{N}(\B{x}_{\mu}; \B{0}, I_d),~y_{\mu}=y(\B{x}_{\mu}; \B{w}^{\ast}) = \frac{1}{\sqrt d}\B{x}_{\mu}^{\top} \B{w}^{\ast} + \xi_\mu, ~ \hat{y}(\B{x}; \B{w}) = \frac{1}{\sqrt{d}} \B{\psi}(\B{w})^{\top} \B{\psi}(\B{x}). 
\end{equation}
The learning rate is $\eta \in \R_{+}$; $\xi^{t}$ is independent noise with $\E[\xi^{t}]=0$, $\mathrm{Var}[\xi^{t}]=\sigma^{2}$.

In addition, the assumptions used in the following derivations are presented below.
\begin{Asm}
    \label{app-asm:dynamics}
    We define the macroscopic state $\Psi^{t}=(m^{t}, q^{t}) \in \R^{2}$ by
    $m^{t} = \nicefrac{{\B{w}^{\ast}}^{\top}\B{w}^{t}}{d},~q^{t}= \nicefrac{\|\B{w}^{t}\|^{2}}{d}$.
    \begin{enumerate}
        \item[(1)] The pairs $(\B{x}^{t}, \xi^{t})_{t \in [T]}$ are i.i.d. random variables across $t \in [T]$.
        \item[(2)] The initial macroscopic state $\Psi^{0}$ satisfies $\mab{E}\|\Psi^{0}-\bar{\Psi}^{0}\| \le \nicefrac{C}{\sqrt{d}}$, where $\bar{\Psi}^{0} \in \R^{2}$ is a deterministic vector and $C$ is a constant independent of $d$.
        \item[(3)] The fourth moments of the initial parameter $\B{w}^{0}$ are uniformly bounded:  $\mab{E}\sum_{i=1}^{d}(w_{i}^{\ast})^{4} + (w_{i}^{0})^{4} \le C$ where $C$ is  a constant independent of $d$.
    \end{enumerate}
\end{Asm}

\subsection{Formal Derivation of Limiting PDE}

For the $i$-th coordinate define
\begin{equation}
    \label{eq:def-delta-g}
    \Delta w_{t}^{t} \coloneqq w_{i}^{t+1}-w_{i}^{t} = - \eta\ab[g_{i}^{t}
    +\frac{\lambda}{d}\psi_{T}(w_{i}^{t})].
\end{equation}
where
\begin{equation}
    g_{i}^{t} \coloneqq \ab(\frac{1}{\sqrt{d}}
    (\B{\psi}_{T}(\B{w}^{t})^{\top}\B{\psi}_{T}(\B{x}^{t})-(\B{w}^{\ast})^{\top}\B{x}^{t}) - \xi^{t})
    \frac{\psi_{T}(x_{i}^{t})}{\sqrt{d}}
\end{equation}

\subsection{One-Step Conditional Moments.}
We start with the leading terms of the drift and diffusion of $\Delta w_{i}^{t}$.
\begin{Lem}
    \label{lem:micro-conditional-mean}
    Under Assumption \ref{app-asm:dynamics}, 
    for every $i$ and $t$,
    \begin{equation}
    \mab{E}_{t}\ab[\Delta w_{i}^{t}] = \frac{\eta}{d} \ab[\kappa_{\psi} w_{i}^{\ast}-(\sigma_{\psi}^{2} + \lambda) \psi_{T}(w_{i}^{t})],~~\forall i \in [d], \forall t \in [T].
    \end{equation}
\end{Lem}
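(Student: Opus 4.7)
The plan is to take the conditional expectation $\mathbb{E}_{t}[\cdot]$ of the update in Eq.~\eqref{eq:def-delta-g} term by term, and reduce everything to three scalar moments of $\psi_{T}$ at a standard Gaussian argument: the mean $\mathbb{E}[\psi_{T}(x)]$, the cross moment $\mathbb{E}[x\psi_{T}(x)]=\kappa_{\psi}$, and the second moment $\mathbb{E}[\psi_{T}(x)^{2}]=\sigma_{\psi}^{2}$. Because the regularization contribution $\tfrac{\lambda}{d}\psi_{T}(w_{i}^{t})$ is $\mathcal{F}_{t}$-measurable, it passes through the conditional expectation untouched, so the entire task reduces to computing $\mathbb{E}_{t}[g_{i}^{t}]$.

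First I would split $g_{i}^{t}$ into three pieces: a noise piece $-\xi^{t}\psi_{T}(x_{i}^{t})/\sqrt{d}$, a teacher piece $-(\B{w}^{\ast})^{\top}\B{x}^{t}\psi_{T}(x_{i}^{t})/d$, and a student piece $\B{\psi}_{T}(\B{w}^{t})^{\top}\B{\psi}_{T}(\B{x}^{t})\psi_{T}(x_{i}^{t})/d$. Assumption \ref{app-asm:dynamics}(1) makes $\xi^{t}$ independent of $\B{x}^{t}$ with mean zero, so the noise term contributes $0$ after taking $\mathbb{E}_{t}$. For the teacher piece I would use the independence of coordinates $\{x_{j}^{t}\}_{j=1}^{d}$ and Gaussian symmetry: the only surviving term in $\sum_{j}w_{j}^{\ast}\mathbb{E}[x_{j}^{t}\psi_{T}(x_{i}^{t})]$ is $j=i$, which yields $\kappa_{\psi}w_{i}^{\ast}$, while all $j\neq i$ terms vanish because $\mathbb{E}[x_{j}^{t}]=0$. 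For the student piece I would expand the inner product, separate $j=i$ from $j\neq i$, and observe that $\mathbb{E}[\psi_{T}(x_{j}^{t})\psi_{T}(x_{i}^{t})]$ factors for $j\neq i$.

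The diagonal contribution is $\psi_{T}(w_{i}^{t})\,\mathbb{E}[\psi_{T}(x_{i}^{t})^{2}]=\sigma_{\psi}^{2}\psi_{T}(w_{i}^{t})$, and the off-diagonal ones vanish provided $\mathbb{E}[\psi_{T}(x)]=0$ for $x\sim\mathcal{N}(0,1)$. This last fact I would justify by noting that the quantization levels are symmetric about zero, $v_{k}+v_{L-k}=-2\omega+L\Delta=0$, and likewise the thresholds satisfy $\theta_{k}=-\theta_{L+1-k}$, so the relaxed quantizer $\psi_{T}$ defined in Eq.~\eqref{eq:soft-quant} is an odd function of its argument, and the standard Gaussian integral against it vanishes. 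Combining these three contributions gives $\mathbb{E}_{t}[g_{i}^{t}]=d^{-1}(\sigma_{\psi}^{2}\psi_{T}(w_{i}^{t})-\kappa_{\psi}w_{i}^{\ast})$, and adding the deterministic regularizer term yields the claimed identity after rearrangement.

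I do not expect a genuine obstacle here: no large-$d$ concentration is invoked, and no cross-sample dependence appears because we condition on $\mathcal{F}_{t}$. The only place one must be slightly careful is the symmetry argument giving $\mathbb{E}[\psi_{T}(x)]=0$, since without it the $j\neq i$ pieces would contribute an $\mathcal{O}_{d}(1)$ mean-field correction $\psi_{T}(w_{j}^{t})\mathbb{E}[\psi_{T}(x_{j}^{t})]\mathbb{E}[\psi_{T}(x_{i}^{t})]$ per coordinate, which would break the clean drift form and require recentering the quantizer; under the symmetric lattice used in Eq.~\eqref{eq:hard-quant} this issue does not arise.
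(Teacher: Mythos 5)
Your proposal is correct and follows essentially the same route as the paper's proof: condition on the current iterate, use independence of the input coordinates, keep only the diagonal $j=i$ terms (which produce $\sigma_{\psi}^{2}\psi_{T}(w_{i}^{t})$ and $\kappa_{\psi}w_{i}^{\ast}$), and pass the $\mathcal{F}_{t}$-measurable regularizer through. The only difference is that you explicitly verify $\mathbb{E}[\psi_{T}(x)]=0$ via the symmetry $v_{k}=-v_{L-k}$, $\theta_{k}=-\theta_{L+1-k}$, which the paper simply invokes (it is established separately in its quantizer-moments proposition); this is a welcome but inessential elaboration.
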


\begin{proof}
    By the independence of $(x_{i}^{t})_{i\in [d]}$ and $\mab{E}[\psi(X)]=0$,
    \begin{align}
        \mab{E}_{t}[g_{i}^{t}] &= \frac{1}{d} \sum_{j} \mab{E}_{t}\ab[\ab(\psi_{T}(w_{j}^{t})\psi_{T}(x_{j}^{t}) - w_{j}^{\ast} x_{j}^{t})\psi_{T}(x_{i}^{t})] \\
        &= \frac{1}{d} \ab(\psi_{T}(w_{i}^{t})\mab{E}_{t}\ab[\psi_{T}(x_{i}^{t})^{2}] - w_{j}^{\ast} \mab{E}_{t}\ab[x_{i}^{t} \psi_{T}(x_{i}^{t})]) \\
        &=\frac{1}{d} \ab(\sigma_{\psi}^{2} \psi_{T}(w_{i}^{t}) - \kappa_{\psi} w_{i}^{\ast}).
    \end{align}
    Substitute in $\Delta w_{i}^{t} = - \eta (g_{i}^{t}+ \lambda \nicefrac{\psi(w_{i}^{t})}{d})$ to get the claim. 
\end{proof}

\begin{Lem}
    \label{lem:micro-conditional-variance}
    Under Assumption \ref{app-asm:dynamics}, 
    for every $i$ and $t$,
    \begin{equation}
        \mab{E}_{t}\ab[(\Delta w_{i}^{t})^{2}] = \frac{\eta^{2}}{d} \sigma_{\psi}^{2} \varepsilon_{g}(t) + \mac{O}_{d}(d^{-2}),~~\forall i \in [d], \forall t \in [T].
    \end{equation}
\end{Lem}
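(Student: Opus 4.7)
The plan is to expand $(\Delta w_i^t)^2 = \eta^2\bigl(g_i^t + \tfrac{\lambda}{d}\psi_T(w_i^t)\bigr)^2$ and identify which term provides the leading $\mathcal{O}(d^{-1})$ contribution. Using $|\psi_T|\le \omega$, the pure regularization term is bounded by $\eta^2\lambda^2\omega^2/d^2 = \mathcal{O}_d(d^{-2})$. For the cross term $\tfrac{2\eta^2\lambda}{d}\psi_T(w_i^t)\,\mathbb{E}_t[g_i^t]$, Lemma~\ref{lem:micro-conditional-mean} gives $\mathbb{E}_t[g_i^t]=\mathcal{O}_d(d^{-1})$, so this is also $\mathcal{O}_d(d^{-2})$. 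It therefore suffices to show $\mathbb{E}_t[(g_i^t)^2]=\tfrac{\sigma_\psi^2}{d}\varepsilon_g(t)+\mathcal{O}_d(d^{-2})$.

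To compute this, I would factor $g_i^t = \tfrac{\psi_T(x_i^t)}{\sqrt d}\,R_t$ with the residual $R_t \coloneqq \tfrac{1}{\sqrt d}\bigl(\B{\psi}_T(\B{w}^t)^\top \B{\psi}_T(\B{x}^t)-(\B{w}^*)^\top \B{x}^t\bigr)-\xi^t$, and then isolate the dependence on the single coordinate $x_i^t$ via a leave-one-out decomposition: define
\begin{equation}
R_t^{(i)} \coloneqq \frac{1}{\sqrt d}\sum_{j\ne i}\bigl(\psi_T(w_j^t)\psi_T(x_j^t) - w_j^{*} x_j^t\bigr) - \xi^t,
\end{equation}
so that $R_t = R_t^{(i)} + \tfrac{1}{\sqrt d}\bigl(\psi_T(w_i^t)\psi_T(x_i^t)-w_i^{*} x_i^t\bigr)$. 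Conditionally on $\mathcal{F}_t$, the random variable $R_t^{(i)}$ is independent of $x_i^t$.

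Squaring $R_t$ and multiplying by $\psi_T(x_i^t)^2/d$, the expansion produces three terms. The leading one is $\tfrac{1}{d}\mathbb{E}_t[\psi_T(x_i^t)^2]\,\mathbb{E}_t[(R_t^{(i)})^2]=\tfrac{\sigma_\psi^2}{d}\mathbb{E}_t[(R_t^{(i)})^2]$ by independence and the quantizer-moment identity. The remaining two terms carry an extra factor $d^{-1/2}$ or $d^{-1}$; using $|\psi_T|\le\omega$, the bounded moments of $x_i^t$ and $\xi^t$, and the uniform bound $\mathbb{E}_t[(R_t^{(i)})^2]=\mathcal{O}_d(1)$ (which follows from Proposition~\ref{prop:local-field-dist} and Assumption \ref{app-asm:dynamics}(3)), they contribute at most $\mathcal{O}_d(d^{-3/2})$ and $\mathcal{O}_d(d^{-2})$ respectively. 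Finally, reinserting the $i$-th summand changes the residual by $\mathcal{O}_d(d^{-1/2})$ in $L^2$, so $\mathbb{E}_t[(R_t^{(i)})^2] = \mathbb{E}_t[R_t^2] + \mathcal{O}_d(d^{-1}) = \varepsilon_g(t) + \mathcal{O}_d(d^{-1})$, the last equality being the definition of the generalization error as $\mathbb{E}_t[R_t^2]$.

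The main delicate point is keeping every error estimate \emph{uniform in} $i$ and $t$: the leave-one-out correction to $\mathbb{E}_t[(R_t^{(i)})^2]$ involves a cross term $\tfrac{2}{\sqrt d}\mathbb{E}_t[R_t^{(i)}(\psi_T(w_i^t)\psi_T(x_i^t)-w_i^{*} x_i^t)]$ and a diagonal term of order $\tfrac{1}{d}(\psi_T(w_i^t)^2+(w_i^{*})^2)$, and summing or bounding these requires the fourth-moment control of $w_i^{*}$ and $w_i^0$ in Assumption~\ref{app-asm:dynamics}(3) together with the boundedness of $\psi_T$. Once these bounds are in place, combining all three pieces yields
\begin{equation}
\mathbb{E}_t[(\Delta w_i^t)^2] = \eta^2\,\mathbb{E}_t[(g_i^t)^2] + \mathcal{O}_d(d^{-2}) = \frac{\eta^2}{d}\sigma_\psi^2\varepsilon_g(t) + \mathcal{O}_d(d^{-2}),
\end{equation}
which is the claim.
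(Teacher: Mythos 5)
Your proposal follows the same route as the paper: expand $(\Delta w_i^t)^2$, discard the two regularization terms as $\mathcal{O}_d(d^{-2})$ using $|\psi_T|\le\omega$ and Lemma~\ref{lem:micro-conditional-mean}, and reduce the claim to $\mathbb{E}_t[(g_i^t)^2]=\sigma_\psi^2\varepsilon_g(t)/d+\mathcal{O}_d(d^{-2})$. The paper asserts this last identity in a single line; your leave-one-out decomposition $R_t=R_t^{(i)}+d^{-1/2}(\psi_T(w_i^t)\psi_T(x_i^t)-w_i^{*}x_i^t)$ is the honest way to justify it, since $\psi_T(x_i^t)$ is correlated with the residual only through its $i$-th summand. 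So the proof is essentially correct and, if anything, more complete than the paper's.

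One imprecision is worth fixing. You bound the cross term $\tfrac{2}{d^{3/2}}\,\mathbb{E}_t\bigl[\psi_T(x_i^t)^2\,R_t^{(i)}\,(\psi_T(w_i^t)\psi_T(x_i^t)-w_i^{*}x_i^t)\bigr]$ only by $\mathcal{O}_d(d^{-3/2})$, which would leave a remainder of order $d^{-3/2}$, weaker than the lemma's stated $\mathcal{O}_d(d^{-2})$. In fact this term vanishes exactly: conditionally on the current state, $R_t^{(i)}$ is independent of $x_i^t$ and has zero conditional mean (using $\mathbb{E}[\psi_T(X)]=0$ for the symmetric quantizer and $\mathbb{E}[\xi^t]=0$), so the expectation factorizes into $\mathbb{E}_t[R_t^{(i)}]$ times a bounded quantity and equals zero. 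The same observation is already implicit in your final step: a perturbation of size $\mathcal{O}_d(d^{-1/2})$ in $L^2$ would generically shift $\mathbb{E}_t[(R_t^{(i)})^2]$ by $\mathcal{O}_d(d^{-1/2})$ through its cross term, and it is precisely the vanishing of that cross term that gives the $\mathcal{O}_d(d^{-1})$ correction you claim there. With this one-line adjustment the argument delivers the stated $\mathcal{O}_d(d^{-2})$ remainder, uniformly in $i$ and $t$ because $\psi_T$ is bounded and the coordinates $w_i^{*}$ are $\mathcal{O}_d(1)$ under Assumption~\ref{app-asm:dynamics}.
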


\begin{proof}
    Under Assumption \ref{app-asm:dynamics}, 
    for every $i$ and $t$,
    \begin{align}
        \mab{E}_{t}\ab[(\Delta w_{i}^{t})^{2}] &= \eta^{2}\mab{E}_{t} \ab[\ab(g_{i}^{t})^{2}] + \frac{\eta^{2}\lambda^{2}}{d^{2}}\psi_{T}(w_{i}^{t})^{2}+ \frac{2\eta^{2}\lambda}{d} \psi_{T}(w_{i}^{t}) \mab{E}_{t}\ab[g_{i}^{t}] \\
        &= \eta^{2}\mab{E}_{t} \ab[\ab(g_{i}^{t})^{2}] + \mac{O}_{d}(d^{-2}) \\
        &= \frac{1}{d} \sigma_{\psi}^{2}\ab(\rho+\sigma^{2} + \sigma_{\psi}^{2} q_{\psi}^{t} - 2 \kappa_{\psi} m_{\psi}^{t}) + \mac{O}_{d}(d^{-2}).
    \end{align}
\end{proof}
\begin{Lem}
    \label{lem:micro-third-moment}
    \begin{equation}
        \mab{E}_{t}\ab[|\Delta w_{i}^{t}|^{3}] = \mac{O}_{d}(d^{-3/2}),~~\forall i \in [d], \forall t \in [T].
    \end{equation}
\end{Lem}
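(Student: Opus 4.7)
The plan is to decompose $\Delta w_{i}^{t}$ into its gradient and regularization contributions and bound each in $L^{3}$ separately. Writing $\Delta w_{i}^{t} = -\eta\, g_{i}^{t} - \tfrac{\eta\lambda}{d}\psi_{T}(w_{i}^{t})$, the regularization term is deterministically of order $d^{-1}$ because $|\psi_{T}(\cdot)|\le \omega$ (its range is bounded by the quantization range), so its cube contributes only $\mac{O}_{d}(d^{-3})$. The stochastic part factors as $g_{i}^{t} = e^{t}\,\psi_{T}(x_{i}^{t})/\sqrt{d}$, where
\begin{equation*}
    e^{t} \coloneqq \tfrac{1}{\sqrt{d}}\bigl(\B{\psi}_{T}(\B{w}^{t})^{\top}\B{\psi}_{T}(\B{x}^{t}) - (\B{w}^{\ast})^{\top}\B{x}^{t}\bigr) - \xi^{t}
\end{equation*}
is the prediction residual. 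Using the uniform bound on $\psi_{T}$ once more, $|g_{i}^{t}|^{3}\le \omega^{3}|e^{t}|^{3}/d^{3/2}$, so the whole argument reduces to showing that $\mab{E}_{t}[|e^{t}|^{3}]$ is bounded uniformly in $d$ and $t$.

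The key step is therefore an $L^{3}$ estimate on the three summands of $e^{t}$, controlled via Minkowski. The label-signal term $B_{t} \coloneqq \tfrac{1}{\sqrt{d}}(\B{w}^{\ast})^{\top}\B{x}^{t}$ is, conditional on the filtration, a centered Gaussian of variance $\rho_{d}\to \rho$, hence $\mab{E}_{t}|B_{t}|^{3}=\rho_{d}^{3/2}\mab{E}|Z|^{3}=\mac{O}_{d}(1)$; the noise term contributes $\sigma^{3}\mab{E}|Z|^{3}=\mac{O}_{d}(1)$. For the nonlinear cross term $A_{t} \coloneqq \tfrac{1}{\sqrt{d}}\sum_{j}\psi_{T}(w_{j}^{t})\psi_{T}(x_{j}^{t})$, conditional on $\B{w}^{t}$ it is a sum of independent, bounded, mean-zero random variables (mean-zero because the symmetric thresholds $\theta_{L+1-k}=-\theta_{k}$ give $\mab{E}[\psi_{T}(X)]=0$ by Lemma \ref{lem:mean-smoothing}). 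I would apply Marcinkiewicz--Zygmund to obtain $\mab{E}_{t}|A_{t}|^{3}\le C\,\mab{E}_{t}\bigl(\tfrac{1}{d}\sum_{j}\psi_{T}(w_{j}^{t})^{2}\psi_{T}(x_{j}^{t})^{2}\bigr)^{3/2}$, and then use the deterministic bound $\tfrac{1}{d}\sum_{j}\psi_{T}(w_{j}^{t})^{2}\psi_{T}(x_{j}^{t})^{2}\le \omega^{4}$ to conclude $\mab{E}_{t}|A_{t}|^{3}\le C\omega^{6}$.

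Combining via Minkowski yields $\mab{E}_{t}[|e^{t}|^{3}]\le C$ with $C$ independent of $d$ and $t$, hence $\mab{E}_{t}[|g_{i}^{t}|^{3}]\le C\omega^{3}d^{-3/2}=\mac{O}_{d}(d^{-3/2})$. Applying $(|a|+|b|)^{3}\le 4(|a|^{3}+|b|^{3})$ with the trivial $\mac{O}_{d}(d^{-3})$ bound on the regularization cube then gives the claim. The main obstacle I anticipate is the clean control of the third moment of $A_{t}$: one must exploit both the boundedness of $\psi_{T}$ and the mean-zero property from the threshold symmetry, since otherwise a nonzero conditional mean of $A_{t}$ would need to be subtracted first, slightly complicating the inequality (though the asymptotic order would remain the same). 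Everything else reduces to routine Gaussian moment bounds.
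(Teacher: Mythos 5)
Your proof is correct, and it is, at the high level, the same factorization strategy the paper uses: split off the regularization term as a deterministic $\mathcal{O}_{d}(d^{-3})$ contribution, pull out $|\psi_{T}(x_{i}^{t})|/\sqrt{d}\le\omega/\sqrt{d}$ so that $|g_{i}^{t}|^{3}\le \omega^{3}|e^{t}|^{3}/d^{3/2}$, and then argue that $\mathbb{E}_{t}|e^{t}|^{3}=\mathcal{O}_{d}(1)$.

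Where you genuinely diverge --- and improve --- is in the control of $\mathbb{E}_{t}|e^{t}|^{3}$. The paper's proof bounds the two inner products by Cauchy--Schwarz, producing $\tfrac{1}{\sqrt{d}}\|\B{\psi}(\B{w}^{t})\|\,\|\B{\psi}(\B{x}^{t})\|+\tfrac{1}{\sqrt{d}}\|\B{w}^{\ast}\|\,\|\B{x}^{t}\|+|\xi^{t}|$ and then cubing. But both of those norm products are $\Theta(d)$, so each of the first two summands is $\Theta(\sqrt{d})$, not $\mathcal{O}_{d}(1)$; cubing gives $\Theta(d^{3/2})$, which exactly cancels the $d^{-3/2}$ prefactor and only proves $\mathbb{E}_{t}|g_{i}^{t}|^{3}=\mathcal{O}_{d}(1)$. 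Lemma \ref{lem:macro-bounded} does not recover the lost $\sqrt{d}$; it only controls the macroscopic states $q^{t},m^{t}$. In other words, as written, the paper's Cauchy--Schwarz step is too loose for the claimed $d^{-3/2}$ rate.

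Your argument closes that gap correctly: you treat $A_{t}=\tfrac{1}{\sqrt{d}}\B{\psi}_{T}(\B{w}^{t})^{\top}\B{\psi}_{T}(\B{x}^{t})$, conditionally on $\B{w}^{t}$, as a normalized sum of independent, bounded, mean-zero terms (the mean-zero property following from $\theta_{L+1-k}=-\theta_{k}$, so $\mathbb{E}\psi_{T}(X)=0$ by Lemma~\ref{lem:mean-smoothing}), and invoke Marcinkiewicz--Zygmund to get $\mathbb{E}_{t}|A_{t}|^{3}\le C\omega^{6}$ uniformly in $d$. Combined with the elementary Gaussian bounds for $B_{t}$ and $\xi^{t}$ and Minkowski, this gives the required $\mathbb{E}_{t}|e^{t}|^{3}\le C$. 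This is exactly the moment-level argument the paper's own Lemma \ref{eq:each-l-form} uses (via Wick plus H\"older interpolation between second and fourth moments), so your route is also more consistent with the rest of the appendix than the Cauchy--Schwarz step is. One minor remark: the $j=i$ term inside $A_{t}$ is not independent of the $\psi_{T}(x_{i}^{t})$ factor you pulled out, but since you bound that factor deterministically by $\omega$ before taking expectations, this correlation causes no trouble.
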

\begin{proof}
    From Eq.~\eqref{eq:def-delta-g} and the boundness $|\psi| \le \omega$, 
    \begin{align}
        \ab|g_{i}^{t}|^{3} &= \frac{\ab|\frac{1}{\sqrt{d}}
        (\B{\psi}_{T}(\B{w}^{t})^{\top}\B{\psi}_{T}(\B{x}^{t})-(\B{w}^{\ast})^{\top}\B{x}^{t}) - \xi^{t}|^{3} \ab|\psi(x_{i}^{t})|^{3}}{d^{3/2}} \\
        &\le \frac{\omega^{3}}{d^{3/2}} \ab|\frac{1}{\sqrt{d}}
        (\B{\psi}_{T}(\B{w}^{t})^{\top}\B{\psi}_{T}(\B{x}^{t})-(\B{w}^{\ast})^{\top}\B{x}^{t}) - \xi^{t}|^{3} \\
        &\le \frac{\omega^{3}}{d^{3/2}} \ab(\frac{1}{\sqrt{d}} \|\B{\psi}(\B{w}^{t})\|\|\B{\psi}(\B{x}^{t})\| + \frac{1}{\sqrt{d}}\|\B{w}^{\ast}\| \|\B{x}^{t}\|+ |\xi^{t}|)^{3}.
    \end{align}
    Using $(a+b+c)^{3} \le 27(a^{3}+b^{3}+c^{3})$ and Lemma \ref{lem:macro-bounded}, 
    \begin{equation}
        |g_{i}^{t}|^{3} \le \frac{C \omega^{3}}{d^{3/2}} = \mac{O}_{d}(d^{-3/2}).
    \end{equation}
    Finally,  
    \begin{equation}
        |\Delta w_{i}^{t}|^{3} \le C(|g_{i}^{t}|^{3} + d^{-3}) = \mac{O}_{d}(d^{-3/2}).
    \end{equation}
\end{proof}
The three Lemmas identify the drift and diffusion to leading order and show that the characteristic time of the process is $\nicefrac{1}{d}$.

\subsection{Decomposition on Test Functions.}
The convergence of empirical measures can be studied through their actions on test functions.
Let $\zeta(w^{\ast}, w)$ be a nonnegative, bounded and $C^{3}$ test function.
From the update equation Eq.~\eqref{app-eq:online-STE-linear}, a third-order Taylor expansion at $w_{i}^{t}$ yields
\begin{equation}
    \mab{E}_{\mu_{t+1}^{(d)}}\ab[\zeta] - \mab{E}_{\mu_{t}^{(d)}}\ab[\zeta] 
    = \frac{1}{d} \sum_{i=1}^{d} \partial_{w} \zeta(w^{t}_{i}, w^{\ast}_{i}) \Delta w_{i}^{t} + \frac{1}{2d} \sum_{i=1}^{d} \partial_{w}^{2} \zeta(w_{i}^{t}, w^{\ast}_{i}) (\Delta w_{i}^{t})^{2} + r_{t},
\end{equation}
with the following remainder
\begin{equation}
    r_{t} \coloneqq \frac{1}{6d} \sum_{i=1}^{d} \partial_{w}^{3}\zeta(c_{i}^{t}, w^{\ast}_{i})(\Delta w_{i}^{t})^{3},~~~c_{i}^{t} \in [w_{i}^{t}, w_{i}^{t+1}].
\end{equation}
Introducing two sequences
\begin{align}
    &v_{t} \coloneqq \mab{E}_{t}\ab[ \mab{E}_{\mu_{t+1}^{(d)}}\ab[\zeta] - \mab{E}_{\mu_{t}^{(d)}}\ab[\zeta] -r_{k} ], \\
    &m_{t} \coloneqq  \mab{E}_{\mu_{t+1}^{(d)}}\ab[f] - \mab{E}_{\mu_{t}^{(d)}}\ab[f] -r_{k} - v_{k},
\end{align}
satisfying
\begin{equation}
     \mab{E}_{\mu_{t+1}^{(d)}}\ab[f] - \mab{E}_{\mu_{t}^{(d)}}\ab[f] = v_{t} + m_{t} + r_{t}.
\end{equation}
Summing over $t \in [T]$ gives
\begin{equation}
     \mab{E}_{\mu_{t+1}^{(d)}}\ab[f] - \mab{E}_{\mu_{0}^{(d)}}\ab[f] = \sum_{l<t} v_{l} + \sum_{l < t} m_{l} + \sum_{l < t} r_{l} = V_{t} + M_{t} + R_{t},   
\end{equation}
where
\begin{equation}
    V_{t} \coloneqq \sum_{l<t} v_{l},~M_{t} \coloneqq \sum_{l<t} m_{l},~R_{t} \coloneqq \sum_{l<t} r_{l}. 
\end{equation}
We also set $V_{0}=M_{0}=R_{0}$. 
From Lemma \ref{lem:micro-conditional-mean} and \ref{lem:micro-conditional-variance}, 
\begin{equation}
    \label{eq:v-expansion}
    v_{t} = \frac{1}{d} \E_{\mu_{t}^{(d)}}\ab[\eta\ab(\kappa_{\psi} w^{\ast}-(\sigma_{\psi}^{2}+\lambda)\psi(w)) \partial_{w} \zeta] + \frac{\eta^{2}\sigma_{\psi}^{2}}{2d} \varepsilon_{g}(t) \E_{\mu_{t}^{(d)}} [\partial^{2}\zeta] + \mac{O}_{d}(d^{-3/2}). 
\end{equation}
Embedding the discrete sequence $V_{t}$ in continuous-time by factor of $d$, we define
\begin{equation}
    V(\tau) \coloneqq V_{\lfloor d \times \tau \rfloor},
\end{equation}
Similarly, we can define $M(\tau)$, $R(\tau)$, $\mu_{\tau}$ as the continuous-time rescaled versions of their discrete-time counterparts. Since $V(\tau)$ is piecewise-constant over intervals of length $\nicefrac{1}{d}$, the expression in Eq.~\eqref{eq:v-expansion} can be written as
\begin{equation}
    v_{t} = \int_{\nicefrac{t}{d}}^{\nicefrac{t+1}{d}} L(\mu_{s}^{(d)}) ds + \mac{O}(d^{-1/2}),
\end{equation}
where
\begin{equation}
    L(\mu_{s}) \coloneqq  \E_{\mu_{t}^{(d)}}\ab[\eta\ab(\kappa_{\psi} w^{\ast}-(\sigma_{\psi}^{2}+\lambda)\psi(w)) \partial_{w} \zeta ] + \frac{\eta^{2}\sigma_{\psi}^{2}}{2} \varepsilon_{g}(t) \E_{\mu_{t}^{(d)}} [\partial_{w}^{2} \zeta].
\end{equation}
It follows that
\begin{align}
    \mab{E}_{\mu_{t+1}^{(d)}}\ab[\zeta] - \mab{E}_{\mu_{0}^{(d)}}\ab[\zeta] = \int_{0}^{t} L(\mu_{s}^{(d)}) ds + M_{t} + R_{t}.
\end{align}
We note that $L(\mu_{s}^{(d)})$ contains exactly the last two terms on the right-hand side of the limiting PDE in Eq.~\eqref{eq:weak-pde}.
Formally, if the martingale term $M(\tau)$ and the higher-order term $R(\tau)$ converge to $0$ as $d \to +\infty$, we can then establish the scaling limit.
We refer readers to \cite{wang2017scaling, wang2019solvable} for a rigorous proof, following a standard recipe in the literature \cite{meleard1987propagation} and \cite{sznitman2006topics}

\section{PROOF OF CONCENTRATION ON ODE}
\label{sec:proof-macro}
In this section, we prove Theorem~\ref{theorem:concentration-to-ode} in the main text based on the following two Lemmas. 
(I) The first moment of the increment of the macroscopic stochastic process $\Psi^{t}$ converges. 
(II) The second moment of the increment vanishes.
Intuitively, these conditions imply that the leading-order behavior of the average increment is governed by the ODEs in Theorem~\ref{theorem:concentration-to-ode}, and that the stochastic component of the increment vanishes as the input dimension grows.

The proof is organized as follows.
First, we establish the two conditions in the next subsection.
Second, we show that these conditions are sufficient to prove Theorem~\ref{theorem:concentration-to-ode}.
Finally, we collect technical lemmas that are used repeatedly in the preceding arguments.
Our approach follows standard convergence arguments for stochastic processes \cite{Kushner2009, billingsley2013convergence, wang2018subspace}.

\subsection{PROOF OF THE ISOTROPY}\label{subsec:proof-isotropy}

We begin by stating and providing the isotropy reduction that expresses $m_{\psi}$, $q_{\psi}$, $r_{\psi}$ in terms of the macroscopic state, $m$, $q$ via $s=\sqrt{q-\nicefrac{m^{2}}{\rho}}$.
\begin{Prop}
Let $s^{t}=(q^{t}-\nicefrac{(m^{t})^{2}}{\rho})^{1/2}$.
Under Assumption \ref{asm:isotropy}, for any $t \in [T]$,
\begin{align}
    &m_{\psi}(m^{t}, s^{t}) = \mab{E}_{w^{\ast}}\ab[w^{\ast}\ab(- \omega + \Delta \sum_{i=1}^{L} \Phi\ab(\frac{\frac{m^{t} w^{\ast}}{\rho}-\theta_{i}}{s^{t}}))], \\
    &q_{\psi}(m^{t}, s^{t}) = \mab{E}_{w^{\ast}}\ab[v_{0}^{2} + \sum_{i=1}^{L} \ab(v_{i}^{2}-v_{i-1}^{2}) \Phi \ab(\frac{\frac{m^{t} w^{\ast}}{\rho}-\theta_{i}}{s^{t}})], \\
    &r_{\psi}(m^{t}, s^{t}) = \frac{m^{t}m_{\psi}}{\rho} + \Delta s^{t} \sum_{i=1}^{L} \mab{E}_{w^{\ast}}\ab[\phi\ab(\frac{\frac{m^{t}w^{\ast}}{\rho}-\theta_{i}}{s^{t}})].
\end{align}
\end{Prop}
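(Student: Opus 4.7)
The plan is to reduce the three nonlinear macroscopic statistics to Gaussian integrals that can be evaluated in closed form using the step-function representation of the quantizer,
\begin{equation*}
\psi(x) = -\omega + \Delta \sum_{k=1}^{L} \Theta(x-\theta_k),
\qquad
\psi(x)^{2} = v_{0}^{2} + \sum_{k=1}^{L}(v_{k}^{2}-v_{k-1}^{2})\,\Theta(x-\theta_k).
\end{equation*}
First I would decompose $\B{w}^{t} = (m^{t}/\rho)\,\B{w}^{\ast} + \B{w}_{\perp}^{t}$; the identity $\|\B{w}_{\perp}^{t}\|^{2} = d(q^{t}-(m^{t})^{2}/\rho) = d(s^{t})^{2}$ then fixes the scale. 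Under Assumption \ref{asm:isotropy}, $\B{w}_{\perp}^{t}$ is asymptotically $\mac{N}(\B{0}_{d},(s^{t})^{2}I_{d})$, so conditionally on $w_i^{\ast}=w^{\ast}$ each coordinate obeys $w_i^{t} \sim \mac{N}(m^{t}w^{\ast}/\rho,(s^{t})^{2})$. A law of large numbers over coordinates then converts the empirical averages $m_{\psi}^{t},q_{\psi}^{t},r_{\psi}^{t}$ into the Gaussian expectations $\mab{E}_{w^{\ast},Z}[w^{\ast}\psi(W)]$, $\mab{E}_{w^{\ast},Z}[\psi(W)^{2}]$, and $\mab{E}_{w^{\ast},Z}[W\psi(W)]$, where $W = m^{t}w^{\ast}/\rho + s^{t}Z$ and $Z\sim\mac{N}(0,1)$.

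Next I would evaluate each Gaussian integral by termwise expansion. For any threshold, $\mab{E}_{Z}[\Theta(\mu+sZ-\theta_k)] = \Phi((\mu-\theta_k)/s)$, which immediately yields the claimed formulas for $m_{\psi}$ and $q_{\psi}$ upon inserting the step-function representations of $\psi$ and $\psi^{2}$ and taking $\mab{E}_{w^\ast}$. For $r_{\psi}$ I would additionally use the mixed-moment identity
\begin{equation*}
\mab{E}_{Z}\!\left[(\mu+sZ)\,\Theta(\mu+sZ-\theta_k)\right]
= \mu\,\Phi\!\left(\frac{\mu-\theta_k}{s}\right) + s\,\phi\!\left(\frac{\mu-\theta_k}{s}\right),
\end{equation*}
which follows by a direct Gaussian integration by parts (equivalently, as the $T\to 0$ limit of Lemma \ref{lem:first-mixed-moment}). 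Summing over $k$, the $\mu\,\Phi(\cdot)$ contributions reassemble into $\mab{E}_{w^\ast}[(m^{t} w^\ast/\rho)\,\mab{E}_{Z}[\psi(W)\mid w^\ast]] = (m^{t}/\rho)\,m_{\psi}$, while the residual $s\,\phi(\cdot)$ contributions produce exactly the stated correction $\Delta\,s^{t}\sum_{k} \mab{E}_{w^\ast}[\phi((m^{t} w^\ast/\rho-\theta_{k})/s^{t})]$.

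The main obstacle I anticipate is the rigorous passage from the weak-convergence statement of Assumption \ref{asm:isotropy}, which concerns the full-vector law of $\B{w}_{\perp}^{t}$, to the coordinate-wise Gaussian marginal paired with $w_{i}^{\ast}$ that the law of large numbers step actually requires. The coordinates of $\B{w}_{\perp}^{t}$ are not independent: they satisfy the single linear constraint $(\B{w}^{\ast})^{\top}\B{w}_{\perp}^{t}=0$, so one must argue that this rank-one constraint is asymptotically negligible when tested against the bounded, coordinate-separable integrands $w^{\ast}\psi(w)$, $\psi(w)^{2}$, and $w\psi(w)$. Boundedness of $\psi$, continuity of the integrands outside a finite set under the Gaussian limit, and the bounded fourth-moment hypotheses on $\B{w}^{\ast}$ and $\B{w}^{0}$ from Assumption \ref{asm:learning-dynamics} together supply the uniform integrability required to justify passing limits inside the expectations, reducing the step to a standard exchangeable-array argument in the spirit of the references cited in the main text.
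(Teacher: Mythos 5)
Your proposal is correct, and the reduction step is identical to the paper's: decompose $\B{w}^{t}=(m^{t}/\rho)\B{w}^{\ast}+\B{w}_{\perp}^{t}$, invoke Assumption \ref{asm:isotropy} to get the conditional coordinate law $\mac{N}(m^{t}w^{\ast}/\rho,(s^{t})^{2})$, and pass the empirical averages to Gaussian expectations by a law of large numbers. Where you diverge is in evaluating those expectations. The paper keeps the soft quantizer $\psi_{T}$ throughout, invokes the Gaussian-smoothing identities (Lemmas \ref{lem:mean-smoothing}, \ref{lem:first-mixed-moment}, and \ref{lem:two-threshold-smoothing}), and only sends $T\to+0$ at the end; in particular its computation of $q_{\psi}$ expands $\psi_{T}(z)^{2}$ into a double sum of products $\Phi(\cdot)\Phi(\cdot)$ and must handle a bivariate normal CDF whose correlation tends to $1$. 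You instead work directly with the hard quantizer via the telescoping representations $\psi(x)=-\omega+\Delta\sum_{k}\Theta(x-\theta_{k})$ and $\psi(x)^{2}=v_{0}^{2}+\sum_{k}(v_{k}^{2}-v_{k-1}^{2})\Theta(x-\theta_{k})$, so that every term reduces to the elementary identities $\mab{E}_{Z}[\Theta(\mu+sZ-\theta_{k})]=\Phi((\mu-\theta_{k})/s)$ and $\mab{E}_{Z}[(\mu+sZ)\Theta(\mu+sZ-\theta_{k})]=\mu\Phi(\cdot)+s\phi(\cdot)$. This is a genuine simplification for $q_{\psi}$: the single sum over thresholds lands on the stated formula immediately, whereas the paper's route through $\Phi_{2}(\cdot,\cdot;\rho\to 1)$ requires an extra reassembly step that the paper leaves terse. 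The trade-off is that the paper's $\psi_{T}$-based computation also supplies the finite-$T$ formulas needed elsewhere (e.g.\ the relaxed quantizer appears in the SDE drift), while your hard-quantizer shortcut only covers the $T=0$ case stated in the proposition. Your closing remark about upgrading the full-vector weak convergence to the coordinate-wise LLN against the rank-one constraint $(\B{w}^{\ast})^{\top}\B{w}_{\perp}^{t}=0$ identifies a step the paper simply asserts; flagging the uniform-integrability ingredients there is a point in your favor rather than a gap.
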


\begin{proof}
    By Assumption~\ref{asm:isotropy}, each coordinate $i$ is expressed as follows:
    \begin{equation}
        w_{i}^{t}= \frac{m^{t}}{\rho} w^{\ast}_{i} + s^{t} \xi_{i},~~~\xi_{i} \sim_{\mathrm{i.i.d}} \mac{N}(0, 1),
    \end{equation}
    with $\{\xi_{i}\}$ independent $\{w_{i}^{\ast}\}$.
    For any bounded measurable $f: \R^{2} \to \R$, Assumption~\ref{asm:isotropy} implies
    \begin{equation}
        \label{eq:LLN-iso}
        \frac{1}{d} \sum_{i=1}^{d} f(w^{\ast}_{i}, w_{i}^{t}) \underset{\mathrm{a.s.}}{\implies} \E_{w^{\ast}} \E_{z}\ab[f(w^{\ast}, z)],~~z= \frac{m^{t}}{\rho} w^{\ast}+s^{t} \xi.
    \end{equation}
    We apply Eq.~\eqref{eq:LLN-iso} to the choice $f(u, v) \in \{u\psi_{T}(v), \psi_{T}(v), v \psi_{T}(v)\}$, given by
    \begin{align}
        m_{\psi}(m^{t}, s^{t}) &= \E_{w^{\ast}} \E_{z}\ab[w^{\ast}\psi_{T}\ab(z)], \\
        q_{\psi}(m^{t}, s^{t}) &=\E_{w^{\ast}}\E_{z}\ab[\psi_{T}\ab(z)^{2}], \\
        r_{\psi}(m^{t}, s^{t}) &= \E_{w^{\ast}} \E_{z} \ab[z\psi_{T}\ab(z)],
    \end{align}
    Then, taking the limit $T \to +0$ proves the lemma.
    First, 
    \begin{align}
        m_{\psi}(m^{t}, s^{t}) &= \E_{w^{\ast}}\ab[ \E_{\xi}\ab[w^{\ast}\psi_{T}\ab(\frac{m^{t}}{\rho} w^{\ast}+s^{t} \xi)]] \\
        &= \E_{w^{\ast}}\ab[w^{\ast} \ab(-\omega + \Delta \sum_{k=1}^{L} \E_{z}\ab[\Phi\ab(\frac{z-\theta_{k}}{T})])] \\
        &=\E_{w^{\ast}}w^{\ast} \E_{z} \ab(-\omega + \Delta \sum_{k=1}^{L} \E_{\xi}\ab[\Phi\ab(\frac{z-\theta_{k}}{T})]) \\
        &= \E_{w^{\ast}}\ab[w^{\ast}\ab(-\omega + \Delta \sum_{k=1}^{L} \Phi\ab(\frac{\nicefrac{mw^{\ast}}{\rho} -\theta_{k}}{\sqrt{(s^{t})^{2}+T^{2}}}))]
    \end{align}
    In the last step, we use Lemma \ref{lem:mean-smoothing}.
    \begin{align}
        \E\ab[z\psi_{T}(z)] &= \E_{w^{\ast}}\E_{z}\ab[z\ab(-\omega + \Delta \sum_{k=1}^{L} \E_{z}\ab[\Phi\ab(\frac{z-\theta_{k}}{T})])] \\
        &= \E_{w^{\ast}}\ab[- \frac{m w^{\ast}}{\rho} \omega + \Delta \sum_{k=1}^{L} \E_{z}\ab[z \Phi\ab(\frac{z-\theta_{k}}{T})]] \\
        &= \frac{m^{t}}{\rho} m_{\psi}(m^{t}, s^{t}) + \Delta \frac{(s^{t})^{2}}{\sqrt{(s^{t})^{2}+T^{2}}} \sum_{k=1}^{L}\E_{w^{\ast}}\ab[\phi\ab(\frac{\nicefrac{m^{t}w^{\ast}}{\rho} - \theta_{k}}{\sqrt{(s^{t})^{2}+T^{2}}})   ].
    \end{align}
    The last equality follows from Lemma~\ref{lem:first-mixed-moment}.
    $q_{\psi}(m^{t}, s^{t})$ is also expressed as follows:
    \begin{align}
        \E_{w^{\ast}}\E_{z}\ab[\psi_{T}(z)^{2}] &= \omega^{2} - 2 \omega \Delta \sum_{k=1}^{L} \E_{w^{\ast}} \E_{z}\ab[\Phi\ab(\frac{z-\theta_{k}}{T})] + \Delta^{2} \sum_{ij} \E_{w^{\ast}} \E_{z} \ab[\Phi\ab(\frac{z-\theta_{i}}{T}) \Phi\ab(\frac{z-\theta_{j}}{T})] \\
        &= \E_{w^{\ast}}\ab[\omega^{2}- 2 \omega \Delta \sum_{k} \Phi\ab(\frac{\nicefrac{mw^{\ast}}{\rho}-\theta_{k}}{\sqrt{(s^{t})^{2}+T^{2}}})] \\
        &+ \E_{w^{\ast}} \ab[\Delta^{2} \sum_{ij} \E_{w^{\ast}}\ab[\Phi_{2}\ab(\frac{\nicefrac{m^{t}w^{\ast}}{\rho}-\theta_{i}}{\sqrt{(s^{t})^{2}+T^{2}}}, \frac{\nicefrac{m^{t}w^{\ast}}{\rho}-\theta_{j}}{\sqrt{(s^{t})^{2}+T^{2}}}; \frac{(s^{t})^{2}}{(s^{t})^{2}+T^{2}}   )]],
    \end{align}
    As $T \to +0$, we have
    \begin{equation}
        \sqrt{(s^{t})^{2}+T^{2}} \to s^{t},~~~\frac{(s^{t})^{2}}{(s^{t})^{2}+T^{2}} \to 1,
    \end{equation}
    thus,
    \begin{equation}
        \Phi\ab(\max \ab\{\frac{\nicefrac{m^{t}w^{\ast}}{\rho}-\theta_{i}}{s^{t}}, \frac{\nicefrac{m^{t}w^{\ast}}{\rho}-\theta_{j}}{s^{t}}\}),
    \end{equation}
    which completes the proof.
\end{proof}

\subsection{Convergence of First Moments of Increment to ODEs}

We first review the training algorithm of STE which characterizes a Markov process $\B{w}^{t}$. 
The specific update rule is given by
\begin{equation}
\label{app-eq:gradient}
    \B{w}^{t+1}=\B{w}^{t} - \eta \ab[
    \ab(\frac{1}{\sqrt{d}} \B{\psi}(\B{w})^{\top} \B{\psi}(\B{x})- \frac{1}{\sqrt d}\B{x}_{\mu}^{\top} \B{w}^{\ast} - \xi_\mu)  \frac{1}{\sqrt{d}}\B{\psi}(\B{x}^{t})+ \frac{\lambda}{d} \B{\psi}(\B{w}^{t})],
\end{equation}
The following lemma holds for the macroscopic state $\Psi^{t}$ characterized by the above updates.
\begin{Lem}
\label{lem:first-incre-macro}
    Under Assumptions \ref{app-asm:dynamics}, for all $t < dT$ the following inequality holds:
    \begin{equation}
    \label{eq:first-moment}
        \mab{E}\left\|\mab{E}_{t} \Psi^{t+1} - \Psi^{t} - \frac{1}{d} F(\Psi^{t})\right\| \le \frac{C}{d^{3/2}}.
    \end{equation}
\end{Lem}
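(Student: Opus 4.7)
The plan is to compute $\mab{E}_{t}[\Psi^{t+1}]-\Psi^{t}$ directly in terms of the random empirical order parameters $m_{\psi}^{t}, q_{\psi}^{t}, r_{\psi}^{t}$, to replace them by their deterministic functional forms using the isotropy reduction from Assumption~\ref{asm:isotropy}, and to bound the remaining stochastic fluctuation in the quadratic piece of $\|\B{w}^{t+1}-\B{w}^{t}\|^{2}$ via a Cauchy-Schwarz estimate. The two error sources that must be pushed below $\mac{O}_{d}(d^{-3/2})$ are the replacement of empirical by functional order parameters, and the fluctuation of $\|\B{\psi}(\B{x}^{t})\|^{2}/d$ around $\sigma_{\psi}^{2}$.

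First I would substitute the update~\eqref{app-eq:gradient} into $m^{t+1}-m^{t}=(\B{w}^{\ast})^{\top}(\B{w}^{t+1}-\B{w}^{t})/d$ and rewrite the inner products in terms of the four local fields $A, B, C, D$ of Proposition~\ref{prop:local-field-dist}. Using $\mab{E}[\psi(X)]=0$, $\mab{E}[X\psi(X)]=\kappa_{\psi}$, $\mab{E}[\psi(X)^{2}]=\sigma_{\psi}^{2}$ and the coordinate-wise independence of $\B{x}^{t}$, a direct calculation gives $\mab{E}_{t}[BD]=\sigma_{\psi}^{2} m_{\psi}^{t}$ and $\mab{E}_{t}[AB]=\kappa_{\psi}\rho$, so that
\begin{equation*}
    \mab{E}_{t}[m^{t+1}-m^{t}] = -\frac{\eta}{d}\ab[(\sigma_{\psi}^{2}+\lambda) m_{\psi}^{t} - \kappa_{\psi}\rho]
\end{equation*}
exactly. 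The analogous identities $\mab{E}_{t}[CD]=\sigma_{\psi}^{2} r_{\psi}^{t}$ and $\mab{E}_{t}[AC]=\kappa_{\psi} m^{t}$ yield the linear-in-$\Delta$ piece of $\mab{E}_{t}[q^{t+1}-q^{t}]$ exactly as $-(2\eta/d)[(\sigma_{\psi}^{2}+\lambda) r_{\psi}^{t}-\kappa_{\psi} m^{t}]$. Under Assumption~\ref{asm:isotropy}, each of $m_{\psi}^{t}, q_{\psi}^{t}, r_{\psi}^{t}$ concentrates in $L^{1}$ around the deterministic functions of $(m^{t}, s^{t})$ at rate $\mac{O}_{d}(d^{-1/2})$ via the law of large numbers for bounded summands, and multiplication by $\eta/d$ yields the required $\mac{O}_{d}(d^{-3/2})$ contribution.

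The main obstacle is the quadratic contribution $\|\B{w}^{t+1}-\B{w}^{t}\|^{2}/d$ to $q^{t+1}-q^{t}$. Expanding it, the mixed $\lambda$-cross and $\lambda^{2}$ self-terms are deterministically $\mac{O}_{d}(d^{-2})$; the only leading term is $\eta^{2}(D-A-\xi^{t})^{2}\|\B{\psi}(\B{x}^{t})\|^{2}/d^{2}$. Writing $\|\B{\psi}(\B{x}^{t})\|^{2}/d=\sigma_{\psi}^{2}+\varepsilon^{t}$ with $\mab{E}_{t}[(\varepsilon^{t})^{2}]=\mac{O}_{d}(d^{-1})$ by the LLN for $\{\psi(x_{i}^{t})^{2}\}$, the leading part of the coupling recovers $\eta^{2}\sigma_{\psi}^{2}\varepsilon_{g}^{t}/d$, and the remainder is controlled by Cauchy-Schwarz as
\begin{equation*}
    \frac{1}{d}|\mab{E}_{t}[(D-A-\xi^{t})^{2}\varepsilon^{t}]| \le \frac{1}{d}(\mab{E}_{t}[(D-A-\xi^{t})^{4}])^{1/2}(\mab{E}_{t}[(\varepsilon^{t})^{2}])^{1/2}.
\end{equation*}
The technical heart is a uniform-in-$d$ bound $\mab{E}_{t}[(D-A-\xi^{t})^{4}]=\mac{O}_{d}(1)$, which I would obtain via Rosenthal's inequality applied to $D=d^{-1/2}\sum_{i}\psi(w_{i}^{t})\psi(x_{i}^{t})$ (whose summands are bounded by $\omega^{2}$), the Gaussianity of $A$ with fixed variance $\rho$, and the finite variance of $\xi^{t}$. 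Combining these estimates gives the claimed $Cd^{-3/2}$ rate.
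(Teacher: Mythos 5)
The proposal follows essentially the same route as the paper: compute the one-step conditional drift of $m^{t}$ and $q^{t}$ by reducing the inner products to the local fields $A,B,C,D$ of Proposition~\ref{prop:local-field-dist} and reading off their second moments from the covariance matrix $\Sigma$, which gives $\mab{E}_{t}[m^{t+1}-m^{t}]=-\tfrac{\eta}{d}\bigl[(\sigma_{\psi}^{2}+\lambda)m_{\psi}^{t}-\kappa_{\psi}\rho\bigr]$ and the analogous identity for the linear-in-update part of $q^{t+1}-q^{t}$ exactly at finite $d$. The treatment of the quadratic term $\|\B{w}^{t+1}-\B{w}^{t}\|^{2}/d$ is also structurally the same. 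You reach the same Lemma.

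Where you go beyond the paper's writeup, and in a useful way, is in making explicit two fluctuation controls that the paper elides. First, the paper stops at expressions involving the \emph{empirical} order parameters $m_{\psi}^{t}, r_{\psi}^{t}, q_{\psi}^{t}$ and asserts these equal $\tfrac{1}{d}F(\Psi^{t})$ outright, but $F$ in Theorem~\ref{theorem:concentration-to-ode} is a deterministic function of $\Psi=(m,q)$ built from the isotropy-reduced forms $m_{\psi}(m,s)$, etc.; the identification $m_{\psi}^{t}\approx m_{\psi}(m^{t},s^{t})$ requires a law-of-large-numbers step, which you correctly name and estimate at $\mac{O}_{d}(d^{-1/2})$, so that the contribution to the bound is $\mac{O}_{d}(d^{-3/2})$ as required. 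Second, the paper replaces $\|\B{\psi}(\B{x}^{t})\|^{2}/d$ by $\sigma_{\psi}^{2}$ inside $\mab{E}_{t}\bigl[(D-A-\xi)^{2}\|\B{\psi}(\B{x}^{t})\|^{2}/d\bigr]$ and just declares the error $\mac{O}_{d}(d^{-2})$. Your Cauchy--Schwarz plus Rosenthal bound is a clean and self-contained justification, giving $\mac{O}_{d}(d^{-3/2})$, which is slightly looser than the paper's stated rate but sufficient for the lemma. (The sharper $\mac{O}_{d}(d^{-2})$ is obtainable by noting that the covariance between $(D-A-\xi)^{2}$ and $\|\B{\psi}(\B{x}^{t})\|^{2}/d$ only receives contributions from coincident coordinates, but you do not need it.)

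One caveat you share with the paper rather than fix: Assumption~\ref{asm:isotropy} is stated as weak convergence of the law of $\B{w}^{t}_{\perp}$, which does not by itself give an $L^{1}$ concentration rate of $\mac{O}_{d}(d^{-1/2})$ for $m_{\psi}^{t}$ and friends. If you want the LLN step to be fully rigorous you should state the stronger, quantitative form of isotropy you are implicitly using (e.g.\ that the orthogonal component is distributed as i.i.d.\ centered Gaussians with the correct variance for each finite $d$, not merely in the limit). This is worth flagging in a revision, but it does not change the verdict: your argument matches the paper's and supplies justification for steps the paper treats as immediate.
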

\begin{proof}
    Recall that $\Psi^{t}=(m^{t}, q^{t}) \in \mathbb{R}^{2}$ is composed of two scalar. 
    $\|\Psi^{t} \| \le |\Psi^{t}|$ holds. 
    Thus, the following inequality is sufficient to prove Eq.~\eqref{eq:first-moment}: 
    \begin{equation}
     \label{eq:first-moment-l1norm}
        \mab{E}\left|\mab{E}_{t} \Psi^{t+1}_{i} - \Psi_{i}^{t} - \frac{1}{d} F_{i}(\Psi)\right | \le \frac{C}{d^{3/2}},
    \end{equation}
    where $\Psi_{i}^{t}$ is $i$ element of  $\Psi^{t}$. 
    Subsequently, we show that the above inequality holds for each element of $\Psi^{t}$.
    
    For $m^{t}$, the following stronger result is obtained:
    \begin{equation}
    \label{eq:ave-increment-m}
        \E_{t} m^{t+1} - m^{t} + \frac{\eta}{d} F_{1}(\Psi^{t})=0,
    \end{equation}
    where $F_{1}(\Psi)$ is defined in Eq.~\ref{eq:m-dynamics}. 
    This is directly proved by multiplying $\nicefrac{(\B{w}^{\ast})^{\top}}{d}$ from the left on both sides of Eq.~\ref{app-eq:gradient}, which yields
    \begin{equation}
        m^{t+1}=m^{t} - \frac{\eta}{d} \ab[
        \ab(D^{t}- A^{t} - \xi_{\mu}) B^{t}+ m^{t}_{\psi} \lambda],
    \end{equation}
    Then, taking the conditional expectation $\mab{E}_{t}$ on both sides  and using Lemma \ref{prop:local-field-dist}, we reach Eq.~\ref{eq:ave-increment-m}:
    \begin{align}
        \E_{t} m^{t+1} &= m^{t} - \frac{\eta}{d} \E_{t} \ab[
        \ab(D^{t}- A^{t} - \xi_{\mu}) B^{t}+ m^{t}_{\psi} \lambda], \\
        &= m^{t} - \frac{\eta}{d}  \ab(\sigma^{2}_{\psi} m^{t} - \kappa_{\psi} \rho + m_{\psi}^{t} \lambda).
    \end{align}
    Next, for $q^{t}$, the following inequality holds:
    \begin{equation}
    \label{eq:ave-increment-Q}
        \mab{E}_{t} q^{t+1} - q^{t} - \frac{1}{d} F_{2}(\Psi^{t}) \le \frac{C}{d^{\frac{3}{2}}},
    \end{equation}
    where $F_{2}$ is defined in Eq.~\eqref{eq:q-dynamics}. 
    This is proved by evaluating $q^{t}=\nicefrac{(\B{w}^{t+1})^{\top} \B{w}^{t+1}}{d}$ as follows:
    \begin{align}
        q^{t+1} & =\frac{1}{d}(\B{w}^{t+1})^{\top} \B{w}^{t+1} \\
        &= \frac{1}{d} \ab(\B{w}^{t} - \eta \ab[\B{g}^{t} + \frac{\lambda}{d} \B{\psi}(\B{w})])^{\top}\ab(\B{w}^{t} - \eta \ab[\B{g}^{t} + \frac{\lambda}{d} \B{\psi}(\B{w})]) \\
        &= q^{t} -\frac{2\eta}{d} \B{w}^{\top} \ab[\B{g}^{t} + \frac{\lambda}{d} \B{\psi}(\B{w})] + \frac{\eta^{2}}{d} \ab\|\B{g}^{t} + \frac{\lambda}{d} \B{\psi}(\B{w}) \|^{2}.
    \end{align}
    Then taking the conditional expectation $\E_{t}$ and using Proposition \ref{prop:local-field-dist} and Lemma \ref{lem:macro-bounded}, 
    \begin{align}
        \E_{t} q^{t+1} &= q^{t} -\frac{2\eta}{d} \E_{t} \B{w}^{\top} \B{g}^{t} - 2\eta \lambda r_{\psi}^{t} +  \E_{t} \frac{\eta^{2}}{d} \ab\|\B{g}^{t} \|^{2}  +\mac{O}(d^{-2}) \\
        &= q^{t} - \frac{2\eta}{d} \E_{t} \ab((D^{t}-A^{t})C^{t}) -2 \eta \lambda m^{t} +  \frac{\eta^{2}}{d} \E_{t}\ab\|(D^{t}-A^{t}-\xi_{t}) \frac{1}{\sqrt{d}}\B{\psi}(\B{x}^{t})\|^{2} +\mac{O}(d^{-2})\\
        &= q^{t} - 2\eta \ab((\sigma^{2}_{\psi}+\lambda)r_{\psi}^{t} - \kappa_{\psi} m_{t}) + \eta^{2} \sigma_{\psi}^{2} \varepsilon_{g}^{t} +\mac{O}(d^{-2}).
    \end{align}
    Combining Eq.~\eqref{eq:ave-increment-m} and Eq.~\eqref{eq:ave-increment-Q}, Eq.~\eqref{eq:first-moment-l1norm} is proven, which concludes the whole proof.
\end{proof}

\subsection{Convergence of Second Moments of Increments}

We now proceed to bound the second-order moments of the increments.
\begin{Lem}
\label{lem:second-incre-macro}
    Under Assumption \ref{app-asm:dynamics}, for all $t < d \times T$ the following inequality holds: 
    \begin{equation}
    \label{eq:second-incre-macro}
        \mab{E}\|\Psi^{t+1} - \mab{E}_{t}\Psi^{t+1}\|^{2} \le \frac{C}{d^{2}}.
    \end{equation}
\end{Lem}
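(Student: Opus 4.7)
The plan is to bound the two entries of $\Psi^{t+1}-\mathbb{E}_t\Psi^{t+1}$ separately, exploiting the fact that every stochastic contribution to the SGD update in Eq.~\eqref{app-eq:gradient} carries an explicit factor of $1/d$ from the learning-rate normalization. Since $\|\Psi^{t+1}-\mathbb{E}_t\Psi^{t+1}\|^2 = (m^{t+1}-\mathbb{E}_t m^{t+1})^2 + (q^{t+1}-\mathbb{E}_t q^{t+1})^2$, establishing an $O(1/d^2)$ conditional variance for each coordinate suffices, and taking the outer expectation completes the proof.

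For the first coordinate, projecting the update onto $\mathbf{w}^\ast/d$ yields
\begin{equation}
m^{t+1}-m^t = -\frac{\eta}{d}\bigl[(D^t-A^t-\xi^t)B^t + \lambda\, m_\psi^t\bigr],
\end{equation}
where only the first summand is random given $\mathcal{F}_t$. Hence
\begin{equation}
\mathrm{Var}_t(m^{t+1}) \;\le\; \frac{\eta^2}{d^2}\,\mathbb{E}_t\bigl[(D^t-A^t-\xi^t)^2(B^t)^2\bigr].
\end{equation}
By Cauchy--Schwarz together with Proposition~\ref{prop:local-field-dist}, this expectation is controlled by joint fourth moments of the asymptotically Gaussian local fields $(A^t,B^t,C^t,D^t)$, whose covariances are polynomial functions of the macroscopic state. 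Invoking Lemma~\ref{lem:macro-bounded} (uniform-in-$d$ boundedness of the relevant order parameters) and the $\sigma^2$-boundedness of $\xi^t$ gives an $O(1)$ bound, so $\mathrm{Var}_t(m^{t+1}) \le C/d^2$.

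For the second coordinate, expanding $\|\mathbf{w}^{t+1}\|^2/d$ produces
\begin{equation}
q^{t+1}-q^t = -\frac{2\eta}{d}\,\mathbf{w}^{\top}\mathbf{g}^t - \frac{2\eta\lambda}{d^2}\,\boldsymbol{\psi}(\mathbf{w})^{\top}\mathbf{w} + \frac{\eta^2}{d}\Bigl\|\mathbf{g}^t+\frac{\lambda}{d}\boldsymbol{\psi}(\mathbf{w})\Bigr\|^2.
\end{equation}
The middle term is $\mathcal{F}_t$-measurable and contributes no variance. The linear term equals $-\frac{2\eta}{d}(D^t-A^t-\xi^t)C^t$, whose variance is bounded by $C/d^2$ exactly as in the $m$-coordinate. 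The delicate piece is the quadratic term: writing $\|\mathbf{g}^t\|^2 = (D^t-A^t-\xi^t)^2\cdot\|\boldsymbol{\psi}(\mathbf{x}^t)\|^2/d$, I decompose $\|\boldsymbol{\psi}(\mathbf{x}^t)\|^2/d = \sigma_\psi^2 + \Delta_d$, where $\Delta_d$ is a normalized sum of i.i.d.\ bounded centered variables (since $|\psi|\le\omega$) with $\mathbb{E}[\Delta_d^2] = O(1/d)$. The deterministic-multiplier contribution $\frac{\eta^2\sigma_\psi^2}{d}(D^t-A^t-\xi^t)^2$ has variance $O(d^{-2})$ from fourth-moment bounds on the local fields, while the fluctuating remainder $\frac{\eta^2}{d}(D^t-A^t-\xi^t)^2\Delta_d$ has variance bounded via Cauchy--Schwarz by $\frac{\eta^4}{d^2}\sqrt{\mathbb{E}_t[(D^t-A^t-\xi^t)^8]\,\mathbb{E}_t[\Delta_d^2]} = O(d^{-3})$. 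The regularization cross terms involving $\frac{\lambda}{d}\boldsymbol{\psi}(\mathbf{w})$ carry additional factors of $1/d$ and are strictly smaller order.

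The main obstacle is the quadratic term in the $q$-update: although a factor of $1/d$ is manifest, one must additionally show that its \emph{fluctuation} around the conditional mean is suppressed by a further factor of $1/\sqrt{d}$, so that the resulting variance is $O(d^{-3})$ rather than the naive $O(d^{-2})$. This requires decomposing $\|\boldsymbol{\psi}(\mathbf{x}^t)\|^2/d$ around its deterministic limit $\sigma_\psi^2$ and combining a LLN-type second-moment estimate for $\Delta_d$ with uniform eighth-moment control on the local fields. The boundedness $|\psi|\le\omega$ and Assumption~\ref{app-asm:dynamics}(3) supply the moment bounds needed to execute this step. Summing the coordinate-wise variance bounds yields $\mathbb{E}_t\|\Psi^{t+1}-\mathbb{E}_t\Psi^{t+1}\|^2 \le C/d^2$, and taking outer expectation closes the proof.
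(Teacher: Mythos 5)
Your proof is correct and follows essentially the same route as the paper: both reduce the claim to the observation that each coordinate of the one-step increment carries an explicit $1/d$ prefactor multiplying random quantities whose moments are controlled via the local fields (Proposition~\ref{prop:local-field-dist}) and the uniform boundedness of the order parameters (Lemma~\ref{lem:macro-bounded}), the paper doing this by bounding $\mab{E}\|\Psi^{t+1}-\Psi^{t}\|^{2}$ and you by bounding the conditional variances coordinate-wise, which is equivalent. One misstatement worth flagging: your claim that the quadratic term's fluctuation \emph{must} be improved to $O(d^{-3})$ is wrong in framing --- the target is $C/d^{2}$, so the naive $O(d^{-2})$ bound (which you in any case retain for the ``deterministic-multiplier'' piece) already suffices, and the $\Delta_{d}$ decomposition with eighth-moment control can be dropped entirely.
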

\begin{proof}
    Note that 
    \begin{align*}
        \mab{E} \|\Psi^{t+1}-\mab{E}_{t} \Psi^{t+1}\|^{2} 
        &= \mab{E} \|\Psi^{t+1}-\Psi^{t}-\mab{E}_{t}(\Psi^{t+1}-\Psi^{t})\|^{2}, \\
        &\le \mab{E} \|\Psi^{t+1}-\Psi^{t}\|^{2} + \mab{E}\|\mab{E}_{t} \Psi^{t+1} - \Psi^{t}\|^2, \\
        &\le \mab{E} \|\Psi^{t+1}-\Psi^{t}\|^{2} + \mab{E} \left\|\frac{1}{d} F(\Psi^{t}) + \frac{C}{d^{\frac{3}{2}}} \right\|^{2}, \\
        &\le \mab{E} \|\Psi^{t+1}-\Psi^{t}\|^{2} + \frac{C}{d^{2}}.
    \end{align*}
    Here the third line is due to Lemma \ref{lem:first-incre-macro}. Thus, it is sufficient to prove that 
    \begin{equation}
        \label{eq:second-moments-psi}
        \mab{E}\|\Psi^{t+1}-\Psi^{t}\|^{2} \le \frac{C}{d^{2}}.
    \end{equation}
    In the following, the second moment of each element in $\Psi^{t+1}-\Psi^{t}$ will be bounded.
    For $m^{t}$, Proposition \ref{prop:local-field-dist} and Lemma \ref{lem:macro-bounded},
    \begin{equation}
        \label{eq:second-moments-m}
        \E (m^{t+1}-m^{t})^{2} = \frac{\eta^{2}}{d^{2}} \E \ab(
        \ab(D^{t}- A^{t} - \xi_{\mu}) B^{t}+ m^{t}_{\psi} \lambda)^{2} = \mac{O}_{d}(d^{-2}).
    \end{equation}
    For $q^{t}$, Proposition \ref{prop:local-field-dist} and Lemma \ref{lem:macro-bounded},
    \begin{align}
        \label{eq:second-moments-q}
        \E(q^{t+1}-q^{t})^{2} &= \E\ab(-\frac{2\eta}{d} \B{w}^{\top} \ab[\B{g}^{t} + \frac{\lambda}{d} \B{\psi}(\B{w})] + \frac{\eta^{2}}{d} \ab\|\B{g}^{t} + \frac{\lambda}{d} \B{\psi}(\B{w}) \|^{2})^{2} \\ 
        &=
        \E\ab(-\frac{2\eta}{d} \ab[\B{w}^{\top}\B{g}^{t} +  r^{t}_{\psi}] + \frac{\eta^{2}}{d} \ab\|\B{g}^{t} \|^{2})^{2} + \mac{O}_{d}(d^{-2}) = \mac{O}_{d}(d^{-2})
    \end{align}
    Combining these results Eq.~\eqref{eq:second-moments-m} and Eq.~\eqref{eq:second-moments-q}, Eq.~\eqref{eq:second-moments-psi} is proven, which concludes the whole proof. 
\end{proof}

\subsection{Proof of Theorem~\ref{theorem:concentration-to-ode}}
In this section, we complete the remaining proof of Theorem~\ref{theorem:concentration-to-ode} from Lemmas~\ref{lem:first-incre-macro} and \ref{lem:second-incre-macro} by employing a coupling argument, following the approach of \cite{ichikawa2024learning,wang2019solvable}.

\begin{proof}
    The proof uses the coupling trick. In particular, we first define a stochastic process $\mac{B}^{t}$ that is coupled with the process $\Psi^{t}$ as 
    \begin{equation}
        \mac{B}^{t+1} = \mac{B}^{t} + \frac{1}{d} F(\mac{B}^{t}) + \Psi^{t+1} - \mab{E}_{t} \Psi^{t+1}
    \end{equation}
    with the deterministic initial condition $\mac{B}^{0} = \bar{\Psi}^{0}$. 
    For this stochastic process $\mac{B}^{t}$, the following inequality holds for all $t \le d \times T$:
    \begin{equation}
        \label{eq:B-bound-coupling}
        \mab{E}\|\mac{B}^{t}- \Psi^{t}\| \le \frac{C}{d^{1/2}}.
    \end{equation}
    This inequality is proved as follows.
    \begin{equation*}
        \mab{E}\|\mac{B}^{t+1}-\Psi^{t+1}\| \le \mab{E}\|\mac{B}^{t} - \Psi^{t}\| + \frac{1}{d} \mab{E}\|F(\mac{B}^{t}) - F(\Psi^{t})\| + \mab{E} \ab\|\mab{E}_{t} \Psi^{t+1} -\Psi^{t} - \frac{1}{d} F(\Psi^{t})\|.
    \end{equation*}
    From Lemma \ref{lem:first-incre-macro} and Lemma \ref{lem:lipschiz} in subsequent Sec.~\ref{subsec:extra-proofs}, one can get
    \begin{align*}
        \mab{E} \|\mac{B}^{t+1}-\Psi^{t+1}\| 
        &\le \mab{E}\|\mac{B}^{t}-\Psi^{t}\| + L \|\mac{B}^{t}-\Psi^{t}\| + C d^{-\frac{3}{2}} \\
        &\le (1+L d^{-1}) \|\mac{B}^{t}-\Psi^{t}\| + C d^{-\frac{3}{2}}. 
    \end{align*}
    Applying this bound iteratively, for all $t \le dT$, one can expand as follows:
    \begin{equation}
        \mab{E}\|\mac{B}^{t} -\Psi^{t}\| \le e^{LT}\left(\mab{E}\|\mac{B}^{0}-\Psi^{0}\| + \frac{C}{L} d^{-\frac{1}{2}}  \right) \le \frac{C}{d^{\frac{1}{2}}}.
    \end{equation}
    For the last inequality, we use Assumption \ref{app-asm:dynamics} in the main text.

    Next, we define a deterministic process $\mac{S}^{t}$ as follows:
    \begin{equation}
        \mac{S}^{t+1} = \mac{S}^{t} + \frac{1}{d} F(\mac{S}^{t})
    \end{equation}
    with the deterministic initial condition $\mac{S}^{0} = \bar{\Psi}^{0}$. Similarly, the following inequality holds for all $t \le d \times T$:
    \begin{equation}
        \label{eq:S-bound-coupling}
        \mab{E}\|\mac{B}^{t} - \mac{S}^{t}\|^{2} \le \frac{C}{d}
    \end{equation}
    To prove this inequality, one can express as 
    \begin{equation*}
        \mab{E}\|\mac{B}^{t+1} - \mac{S}^{t+1}\| = \mab{E}\|\mac{B}^{t}-\mac{S}^{t}\|^{2} + \frac{1}{d^{2}} \mab{E} \|F(\mac{B}^{t}) - F(\mac{S}^{t}) \|^{2} + \frac{2}{d} \mab{E} (F(\mac{B}^{t}) - F(\mac{S}^{t}))^{\top} (\mac{B}^{t}-\mac{S}^{t}) + \mab{E}\|\Psi^{t+1} - \mab{E}_{t} \Psi^{t}\|^{2}.
    \end{equation*}
    Here, one uses the identity given by
    \begin{equation*}
        \mab{E}_{t}(\Psi^{t+1}-\mab{E}_{t} \Psi^{t})^{\top} (\mac{B}^{t}-\mac{S}^{t}) = \mab{E}_{t}(\Psi^{t+1}-\mab{E}_{t} \Psi^{t})^{\top} (F(\mac{B}^{t})-F(\mac{S}^{t})) = 0. 
    \end{equation*}
    Then, from Lemma \ref{lem:second-incre-macro}, one can get following inequality:
    \begin{equation*}
        \mab{E}\|\mac{B}^{t+1}-\mac{S}^{t+1}\|^{2} \le \left(1+\frac{CL}{d}\right) \mab{E}\|\mac{B}^{t}-\mac{S}^{t}\|^{2} + \frac{C}{d^{2}}.
    \end{equation*}
    Applying this bound iteratively, for all $t \le d \times T$, Eq.~\eqref{eq:S-bound-coupling} is proven as follows:
    \begin{equation}
        \mab{E}\|\mac{B}^{t}-\mac{S}^{t}\|^{2} \le \frac{C}{d}.
    \end{equation}
    Note that $\mac{S}^{t}$ is a standard first-order finite difference approximation of the ODEs with the step size $1/d$. 
    The standard Euler argument implies that 
    \begin{equation}
        \label{eq:M-bounded-coupling}
        \|\mac{S}^{t}-\Psi(t)\| \le \frac{C}{d}.
    \end{equation}
    Finally, combining Eq.~\eqref{eq:B-bound-coupling}, \eqref{eq:S-bound-coupling} and \eqref{eq:M-bounded-coupling}, Theorem \ref{theorem:concentration-to-ode} is proven as follows:
    \begin{align*}
        \mab{E}\|\Psi^{t} - \Psi(t)\| &= \mab{E}\|\Psi^{t}-\mac{B}^{t} + \mab{B}^{t}-\mac{S}^{t} + \mac{S}^{t} - \Psi(t)\| \\
        &\le \mab{E}\|\Psi^{t} -\mac{B}^{t}\| + \mab{E}\|\mac{B}^{t} - \mac{S}^{t}\| + \mab{E}\|\mac{S}^{t}-\Psi(t)\| \\
        &\le \mab{E}\|\Psi^{t} -\mac{B}^{t}\| + (\mab{E}\|\mac{B}^{t} - \mac{S}^{t}\|^{2})^{\frac{1}{2}} + \mab{E}\|\mac{S}^{t}-\Psi(t)\| \\
        &\le \frac{C}{d^{\frac{1}{2}}}.
    \end{align*}
\end{proof}

\subsection{Extra Proofs}\label{subsec:extra-proofs}
In this section we collect several auxiliary bounds that are repeatedly used in the proof of Theorem~\ref{theorem:concentration-to-ode}.

\subsubsection{Bound for Micoroscopic State}\label{subsubsec:bound-micro}

\begin{Lem}
    \label{eq:each-l-form}
    Under the same assumption as in Theorem~\ref{theorem:concentration-to-ode}, there exists a constant $C$ such that, for any integer $l \in \{2, 3, 4\}$ and any $i \in [d]$,
    \begin{equation}
        \E_{t}\ab[\ab|\Delta w_{i}^{t}|^{l}] \le \frac{C}{d^{l/2}} \ab(1+ \frac{1}{d} \sum_{j}|w_{j}^{t}|^{2}).
    \end{equation}
\end{Lem}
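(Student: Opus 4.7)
The plan is to decompose the one-step increment as $\Delta w_i^t = -\eta g_i^t - (\eta\lambda/d)\psi_T(w_i^t)$ and apply the elementary inequality $|a+b|^l \le 2^{l-1}(|a|^l + |b|^l)$ to reduce the claim to a bound on $\mab{E}_t[|g_i^t|^l]$ together with a deterministic regularizer contribution. The regularizer piece is trivially controlled: since $|\psi_T|\le \omega$, its $l$-th power is $O(d^{-l})$, which is absorbed into $O(d^{-l/2})$ for $l\ge 2$.

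For the stochastic piece, I would factor the scalar $\psi_T(x_i^t)/\sqrt d$ out of $g_i^t$ and split the prediction residual by the triangle inequality:
\begin{equation*}
    |g_i^t|^l \le \frac{C\omega^l}{d^{l/2}}\,\ab(\,\ab|\tfrac{1}{\sqrt d}\B{\psi}(\B{w}^t)^\top \B{\psi}(\B{x}^t)|^l + \ab|\tfrac{1}{\sqrt d}(\B{w}^\ast)^\top \B{x}^t|^l + |\xi^t|^l\,).
\end{equation*}
Conditionally on $\B{w}^t$, the second term is an $l$-th Gaussian moment with variance $\rho_d=O(1)$, and the third is an $l$-th Gaussian moment of $\xi^t$; both are bounded by constants. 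For the first term, write $S_j \coloneqq \psi_T(w_j^t)\psi_T(x_j^t)$. Conditional on $\B{w}^t$ these are independent, centered (by symmetry of the input quantizer), and uniformly bounded by $\omega^2$. Rosenthal's inequality gives, for $l\in\{2,3,4\}$,
\begin{equation*}
    \mab{E}_t\ab|\textstyle\sum_j S_j|^l \le C_l \ab[\ab(\textstyle\sum_j \mab{E}_t[S_j^2])^{l/2} + \textstyle\sum_j \mab{E}_t[|S_j|^l]] \le C_l d^{l/2},
\end{equation*}
so $\mab{E}_t|S/\sqrt d|^l$ is bounded by a constant. Combining these estimates yields $\mab{E}_t[|g_i^t|^l] \le C d^{-l/2}$, and together with the regularizer bound and $1+(1/d)\sum_j |w_j^t|^2 \ge 1$ this proves the lemma.

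The step requiring the most care is the Rosenthal estimate for $S$: one has to check that both the quadratic proxy $\sum_j \mab{E}_t[S_j^2] = \sigma_\psi^2 \sum_j \psi_T(w_j^t)^2 \le \sigma_\psi^2\omega^2 d$ and the higher-moment proxy $\sum_j \mab{E}_t[|S_j|^l] \lesssim \omega^{2l} d$ scale linearly in $d$, so that the aggregate is of order $d^{l/2}$ and exactly cancels the $d^{-l/2}$ prefactor extracted from $\psi_T(x_i^t)/\sqrt d$. I note that the factor $1+(1/d)\sum_j|w_j^t|^2$ on the right-hand side is slack in this quantized model because $|\psi_T|\le \omega$ already yields a $q^t$-free bound; it is written in that form to accommodate a generalization in which $\psi_T$ is only Lipschitz, where applying Cauchy--Schwarz to $\|\B{\psi}(\B{w}^t)\|^2$ naturally produces a $q^t=(1/d)\|\B{w}^t\|^2$ dependence.
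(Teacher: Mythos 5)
Your proof is correct, but it takes a genuinely different route from the paper. The paper splits $|\Delta w_i^t|^l$ the same way you do, but then bounds $\mab{E}_t[|g_i^t|^l]$ via H\"{o}lder interpolation between the $l=2$ and $l=4$ moments,
\begin{equation*}
  \mab{E}_t[|g_i^t|^l] \le \ab(\mab{E}_t[(g_i^t)^2])^{\frac{4-l}{2}}\ab(\mab{E}_t[(g_i^t)^4])^{\frac{l-2}{2}},
\end{equation*}
and evaluates the two endpoint moments by expanding the products and invoking independence (Wick-type accounting), which yields bounds that still carry the $\|\B{\psi}(\B{w}^t)\|^2/d$ dependence. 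You instead factor out $\psi_T(x_i^t)/\sqrt d$ deterministically via $|\psi_T|\le\omega$, triangle-split the residual, and control the quantized inner product with Rosenthal's inequality for independent bounded centered summands. Both are sound: your centering step for $S_j=\psi_T(w_j^t)\psi_T(x_j^t)$ is justified because the thresholds are symmetric ($\theta_k=-\theta_{L+1-k}$), which makes $\psi_T$ odd and hence $\mab{E}[\psi_T(x_j^t)]=0$. What each route buys: the interpolation-plus-Wick argument avoids invoking Rosenthal by name and sits entirely within second- and fourth-moment Gaussian calculus, while yours is arguably cleaner for general $l$, relies only on boundedness and independence rather than detailed moment expansions, and --- as you correctly point out --- produces the sharper $q^t$-free bound $\mab{E}_t[|g_i^t|^l]\le Cd^{-l/2}$; the extra factor $1+d^{-1}\sum_j|w_j^t|^2$ is slack in this quantized setting and is evidently retained in the paper for compatibility with a Lipschitz (rather than bounded) nonlinearity. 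The only thing I'd tighten is the phrase ``the regularizer piece is $O(d^{-l})$, which is absorbed into $O(d^{-l/2})$'': state explicitly that $d^{-l}\le d^{-l/2}$ for $d\ge1$ so the absorption is literal.
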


\begin{proof}
By the triangle inequality and Minkowski's inequality, for some $C>0$,
    \begin{equation}
    \label{eq:triangle-inequality-deltaw}
        \mab{E}_{t}|\Delta w_{i}^{t}|^{l} \le C \eta^{l} \ab(\E_{t}|g_{i}^{t}|^{l} + \ab|\frac{\lambda}{d}\psi(w_{i}^{t})|^{l}).
    \end{equation}
    For $l=2, 3, 4$, we interpolate between second and fourth moments via H\"{o}lder interpolation:
    \begin{equation}
        \label{eq:interpolated-gl}
        \mab{E}_{t}[|g_{i}^{t}|^{l}] \le \ab(\E_{t}(g_{i}^{t})^{2})^{\frac{4-l}{2}} \ab(\E_{t}(g_{i}^{t})^{4})^{\frac{l-2}{2}}.
    \end{equation}
    By independence and the Wick's formula, there exist constants $C, C^{\prime}$ independent of $d$ such that
    \begin{equation}
        \label{eq:gi-moments}
        \mab{E}[(g_{i}^{t})^{2}] \le \frac{C}{d}\ab(\frac{1}{d}\|\psi(\B{w}^{t})\|^{2}+ \rho^{2} +\sigma^{2}),~~~\mab{E}[(g_{i}^{t})^{4}] \le \frac{C^{\prime}}{d^{2}}\ab(\frac{1}{d}\|\psi(\B{w}^{t})\|^{2}+ \rho^{2} +\sigma^{2})^{2},
    \end{equation}
    Plugging \eqref{eq:gi-moments} into \eqref{eq:interpolated-gl} yields, for $l\in\{2,3,4\}$,
    \begin{align}
        \E_{t}\ab[|g_{i}^{t}|^{l}] &\le \frac{C}{d^{l/2}} \ab(\frac{1}{d} \sum_{i}|\psi(w_{i})|^{l} + \rho + \sigma^{2})^{l/2}. 
    \end{align}
    Thus,
    \begin{align}
        \E_{t}\ab[\ab|\Delta w_{i}^{t}|^{l}] &\le C\ab[\frac{1}{d^{l/2}} \ab(1+ \frac{1}{d} \sum_{j}|w_{j}^{t}|^{l}) + \frac{1}{d^{4}}\sum_{i} \psi(w_{i}^{l})^{l}]  \\
        &\le \frac{C}{d^{l/2}} \ab(1+ \frac{1}{d} \sum_{j}|w_{j}^{t}|^{l}).
    \end{align}
\end{proof}

\begin{Lem}
\label{lem:micro-bound}
    Under the same assumption as in Theorem~\ref{theorem:concentration-to-ode}, there exists a constant $C$ such that  
    \begin{equation}
    \label{eq:paramter-bound}
        \max_{0 \le t \le T d}  \sum_{i=1}^{d} \mab{E} (w_{i}^{t})^{4} \le C.
    \end{equation}
\end{Lem}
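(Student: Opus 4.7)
The plan is a discrete Gronwall argument on $N^{t} \coloneqq \sum_{i=1}^{d} \mathbb{E}(w_i^t)^4$. I would establish a one-step estimate
\begin{equation*}
\mathbb{E} N^{t+1} \le \bigl(1 + C_1/d\bigr)\, \mathbb{E} N^{t} + C_2,
\end{equation*}
iterate it over $t \le Td$ to get $\mathbb{E} N^{t} \le e^{C_1 T}\bigl(\mathbb{E} N^{0} + C_2 d T / C_1\bigr)$, and combine with the initial moment bound of Assumption~\ref{app-asm:dynamics}(3) to conclude the claimed uniform bound.

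To derive the recursion, I would start from the pointwise Taylor identity
\begin{equation*}
(w_i^{t+1})^4 = (w_i^t)^4 + 4(w_i^t)^3 \Delta w_i^t + 6(w_i^t)^2 (\Delta w_i^t)^2 + 4 w_i^t (\Delta w_i^t)^3 + (\Delta w_i^t)^4,
\end{equation*}
take $\mathbb{E}_t$, sum over $i$, and control each correction separately. For the first-order term I would invoke Lemma~\ref{lem:micro-conditional-mean} to get $|\mathbb{E}_t \Delta w_i^t| \le \tfrac{C}{d}(|w_i^*| + \omega)$, then apply Young's inequality $|a|^3|b| \le \tfrac{3}{4}a^4 + \tfrac{1}{4}b^4$ to bound $\sum_i 4(w_i^t)^3 \mathbb{E}_t \Delta w_i^t \le \tfrac{C}{d}\bigl(N^t + \sum_i(w_i^*)^4 + d\bigr)$; an optional refinement uses $(w_i^t)^3 \psi(w_i^t) = (w_i^t)^2 \cdot w_i^t \psi(w_i^t) \ge 0$ (monotonicity and odd symmetry of the quantizer) to discard the leading drift altogether. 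For the second-order correction I would use the sharpened bound $\mathbb{E}_t (\Delta w_i^t)^2 \le C/d$, which follows from Proposition~\ref{prop:local-field-dist} together with $|\psi| \le \omega$ applied in the definition of $g_i^t$ (tightening Lemma~\ref{eq:each-l-form}), combined with $(w_i^t)^2 \le \tfrac{1}{2}(1 + (w_i^t)^4)$ to obtain $\sum_i 6(w_i^t)^2 \mathbb{E}_t (\Delta w_i^t)^2 \le \tfrac{C}{d}N^t + C$. The cubic and quartic remainders, governed by $\mathbb{E}_t |\Delta w_i^t|^l \le C/d^{l/2}$ for $l=3,4$, are easily seen to be lower order. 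Collecting all pieces and using Assumption~\ref{app-asm:dynamics}(3) to bound $\sum_i(w_i^*)^4$ yields the stated recursion.

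The main obstacle is ensuring the multiplicative growth factor is $1 + O(1/d)$ rather than $1 + O(1/\sqrt{d})$; the latter would compound to $e^{C\sqrt{d}T}$ over the $Td$-step horizon and destroy the uniform bound. Two ingredients are crucial. First, the sharpened estimate $\mathbb{E}_t |\Delta w_i^t|^l \le C/d^{l/2}$, which drops the $\tfrac{1}{d}\sum_j (w_j^t)^2$ factor appearing in Lemma~\ref{eq:each-l-form} and crucially relies on boundedness of the input quantizer. Second, when reducing the cross sums $\sum_i |w_i^t|^{4-l}$ one must trade them against $(w_i^t)^4$ through Young's inequality with properly chosen exponents, rather than through Cauchy--Schwarz, which would introduce a parasitic $1/\sqrt{d}$ factor. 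With these refinements in place, the remaining bookkeeping is routine and the discrete Gronwall iteration closes the induction.
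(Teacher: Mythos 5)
Your proposal follows the paper's proof almost step for step: the binomial expansion of $(w_i^{t+1})^4$, the explicit first-order drift from Lemma~\ref{lem:micro-conditional-mean}, moment bounds $\mathbb{E}_t|\Delta w_i^t|^l = O(d^{-l/2})$ for the higher-order corrections, Young's inequality to trade lower powers of $|w_i^t|$ against $(w_i^t)^4$, and a discrete Gronwall iteration over the $dT$-step horizon. Your two emphases --- that boundedness of the input quantizer $|\psi|\le\omega$ is what lets one drop the $\frac{1}{d}\sum_j(w_j^t)^2$ factor in Lemma~\ref{eq:each-l-form} and keep the multiplicative growth at $1+O(1/d)$, and that Young rather than Cauchy--Schwarz must be used on the cross terms to avoid a parasitic $d^{-1/2}$ factor --- are exactly the considerations the paper relies on implicitly, and you also offer a small refinement (drop the $(w_i^t)^3\psi(w_i^t)\ge 0$ drift term outright) that the paper does not use. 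One caveat worth stating plainly: your iterated bound $\mathbb{E}N^t \lesssim e^{C_1 T}\bigl(\mathbb{E}N^0 + C_2 dT/C_1\bigr)$ is $\Theta_d(d)$, not $O_d(1)$, since the one-step additive term $C_2$ is order one; this matches what the paper's own proof actually produces (it explicitly ends with $\le Cd$) and is what Lemma~\ref{lem:macro-bounded} needs, but it does not match the lemma's literal statement $\le C$. As written, $\sum_i\mathbb{E}(w_i^t)^4\le C$ would force $\mathbb{E}q_t \le \sqrt{C/d}\to 0$ by Cauchy--Schwarz, contradicting the nontrivial ODE limit; the statement should read $\le Cd$ (equivalently $\frac{1}{d}\sum_i\mathbb{E}(w_i^t)^4\le C$, and likewise the normalization in Assumption~\ref{app-asm:dynamics}(3)). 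So be explicit that the conclusion of the Gronwall step is $O(d)$, not a $d$-free constant.
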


\begin{proof}
    By the binomial expansion,
    \begin{align}
        \mab{E}(w_{i}^{t+1})^{4} - \mab{E}(w_{i}^{t})^{4} &= \sum_{l=1}^{4} \binom{4}{l} \mab{E}\ab[(w_{i}^{t})^{4-l} \mab{E}_{t}[(\Delta w_{i}^{t})^{l}]] \\
        &= 4 \E\ab[(w_i^{t})^3 \E_t[\Delta w_i^t]]+6 \E\ab[(w_i^{t})^2 \E_t[(\Delta w_i^t)^2]] +4 \E\ab[|w_i^{t}| \E_t[|\Delta w_i^t|^3]] +\E\ab[\E_t[|\Delta w_i^t|^4]]. \label{eq:binom-expression}
    \end{align}
    We bound the four terms on the right-hand side.
    For $l=1$, we have
    \begin{equation}
        \E_{t}[\Delta_{i}^{t}] = -\eta \E_{t}\ab[g_{i}^{t} + \frac{\lambda}{d} \psi(w_{i}^{t})] = -\frac{\eta}{d} \ab[(\lambda+\sigma^{2}_{\psi}) \psi(w_{i}^{t}) - \kappa_{\psi} w_{i}^{\ast}]
    \end{equation}
    Hence, by Young's inequality, 
    \begin{equation}
        \frac{1}{d} \sum_{i=1}^{d}\ab|4\E[(w_{i}^{t})^{3} \E_{t}[\Delta w_{i}^{t}]]| \le \frac{C}{d} \ab(1+\sum_{i} \E_{t} [(w_{i}^{t+1})^{4}])
    \end{equation}
    For all $l=2, 3, 4$, by Lemma \ref{eq:triangle-inequality-deltaw} and Young's inequality
    \begin{equation}
        \E\ab[(w_{i}^{t})^{4-l}\E_{t}[(\Delta w_{i}^{t})^{l}]] \le \frac{C^{\prime}}{d} \ab(1+ \sum_{i} \E_{t}(w_{i}^{t})^{4}).
    \end{equation}
    Thus 
    \begin{equation}
        \sum_{i}\E (w_{i}^{t+1})^{4} - \sum_{i}\E(w_{i}^{t})^{4} \le \frac{C}{d} (1+ \sum_{i} \E (w_{i}^{t})^{4})
    \end{equation}
    This 
    \begin{equation}
        \sum_{i}\E(w_{i}^{t})^{4} \le \ab(1+\frac{C}{d})^{t} \sum_{i}\E(w_{i}^{0})^{4}+ \frac{C}{d} \sum_{k=0}^{t-1}\ab(1+\frac{C}{d})^{4} \le e^{CT}(\sum_{i} \E w_{i}^{0}+1) \le Cd
    \end{equation}
    since $\sum_{i} \E w_{i}^{0} \le C d$ by Assumption \ref{app-asm:dynamics}.
\end{proof}

\subsubsection{Bound for Macroscopic State} \label{subsub:bound-macro}
We next show that the macroscopic order parameters remain uniformly bounded over the time.

\begin{Lem}
\label{lem:macro-bounded}
    Under the same assumption as in Theorem 4.2, for all $t\le d \times T$, the following inequality holds:
    \begin{align}
        \max_{0 \le \tau \le d \times T} \ab\{\E q_{t}^{2} + \E_{t} m_{t}^{2}\} \le C
    \end{align}
\end{Lem}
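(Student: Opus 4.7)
The plan is to reduce both quantities to the uniform fourth-moment bound on the microscopic coordinates already established in Lemma~\ref{lem:micro-bound}, via two elementary inequalities: power-mean (Cauchy--Schwarz applied to $(w_i^t)^2$ and $1$) for $q_t$, and Cauchy--Schwarz between $\B{w}^*$ and $\B{w}^t$ for $m_t$.

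First, for the second moment of $q_t$, I would write
\begin{equation}
    q_t^2 = \ab(\frac{1}{d}\sum_{i=1}^{d} (w_i^t)^2)^2 \le \frac{1}{d}\sum_{i=1}^{d}(w_i^t)^4,
\end{equation}
by the power-mean (Cauchy--Schwarz) inequality. Taking expectations and applying Lemma~\ref{lem:micro-bound} gives
\begin{equation}
    \mab{E} q_t^2 \le \frac{1}{d}\sum_{i=1}^{d}\mab{E}(w_i^t)^4 \le C,
\end{equation}
uniformly over $0 \le t \le d\times T$.

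Second, for $m_t$, Cauchy--Schwarz yields
\begin{equation}
    m_t^2 = \ab(\frac{1}{d}\sum_{i=1}^{d} w_i^* w_i^t)^2 \le \ab(\frac{1}{d}\sum_{i=1}^{d}(w_i^*)^2)\ab(\frac{1}{d}\sum_{i=1}^{d}(w_i^t)^2) = \rho_d\, q_t.
\end{equation}
Since Assumption~\ref{app-asm:dynamics} implies $\rho_d \to \rho$ and $\rho_d$ is uniformly bounded, another application of Cauchy--Schwarz followed by the bound on $\mab{E} q_t^2$ gives
\begin{equation}
    \mab{E} m_t^2 \le C\,\mab{E} q_t \le C\,(\mab{E} q_t^2)^{1/2} \le C'.
\end{equation}
Combining these two bounds yields the claim. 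No genuine obstacle arises here: all the real work has already been absorbed into Lemma~\ref{lem:micro-bound}, and the step deserving the most care is simply ensuring that the constants in Lemma~\ref{lem:micro-bound} are interpreted consistently (i.e.\ $\frac{1}{d}\sum_i \mab{E}(w_i^t)^4 \le C$ uniformly in $t \le dT$), so that the power-mean reduction indeed yields a dimension-free bound on $\mab{E} q_t^2$.
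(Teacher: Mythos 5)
Your proposal is correct and follows essentially the same route as the paper: Cauchy--Schwarz gives $q_t^2\le\frac{1}{d}\sum_i(w_i^t)^4$, Cauchy--Schwarz between $\B{w}^*$ and $\B{w}^t$ gives $m_t^2\le\rho_d\,q_t$, and both are then closed via the uniform fourth-moment bound of Lemma~\ref{lem:micro-bound}. The only cosmetic difference is that you bound $\E m_t^2$ by routing through $\E q_t\le(\E q_t^2)^{1/2}$, whereas the paper re-applies Cauchy--Schwarz to $\sum_i(w_i^t)^2$ inline; the two are equivalent, and your note that the relevant uniform bound from Lemma~\ref{lem:micro-bound} should be read as $\frac{1}{d}\sum_i\E(w_i^t)^4\le C$ is the correct interpretation.
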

\begin{proof}
    From Lemma \ref{lem:micro-bound} and Cauchy-Schwarz inequality, 
    \begin{equation}
        \E q_{t}^{2} = \frac{1}{d^{2}}\E\ab(\sum_{i} (w_{i}^{t})^{2})^{2} \le \frac{1}{d^{2}} d \E\sum_{i} (w_{i}^{t})^{4} \le C.
    \end{equation}
    Similarly, for $m_{t}$ we apply Cauchy--Schwarz to obtain
    \begin{equation}
        \E m_{t}^{2} = \frac{1}{d^{2}} \E \ab(\sum_{i} w_{i}^{\ast} w_{i}^{t})^{2} \le \frac{\|\B{w}^{\ast}\|^{2}}{d^{2}} \E \sum_{i} (w_{i}^{t})^{2} \le \frac{\rho^{2}}{d} (d \sum_{i} (w_{i}^{t})^{4})^{1/2} \le \rho^{2} \sqrt{C},
    \end{equation}
    which concludes the whole proof.
\end{proof}

\section{LOCAL STABILITY ANALYSIS OF FIXED POINTS OF ODE}\label{sec:proof-local-stability-analsyis}
In the subsequent analysis, if the
Jacobian matrix of the ODEs has only negative eigenvalues,
the fixed point is called locally stable, and if the Jacobian
matrix has both zero eigenvalues and negative eigenvalues,
the fixed point is called marginally stable.

\subsection{Local Stability Analysis of Input-Quantized Models}\label{subsec:proof-fixed-points-input-quantization}
We analyze the dynamics when only the inputs are quantized. In this setting, the order-parameter ODEs read
\begin{align}
    \frac{dm}{d\tau} &= \eta\!\left(\kappa_{\psi}-(\sigma_{\psi}^{2}+\lambda)m\right), \\
    \frac{dq}{d\tau} &= 2\eta\!\left(\kappa_{\psi}m-(\sigma_{\psi}^{2}+\lambda)q\right)+\eta^{2}\sigma_{\psi}^{2}\,\varepsilon_{g}(m,q),
\end{align}
where the generalization error is
\begin{equation}
    \varepsilon_{g}(m,q) \;=\; 1 + \sigma_{\psi}^{2} q - 2\kappa_{\psi} m + \sigma^{2}.
\end{equation}

\subsection{Fixed point.}
Fixed points satisfy $(dm/d\tau, dq/d\tau)=(0,0)$, yielding
\begin{equation}
    m^{\ast} = \frac{\rho \kappa_{\psi}}{\sigma_{\psi}^{2}+\lambda},~~
    q^{\ast} = 
    \frac{2\kappa_{\psi}^{2}+\eta \sigma_{\psi}^{2}\!\left((\rho+\sigma^{2})(\sigma_{\psi}^{2}+\lambda)-2\kappa_{\psi}^{2}\right)}
         {(\sigma_{\psi}^{2}+\lambda)\left(2(\sigma_{\psi}^{2}+\lambda)-\eta \sigma_{\psi}^{4}\right)}.
\end{equation}
Consequently, the fixed-point generalization error is
\begin{equation}
    \varepsilon_{g}^{\ast}
    = \rho +\sigma^{2}+\sigma_{\psi}^{2} q^{\ast} - 2 \kappa_{\psi} m^{\ast}.
\end{equation}
In the small learning–rate limit,
\begin{equation}
    \varepsilon_{g}^{\ast}
    \longrightarrow
    \rho + \sigma^{2} - \frac{\kappa_{\psi}^{2}(\sigma_{\psi}^{2}+2\lambda)}{(\sigma_{\psi}^{2}+\lambda)^{2}}.
\end{equation}
The unquantized linear-regression case is recovered by setting $\kappa_{\psi}=\sigma_{\psi}^{2}=1$.

\subsection{Local stability.}
Linearizing around $(m^{\ast},q^{\ast})$ yields the following Jacobian
\begin{equation}
    J =
    \begin{pmatrix}
        -\eta (\sigma_{\psi}^{2}+\lambda) & 0 \\
        2\eta \kappa_{\psi} - 2\eta^{2}\sigma_{\psi}^{2}\kappa_{\psi} 
        & -2\eta (\sigma_{\psi}^{2}+\lambda) + \eta^{2} \sigma_{\psi}^{4}
    \end{pmatrix}.
\end{equation}
The eigenvalues are
\begin{equation}
    \lambda_{1} = -\eta(\sigma_{\psi}^{2}+\lambda),
    \qquad
    \lambda_{2} = -2\eta (\sigma_{\psi}^{2}+\lambda) + \eta^{2} \sigma_{\psi}^{4}.
\end{equation}
Thus, the fixed point is locally stable whenever both eigenvalues are negative, i.e.,
\begin{equation}
    0 < \eta < \frac{2(\sigma_{\psi}^{2}+\lambda)}{\sigma_{\psi}^{4}}.
\end{equation}
Note that the same condition also guarantees the denominator of $q^{\ast}$ is positive, ensuring the fixed point is well-defined.

\subsection{Fixed-Point Analysis of Weight-Input Quantized Models}\label{subsec:proof-fixed-points-input-weight-quantization}

We consider the macroscopic dynamics for $\Psi(\tau)$ in Theorem \ref{theorem:concentration-to-ode}.
The macroscopic ODEs are
\begin{align}
    &\frac{\mathrm{d}m(\tau)}{d\tau} = -\eta \ab((\sigma_{\psi}^{2}+\lambda) m_{\psi}(\tau)-\kappa_{\psi} \rho) \label{eq:m-dynamics} \\
    &\frac{\mathrm{d}q(\tau)}{\mathrm{d}\tau} = -2\eta\ab((\sigma_{\psi}^{2}+\lambda)r_{\psi}(\tau)-\kappa_{\psi} m(\tau)) + \eta^{2} \sigma_{\psi}^{2} \varepsilon_{g}(\tau), \label{eq:q-dynamics}
\end{align}
where
\begin{equation}
    \varepsilon_{g}(\tau)= \rho - 2 \kappa_{\psi} m_{\psi}(\tau) + \sigma_{\psi}^{2} q_{\psi}(\tau)) + \sigma^{2},
\end{equation}
and initial condition $\Psi(0)=\bar{\Psi}$.
In the following, we characterize fixed points $(m^{\ast}, q^{\ast})$ characterize the fixed point equation by $s(\tau) \coloneqq \sqrt{q(\tau)-m(\tau)^{2}}$.

Substituting the definition of $r_{\psi}$ and simplifying, we obtain that the fixed points satisfy $(dm/d\tau, dq/d\tau)=(0,0)$, which yields
\begin{align}
    m_{\psi}(m, s) = c,~~~
 \frac{2 s \kappa_{\psi} \Delta}{c} S_{w}(m, s) = \eta\sigma_{\psi}^{2} \varepsilon_{g}(m, s),~~c \coloneqq \frac{\rho \kappa_{\psi}}{(\sigma_{\psi}^{2} + \lambda)}.
\end{align}

We prove the following existence and uniqueness.

\begin{Lem}
    \label{lem:exist-unique-m}
    Fix $s > 0$ and $c \in (-\omega, \omega)$. The equation
    \begin{equation}
        \label{eq:m-eq}
        m_{\psi}(m,s)=-\omega+\Delta\sum_{k=1}^{L} \Phi\ab(\frac{m-\theta_{k}}{s})
    \end{equation}
    admits a unique solution $m=\mathsf{m}(s) \in \R$. 
    Moreover, 
    \begin{equation}
        \lim_{m\to-\infty}m_\psi(m,s)=-\omega,~~~\lim_{m\to+\infty}m_\psi(m,s)=+\omega,
    \end{equation}
    and $m \mapsto m_{\psi}(m, s)$ is strictly increasing and continuous.
\end{Lem}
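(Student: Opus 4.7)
The plan is to establish continuity, strict monotonicity, and the two boundary limits of the scalar map $m\mapsto m_{\psi}(m,s)$, and then invoke the intermediate value theorem together with injectivity. Since $c\in(-\omega,\omega)$ lies strictly between the two limits, this immediately yields both existence and uniqueness.

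\paragraph*{Key steps.}
First, I would note that the right-hand side of Eq.~\eqref{eq:m-eq} is a finite sum of compositions of the continuous function $\Phi$ with affine maps in $m$, hence continuous on $\mathbb{R}$. Second, differentiating termwise (allowed because the sum is finite and each summand is smooth) gives
\begin{equation}
\frac{\partial}{\partial m}\,m_{\psi}(m,s)=\frac{\Delta}{s}\sum_{k=1}^{L}\phi\!\left(\frac{m-\theta_{k}}{s}\right)>0,
\end{equation}
since $\phi>0$ and $\Delta,s>0$. Hence $m\mapsto m_{\psi}(m,s)$ is strictly increasing. Third, I would compute the two one-sided limits: as $m\to-\infty$ each $\Phi((m-\theta_{k})/s)\to 0$, so $m_{\psi}(m,s)\to -\omega$; as $m\to+\infty$ each $\Phi((m-\theta_{k})/s)\to 1$, so
\begin{equation}
\lim_{m\to+\infty}m_{\psi}(m,s)=-\omega+\Delta\cdot L=-\omega+2\omega=\omega,
\end{equation}
using $\Delta=2\omega/L$. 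Finally, given $c\in(-\omega,\omega)$, continuity and the intermediate value theorem yield at least one solution of $m_{\psi}(m,s)=c$, while strict monotonicity rules out a second solution. Setting $\mathsf{m}(s)$ equal to this unique preimage completes the proof.

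\paragraph*{Main obstacle.}
There is no substantial obstacle: the argument reduces to standard facts about the Gaussian CDF and the intermediate value theorem. The only subtlety worth flagging is the identity $\Delta L=2\omega$, which ensures that the supremum of $m_{\psi}(\cdot,s)$ matches the upper endpoint $\omega$ assumed in the hypothesis $c\in(-\omega,\omega)$; this is precisely what guarantees solvability for every admissible $c$.
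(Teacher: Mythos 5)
Your proposal is correct and follows essentially the same route as the paper: continuity and strict monotonicity of the finite sum of Gaussian CDFs, the two limits $-\omega$ and $-\omega+\Delta L=\omega$, and the intermediate value theorem plus injectivity. The only cosmetic difference is that you establish strict monotonicity by computing the (positive) derivative, whereas the paper simply observes each summand is strictly increasing; both are fine.
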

\begin{proof}
    Each term $m \mapsto \Phi((m-\theta_{k})/s)$ is continuous and strictly increasing; thus the finite sum is continuous and strictly increasing. 
    As $m \to -\infty$, all $\Phi((m-\theta_{k})/s)) \to 0$, hence $m_{\psi} \to - \omega$. 
    As $m \to + \infty$, all terms tend to $1$, hence $m_{\psi} \to - \omega + \Delta L = \omega$.
    The intermediate value theorem yields existence for all $c \in (-\omega, \omega)$, and strict monotonicity yields uniqueness.
\end{proof}
Therefore, for each $s > 0$ and $|c| < \omega$ we can define the inverse map
\begin{equation}
    \label{eq:m-of-s}
    \mathsf m(s) \coloneqq \psi_{s}^{-1}(c)\quad\text{such that}\quad m_\psi(\mathsf m(s),s)=c.
\end{equation}
Substituting $m = \mathsf{m}(s)$ gives a single-variable equation in $s$:
\begin{equation}
    \label{eq:G-eta-def}
    G_\eta(s) \coloneqq \frac{2\Delta s}{\chi} S(\mathsf m(s),s)-\eta \varepsilon_g(\mathsf m(s),s)=0,~~ \chi \coloneqq \frac{\sigma_{\psi}^{2}}{\sigma_\psi^2+\lambda},
\end{equation}
where
\begin{equation}
    S(\mathsf{m}(s),s) \coloneqq \sum_{k=1}^{L} \phi\ab(z_{k}(m, s)),~~~z_{k}(m, s)\coloneqq\frac{m-\theta_{k}}{s}
\end{equation}

\subsection{Small $s$ Expansion}

However, it is difficult to further manipulate the nonlinear equation and carry out a fixed-point analysis.
We therefore consider the small-$s$ expansion.
From a learning perspective, the small-$s$ expansion corresponds to the limit in which the variance of the weight components orthogonal to the teacher is small.
Intuitively, a smaller learning rate $\eta$ should reduce the variance of the orthogonal components, but the precise relationship between $s$ and $\eta$ is unclear.
To clarify the meaning of the small-$s$ expansion, we first estimate the scaling (order) of $s$ with respect to $\eta$.
To this end, we introduce the following notation to estimate the order of $s$ with respect to $\eta$.
\begin{wrapfigure}{r}{0.42\textwidth} 
  \vspace{13pt}
  \centering
  \includegraphics[width=0.41\textwidth]{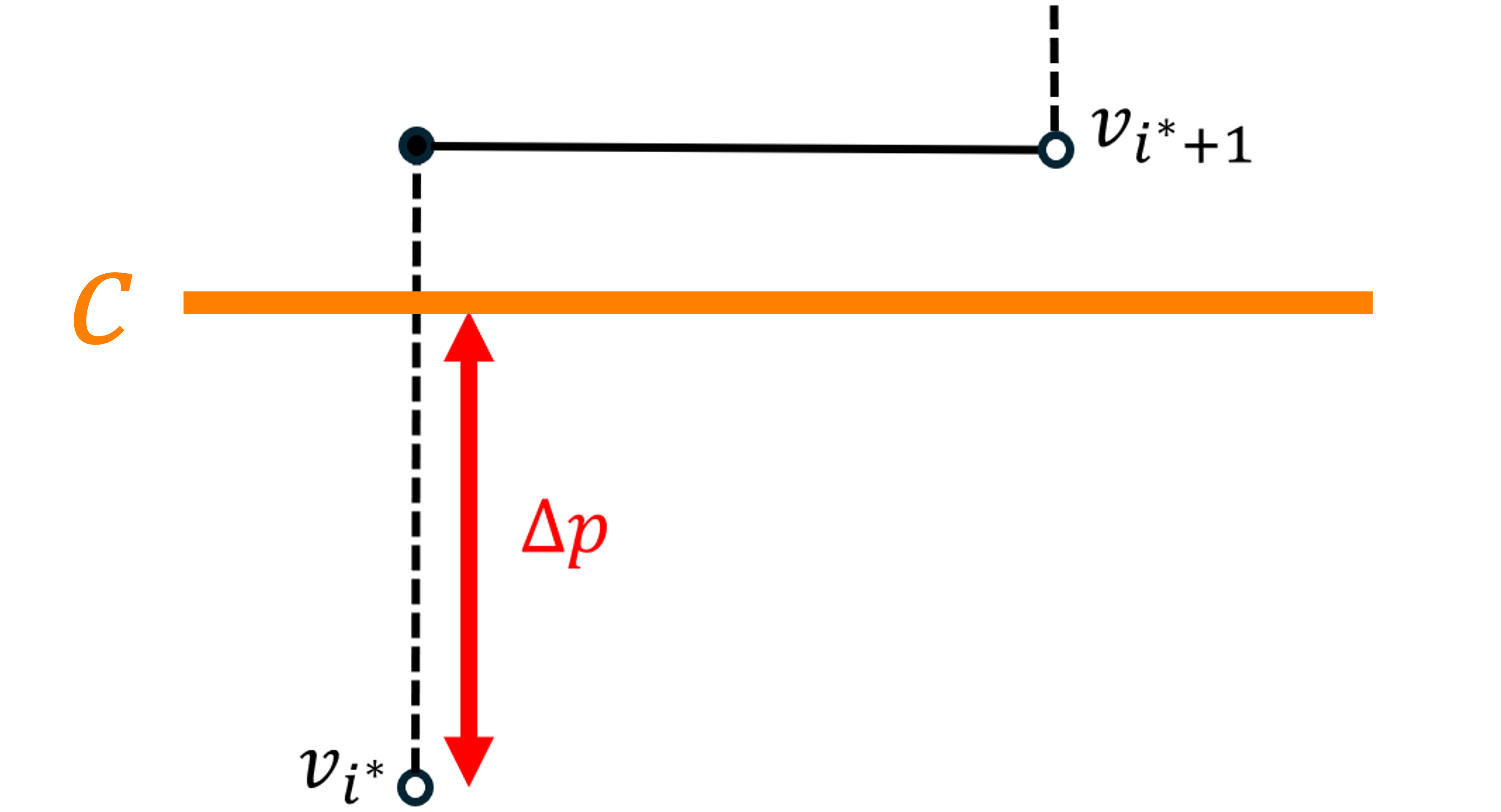}
  \caption{Conceptual illustration of the definition of index $i^{\ast}$ and the interpolation weight $p$.}
  \label{fig:p-def}
  \vspace{0pt}
\end{wrapfigure}
\begin{Def}
For any $c \in (-\omega, \omega)$ there exists a unique index 
\begin{equation}
    \label{eq:def-i}
    i^\star\in{0,\dots,L-1},~~\mathrm{such~that}~~v_{i^\star}<c<v_{i^\star+1},
    \end{equation}
    and we denote the adjacent threshold by $\theta^{\ast} \coloneqq \theta_{i^{\ast}+1}$.
    Define
    \begin{equation}
        \label{eq:def-p-z}
        p \coloneqq \frac{c-v_{i^{\ast}}}{\Delta} \in (0,1).
    \end{equation}
    Equivalently, $c=(1-p)v_{i^{\ast}} + p v_{i^{\ast}} = - \omega + \Delta (i^{\ast}+p)$. 
\end{Def}
Figure \ref{fig:p-def} show a conceptual diagram to illustrate the geometry of the quantities.

Since $i^{\star}=\lfloor \nicefrac{(c+\omega)}{\Delta} \rfloor$, $p$ can be written as a function of $c$ as
\begin{equation}
    p(c) = \frac{c+\omega}{\Delta} - \left\lfloor \frac{c+\omega}{\Delta}\right\rfloor,
\end{equation}
i.e., the fractional part of $\nicefrac{(c+\omega)}{\Delta}$.
The motivation for this notation is the fixed-point equation
\begin{equation}
    \label{eq:m-fixed-both-quant}
    m_{\psi}(m, s) = \psi_{s}(m) = c,
\end{equation}
from which, for fixed $s$, the value of $m$ is obtained by the de-quantization map
\begin{equation}
    m=\psi_{s}^{-1}(c).
\end{equation}
It follows from this equation that $m$ lies in a neighborhood of $\theta^{\star}$.

We quantify the behavior of the inverse $m=\mathsf{m}(s)$.
\begin{Lem}
    \label{lem:m-dominant}
    With $p \in (0, 1)$, the unique solution $\mathsf{m}(s)$ satisfies
    \begin{equation}
        \mathsf{m}(s) = \theta^{\ast} + s \Phi^{-1}(p) + \mac{O}\ab(\frac{s^{2}}{\Delta} \phi\ab(\frac{\Delta}{2s})),
    \end{equation}
\end{Lem}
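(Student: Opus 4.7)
The plan is to isolate the single dominant threshold term at $k=i^{\ast}+1$, bound all other terms in the sum as exponentially small perturbations via Mill's inequality, and then invert the resulting scalar equation using the smoothness of $\Phi^{-1}$ at $p\in(0,1)$.

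First, I would decompose using the identity $-\omega+\Delta i^{\ast}=v_{i^{\ast}}$,
$$m_{\psi}(m,s) \;=\; v_{i^{\ast}} \;+\; \Delta\,\Phi\!\left(\frac{m-\theta^{\ast}}{s}\right) \;+\; E(m,s),$$
where the remainder
$$E(m,s) := \Delta\!\left[\sum_{k=i^{\ast}+2}^{L}\Phi\!\left(\tfrac{m-\theta_k}{s}\right) \;-\; \sum_{k=1}^{i^{\ast}}\!\left(1-\Phi\!\left(\tfrac{m-\theta_k}{s}\right)\right)\right]$$
collects all threshold terms other than $\theta^{\ast}$. The point is that on the window $|m-\theta^{\ast}|\le\Delta/2$, each remaining threshold satisfies $|m-\theta_k|/s\ge\Delta/(2s)$, so Mill's inequality (Lemma \ref{lem:mills}) together with monotonicity of $\phi$ on $[0,\infty)$ gives the geometric-in-$k$ tail bound
$$\bigl|E(m,s)\bigr| \;\le\; C\,s\,\phi\!\left(\tfrac{\Delta}{2s}\right),\qquad |m-\theta^{\ast}|\le\Delta/2,$$
with $C$ depending only on the quantizer structure.

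Using the decomposition together with $c=v_{i^{\ast}}+p\Delta$, the fixed-point equation $m_{\psi}(\mathsf{m}(s),s)=c$ becomes the scalar equation
$$\Phi\!\left(\frac{\mathsf{m}(s)-\theta^{\ast}}{s}\right) \;=\; p \;-\; \frac{E(\mathsf{m}(s),s)}{\Delta}.$$
Because $p\in(0,1)$, the map $\Phi^{-1}$ is analytic at $p$ with derivative $1/\phi(\Phi^{-1}(p))$. A first-order Taylor expansion then yields
$$\frac{\mathsf{m}(s)-\theta^{\ast}}{s} \;=\; \Phi^{-1}(p) \;-\; \frac{E(\mathsf{m}(s),s)}{\Delta\,\phi(\Phi^{-1}(p))} \;+\; \mac{O}\!\Bigl(\bigl(\tfrac{E}{\Delta}\bigr)^{2}\Bigr),$$
so multiplying by $s$ and inserting the bound $|E|\le C s\,\phi(\Delta/(2s))$ produces the claimed remainder $\mac{O}\!\bigl(\tfrac{s^{2}}{\Delta}\phi(\Delta/(2s))\bigr)$.

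The argument is a priori self-referential because $E$ is evaluated at the unknown $\mathsf{m}(s)$, which is the main obstacle. To close the loop I would first use Lemma \ref{lem:exist-unique-m}, together with the observation that $m_{\psi}(\theta^{\ast}-\delta,s)\to v_{i^{\ast}}$ and $m_{\psi}(\theta^{\ast}+\delta,s)\to v_{i^{\ast}+1}$ pointwise as $s\to 0$ for any fixed $\delta\in(0,\Delta/2)$, to conclude $\mathsf{m}(s)\to\theta^{\ast}$, so that for $s$ small enough $\mathsf{m}(s)$ automatically lies in the window where the bound on $E$ holds. Alternatively, a short Banach contraction on the map $m\mapsto\theta^{\ast}+s\,\Phi^{-1}\!\bigl(p-E(m,s)/\Delta\bigr)$ over that window closes the self-consistency, its Lipschitz constant being $\mac{O}(s\,\partial_m E/\Delta)=o(1)$. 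Once this is established, the expansion above is valid and the proof is complete.
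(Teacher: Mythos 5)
Your proof is correct and follows essentially the same route as the paper: isolate the $k=i^{\ast}+1$ term of the threshold sum, bound the remaining terms by $\mac{O}\bigl(\tfrac{s}{\Delta}\phi(\tfrac{\Delta}{2s})\bigr)$ via Mill's inequality, and transfer the error through the locally Lipschitz (or, as you note, analytic) inverse $\Phi^{-1}$ before multiplying by $s$. Your explicit treatment of the localization step, showing via monotonicity and the pointwise limits $m_{\psi}(\theta^{\ast}\pm\delta,s)\to v_{i^{\ast}},v_{i^{\ast}+1}$ that $\mathsf{m}(s)$ eventually lies in the window $|m-\theta^{\ast}|\le\Delta/2$ where the tail bound applies, is a point the paper's proof leaves implicit and is a worthwhile addition.
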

\begin{proof}
    Let $z^{\ast}=\nicefrac{(\mathsf{m}(s)-\theta^{\ast})}{s}$. From Eq.~\eqref{eq:m-fixed-both-quant} and $m_{\psi}(\mathsf{m}(s), s)=c=-\omega+\Delta(i^{\ast}+p)$,
    \begin{equation}
    p-\Phi(z^{\ast})= \underbrace{\sum_{i=1}^{i^{\ast}} \Phi\ab(z_{i})}_{A(s)} - i^{\ast} + \Phi(z^{\ast}) + \underbrace{\sum_{i=i^{\ast}+2}^{L} \Phi\ab(z_{i})}_{B(s)},
    \end{equation}
    By Lemma~\ref{lem:mills},
    \begin{equation}
        0 \le i^{\ast}-A(s) \le i^{\ast} \frac{2s}{\Delta} \phi\ab(\frac{\Delta}{2s}), ~~~0 \le B(s) \le (L-i^{\ast}-1)\frac{2s}{\Delta}\phi\ab(\frac{\Delta}{2s})
    \end{equation}
    hence
    \begin{equation}
        |p-\Phi(z^{\ast})| \le C \frac{s}{\Delta} \phi\ab(\frac{\Delta}{2s}).
    \end{equation}
    Since $\Phi^{-1}$ is locally Lipschitz, there exists $a > 0$ with
    \begin{equation}
        |z^{\ast}-\Phi^{-1}(p)| \le \frac{1}{a} |\Phi(z^{\ast}) - p| \le C \frac{s}{\Delta} \phi\ab(\frac{\Delta}{2s})
    \end{equation}
    Multiplying by $s$ and the definition $z^{\ast}$ yield the claim.
\end{proof}

\begin{Lem}
    \label{lem:S-dominant}
    \begin{equation}
        S(\mathsf{m}(s), s) = \phi(z^{\ast}) + \mac{O}\ab(\phi\ab(\frac{\Delta_{w}}{2s}))
    \end{equation}
\end{Lem}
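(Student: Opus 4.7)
The plan is to isolate the single summand in $S$ centered at the active threshold $\theta^{\ast}=\theta_{i^{\ast}+1}$, which contributes $\phi(z^{\ast})$, and to show that every other term is exponentially smaller, uniformly bounded by $\phi(\Delta/(2s))$.

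First I would split
\begin{equation*}
S(\mathsf m(s), s) \;=\; \phi(z^{\ast}) \;+\; \sum_{k=1,\, k\neq i^{\ast}+1}^{L} \phi\ab(z_{k}(\mathsf m(s), s)),\qquad z_k = \frac{\mathsf m(s) - \theta_k}{s}.
\end{equation*}
From Lemma~\ref{lem:m-dominant}, $\mathsf m(s) - \theta^{\ast} = s\,\Phi^{-1}(p) + \mac{O}\ab((s^{2}/\Delta)\phi(\Delta/(2s)))$, hence for all sufficiently small $s$ one has $|\mathsf m(s)-\theta^{\ast}|\le \Delta/2$. Because the thresholds are uniformly spaced with gap $\Delta$, for every $k\neq i^{\ast}+1$ the triangle inequality gives
\begin{equation*}
|\mathsf m(s) - \theta_k| \;\ge\; |\theta^{\ast}-\theta_k| - |\mathsf m(s) - \theta^{\ast}| \;\ge\; \Delta - \frac{\Delta}{2} \;=\; \frac{\Delta}{2},
\end{equation*}
so $|z_k|\ge \Delta/(2s)$.

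Next, since $\phi$ is even and strictly decreasing on $[0,\infty)$, the bound $|z_k|\ge \Delta/(2s)$ yields $\phi(z_k)\le \phi(\Delta/(2s))$. Summing over the $L-1$ off-center indices,
\begin{equation*}
\sum_{k\neq i^{\ast}+1}\phi(z_k) \;\le\; (L-1)\,\phi\ab(\tfrac{\Delta}{2s}) \;=\; \mac{O}\ab(\phi\ab(\tfrac{\Delta}{2s})),
\end{equation*}
which is exactly the claimed remainder. Combined with the isolated dominant term, this establishes the lemma.

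The main obstacle is that the margin estimate $|\mathsf m(s)-\theta^{\ast}|\le \Delta/2$ tacitly requires $p$ to stay bounded away from $\{0,1\}$, because $\Phi^{-1}(p)$ diverges at the endpoints; this is precisely consistent with the case distinction in the surrounding theorem, where $p\in\{0,1\}$ is treated separately with a slower rate. A secondary subtlety is that the genuinely leading off-center contribution comes from the two adjacent thresholds $k\in\{i^{\ast},i^{\ast}+2\}$, for which a sharper estimate $\phi((\Delta\mp s\Phi^{-1}(p))/s)$ is available; using the uniform $\Delta/(2s)$ lower bound trades a small amount of sharpness for a clean, $p$-independent constant inside the $\mac{O}$.
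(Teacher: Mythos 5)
Your proof is correct and follows essentially the same route as the paper's: isolate the term at the active threshold and bound each of the remaining $L-1$ terms by $\phi(\Delta/(2s))$ via $|z_k|\ge \Delta/(2s)$. The only difference is that you explicitly justify the margin $|\mathsf m(s)-\theta^{\ast}|\le \Delta/2$ using Lemma~\ref{lem:m-dominant}, a step the paper asserts ``from the definition''; your closing remark about $p\in\{0,1\}$ is also consistent with the case split in Theorem~\ref{thm:small-learning-limit-eg}.
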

\begin{proof}
    By definition, we have
    \begin{equation}
        S(s) = \phi(z^{\ast}) + \sum_{k \neq i^{\ast}+1} \phi(z_{k}).
    \end{equation}
    From the definition, $|z_{k}| \ge \Delta/2s$ for $k\neq i^{\ast}+1$, each tail term is bounded by $\phi(\Delta/2s)$. Summing over at most $L-1$ indices gives the result.
\end{proof}

Next, we consider $q_{\psi}$.
\begin{Lem}
\label{lem:q-dominant}
    \begin{equation}
        q_{\psi}(\mathsf{m}(s), s) = c^{2} + \Delta_{w}^{2}p(1-p) + \mac{O}\ab(s \phi\ab(\frac{\Delta_{w}}{2s}))
    \end{equation}
\end{Lem}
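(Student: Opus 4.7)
The plan is to substitute $m=\mathsf{m}(s)$ into the closed-form expression
\begin{equation*}
q_{\psi}(m,s) = v_{0}^{2} + \sum_{i=1}^{L}(v_{i}^{2}-v_{i-1}^{2})\,\Phi\ab(\frac{m-\theta_{i}}{s}),
\end{equation*}
and then classify the summands according to whether $z_{i}\coloneqq(\mathsf{m}(s)-\theta_{i})/s$ lies near $0$ or deep in the Gaussian tails.

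First I would split the sum into three blocks: $i\le i^{\ast}$, $i=i^{\ast}+1$, and $i\ge i^{\ast}+2$. For $i\le i^{\ast}$, Lemma~\ref{lem:m-dominant} combined with $\theta^{\ast}-\theta_{i}\ge \Delta_{w}$ gives $z_{i}\gtrsim \Delta_{w}/s$, so by Mill's inequality (Lemma~\ref{lem:mills}) $1-\Phi(z_{i})=\mac{O}((s/\Delta_{w})\phi(\Delta_{w}/2s))$; by a symmetric argument, for $i\ge i^{\ast}+2$, $\Phi(z_{i})=\mac{O}((s/\Delta_{w})\phi(\Delta_{w}/2s))$. At the critical index $i=i^{\ast}+1$, Lemma~\ref{lem:m-dominant} yields $z_{i^{\ast}+1}=\Phi^{-1}(p)+\mac{O}((s/\Delta_{w})\phi(\Delta_{w}/2s))$, and local Lipschitz continuity of $\Phi$ then gives $\Phi(z_{i^{\ast}+1})=p+\mac{O}((s/\Delta_{w})\phi(\Delta_{w}/2s))$.

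Plugging these in and telescoping $v_{0}^{2}+\sum_{i=1}^{i^{\ast}}(v_{i}^{2}-v_{i-1}^{2})=v_{i^{\ast}}^{2}$, the main part reduces to $v_{i^{\ast}}^{2}+(v_{i^{\ast}+1}^{2}-v_{i^{\ast}}^{2})p$. Using $v_{i^{\ast}+1}=v_{i^{\ast}}+\Delta_{w}$ and the relation $c=v_{i^{\ast}}+\Delta_{w}\,p$ from Eq.~\eqref{eq:def-p-z}, a short algebraic rearrangement yields
\begin{equation*}
v_{i^{\ast}}^{2}(1-p)+(v_{i^{\ast}}+\Delta_{w})^{2}p = (v_{i^{\ast}}+\Delta_{w} p)^{2}+\Delta_{w}^{2}p(1-p) = c^{2}+\Delta_{w}^{2}p(1-p),
\end{equation*}
which is exactly the claimed leading term.

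The one point requiring care is the aggregation of errors: each of the $\mac{O}(L)$ Gaussian-tail contributions carries a coefficient $|v_{i}^{2}-v_{i-1}^{2}|=\Delta_{w}(|v_{i}|+|v_{i-1}|)=\mac{O}(\omega\Delta_{w})$, so the total error is bounded by $L\cdot\omega\Delta_{w}\cdot(s/\Delta_{w})\phi(\Delta_{w}/2s)=\mac{O}(s\phi(\Delta_{w}/2s))$, where $\omega,\Delta_{w},L,p$ are treated as fixed while $s\to 0^{+}$. I expect this bookkeeping to be the only technical subtlety; both the Gaussian tail estimates and the algebraic identification of $c^{2}+\Delta_{w}^{2}p(1-p)$ follow directly from Lemmas~\ref{lem:m-dominant}--\ref{lem:S-dominant} and standard completing-the-square manipulations.
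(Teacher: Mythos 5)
Your proposal is correct and follows essentially the same route as the paper's proof: the same three-block split of the telescoping sum at the critical index $i^{\ast}+1$, Mill's inequality for the tail blocks, Lemma~\ref{lem:m-dominant} to identify $\Phi(z^{\ast})=p+\mac{O}((s/\Delta)\phi(\Delta/2s))$, and the same completing-the-square identity $v_{i^{\ast}}^{2}+(v_{i^{\ast}+1}^{2}-v_{i^{\ast}}^{2})p=c^{2}+\Delta^{2}p(1-p)$. The error bookkeeping you flag as the only subtlety is handled identically in the paper, with the bound $|v_{k}^{2}-v_{k-1}^{2}|\le 2\omega\Delta$ absorbing the factor of $L$ into the constant.
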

\begin{proof}
Similarly, we expand as follows:
\begin{align}
    q_{\psi}(m, s) &= v_{0}^{2} + \sum_{k=1}^{L}  \underbrace{v_{k}^{2}-v_{k-1}^{2}}_{\delequal \Delta v_{k}^{2}} \Phi\ab(z_{k}) \\
    &= v_{0}^{2} + \sum_{k=1}^{i^{\ast}} \Delta v_{k}^{2} \Phi(z_{k}) + \Delta v_{i^{\ast}+1}^{2} \Phi(z^{\ast}) + \sum_{k=i^{\ast}+2}^{L} \Delta v_{k}^{2} \Phi(z_{k}) \\
    &= v_{0}^{2} + \sum_{k=1}^{i^{\ast}} \Delta v_{k}^{2} (\Phi(z_{k})-1) + \sum_{k=1}^{i^{\ast}} \Delta v_{k}^{2} + \Delta v_{i^{\ast}+1}^{2} \Phi(z^{\ast}) + \sum_{k=i^{\ast}+2}^{L} \Delta v_{k}^{2} \Phi(z_{k}) \\
    &= \underbrace{v_{0}^{2} + \sum_{k=1}^{i^{\ast}} \Delta v_{k}^{2}}_{v_{i^{\ast}}^{2}} + \sum_{k=1}^{i^{\ast}} \Delta v_{k}^{2} (\Phi(z_{k})-1) + \Delta v_{i^{\ast}+1}^{2} \Phi(z^{\ast}) + \sum_{k=i^{\ast}+2}^{L} \Delta v_{k}^{2} \Phi(z_{k}) \\
    &= v_{i^{\ast}}^{2} + \underbrace{\sum_{k=1}^{i^{\ast}} \Delta v_{k}^{2} (\Phi(z_{k})-1)}_{A(s)} + \Delta v_{i^{\ast}+1}^{2} \Phi(z^{\ast}) + \underbrace{\sum_{k=i^{\ast}+2}^{L} \Delta v_{k}^{2} \Phi(z_{k})}_{B(s)}. 
\end{align}
Since $|v_{k}^{2}-v_{k-1}^{2}|\le 2 \omega \Delta$, we can bound $A(s)$, $B(s)$ as follows:
\begin{align}
    &|A(s)| \le \sum_{k=1}^{i^{\ast}} 2 \omega \Delta (1-\Phi(z_{k})) \le 2 \omega \Delta i^{\ast} \frac{2s}{\Delta} \phi\ab(\frac{\Delta}{2s}) \\
    &|B(s)| \le \sum_{k=i^{\ast}+2}^{L}2\omega \Delta \Phi(z_{k}) \le 2 \omega \Delta (L-i^{\ast}-1) \frac{2s}{\Delta} \phi\ab(\frac{\Delta}{2s}).
\end{align}
Hence,
\begin{equation}
    |A(s)| + |B(s)| \le 2 \omega (L-1) s \phi\ab(\frac{\Delta}{2s}) = \mac{O}(s e^{-\frac{\Delta^{2}}{8s^{2}}}).
\end{equation}
Furthermore,
\begin{equation}
    (v_{i^{\ast}+1}^{2}-v_{i^{\ast}}^{2}) \Phi(z^{\ast}) = (v_{i^{\ast}+1}^{2}-v_{i^{\ast}}^{2})p + \underbrace{(v_{i^{\ast}+1}^{2}-v_{i^{\ast}}^{2})(\Phi(z^{\ast})-p)}_{C(s)}.
\end{equation}
$C(s)$ satisfies
\begin{equation}
    C(s) = |(v_{i^{\ast}+1}^{2}-v_{i^{\ast}}^{2})(\Phi(z^{\ast})-p)| \le 4 \omega (L-1) s \phi\ab(\frac{\Delta}{2s})
\end{equation}
Since $v_{i^{\ast}+1}=v_{i^{\ast}} + \Delta$, it follows that
\begin{align}
    q_{\psi}(m, s) &= v_{i^{\ast}}^{2} + (v_{i^{\ast}+1}^{2}-v_{i^{\ast}}^{2}) p + \mac{O}\ab(s e^{-\frac{\Delta^{2}}{8s^{2}}}) \\   
    &= v_{i^{\ast}}^{2} + (2 v_{i^{\ast}} \Delta + \Delta^{2}) p + \mac{O}\ab(s e^{-\frac{\Delta^{2}}{8s^{2}}}) \\
    &= \underbrace{(v_{i^{\ast}}+p\Delta)^{2}}_{c^{2}} + \Delta^{2}p(1-p) + \mac{O}\ab(s e^{-\frac{\Delta^{2}}{8s^{2}}})\\
    &= c^{2} + \Delta^{2}p(1-p) + \mac{O}\ab(s e^{-\frac{\Delta^{2}}{8s^{2}}})
\end{align}
\end{proof}

\begin{Prop}
With $c \le \omega$,
\begin{equation}
    \varepsilon_{g} = \rho+ \sigma^{2} - 2 \kappa_{\psi} c + \sigma_{\psi}^{2} c^{2} + \sigma_{\psi}^{2} \Delta^{2}p(1-p) + \mac{O}\ab(s \phi\ab(\frac{\Delta}{2s})).
\end{equation}
\end{Prop}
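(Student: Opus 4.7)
The plan is to prove this proposition as a direct corollary of the asymptotic expansions already established. Recall the closed-form expression for the generalization error derived in Section III:
\begin{equation*}
    \varepsilon_{g} = \rho + \sigma^{2} + \sigma_{\psi}^{2} q_{\psi} - 2 \kappa_{\psi} m_{\psi}.
\end{equation*}
At the fixed point, the first fixed-point equation gives $m_{\psi}(\mathsf{m}(s), s) = c$ exactly (this is how $\mathsf{m}(s)$ is defined via the inverse map in Lemma \ref{lem:exist-unique-m}), so the cross term contributes $-2\kappa_{\psi} c$ with no remainder. The quadratic term is handled by Lemma \ref{lem:q-dominant}, which provides
\begin{equation*}
    q_{\psi}(\mathsf{m}(s), s) = c^{2} + \Delta^{2} p(1-p) + \mac{O}\ab(s \phi\ab(\frac{\Delta}{2s})).
\end{equation*}

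The proof then proceeds by simple substitution: plugging these two identities into the generalization-error formula yields
\begin{equation*}
    \varepsilon_{g} = \rho + \sigma^{2} - 2\kappa_{\psi} c + \sigma_{\psi}^{2} c^{2} + \sigma_{\psi}^{2} \Delta^{2} p(1-p) + \sigma_{\psi}^{2} \cdot \mac{O}\ab(s \phi\ab(\frac{\Delta}{2s})).
\end{equation*}
Since $\sigma_{\psi}^{2}$ is a bounded constant independent of $s$, it can be absorbed into the $\mac{O}(\cdot)$ term, giving the claimed bound.

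The only nontrivial detail to verify is that the hypothesis $|c| < \omega$ (so that the index $i^{\ast}$ and the fractional position $p \in (0,1)$ are well-defined via Lemma \ref{lem:exist-unique-m}) is indeed implied by $c \le \omega$ together with the setting of the proposition; strictly, the statement should be understood under $|c| < \omega$, matching the regime in which Lemmas \ref{lem:m-dominant}, \ref{lem:S-dominant}, and \ref{lem:q-dominant} apply. The boundary cases $|c| = \omega$ are handled separately in the informal theorem (yielding the $\rho + \sigma^{2} - 2\kappa_{\psi}\omega + \sigma_{\psi}^{2}\omega^{2}$ branch). There is no genuine obstacle here: all the analytical work has been front-loaded into Lemma \ref{lem:q-dominant}, whose proof carefully controls the tail contributions from thresholds $\theta_{k}$ with $k \neq i^{\ast}+1$ using Mill's-type estimates, and this proposition is essentially a bookkeeping consequence.
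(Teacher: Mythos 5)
Your proposal is correct and follows essentially the same route as the paper: substitute the exact fixed-point identity $m_{\psi}(\mathsf{m}(s),s)=c$ and the expansion of $q_{\psi}$ from Lemma \ref{lem:q-dominant} into $\varepsilon_{g}=\rho+\sigma^{2}+\sigma_{\psi}^{2}q_{\psi}-2\kappa_{\psi}m_{\psi}$, absorbing the constant $\sigma_{\psi}^{2}$ into the remainder. Your write-up is in fact cleaner than the paper's (whose displayed computation implicitly normalizes $\rho,\kappa_{\psi},\sigma_{\psi}^{2}$), and your remark that the hypothesis should read $|c|<\omega$ is a fair correction of the statement.
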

\begin{proof}
    From the fixed-point equation $m_{\psi}(\mathsf{m}(s), s)=c$, we have
\begin{equation}
    \varepsilon_{g}(m(s), s)=(1-c)^{2}+ (q_{\psi}-c^{2}) +\sigma^{2} = (1-c^{2}) +\Delta^{2} p (1-p) + \mac{O}\ab(s \phi\ab(\frac{\Delta}{2s}))
\end{equation}    
\end{proof}
\begin{Prop}
    \label{eq:order-s}
    If $|c| < \omega$, the equation $G_{\eta}(s)=0$ admits a solution with
    \begin{align}
        &s(\eta) = \Theta_{\eta}(\eta) ~~~~p \in (0, 1).\\
        &s(\eta) = \Theta_{\eta}\ab(\frac{1}{\sqrt{\log(1/\eta)}}),~~~p \in \{0, 1\}
    \end{align}
    Specifically, when $p \in (0, 1)$, 
    \begin{equation}
        s(\eta) = \frac{\chi}{2\Delta_{\omega} \phi(z^{\ast})} \eta + o(\eta),
    \end{equation}
    where
    \begin{equation}
        \varepsilon_{g}^{(0)} \coloneqq 1+ \sigma^{2} - 2 \kappa_{\psi} c + \sigma_{\psi}^{2} c^{2} + \sigma_{\psi}^{2} \Delta_{w}^{2} p(1-p).
    \end{equation}
\end{Prop}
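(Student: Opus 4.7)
The plan is to reduce the two-dimensional fixed-point system to the scalar equation $G_\eta(s) = 0$ in \eqref{eq:G-eta-def} via the implicit inverse $\mathsf{m}(s)$ from Lemma~\ref{lem:exist-unique-m}, and then to locate its root in each regime using the asymptotic expansions already developed for $S(\mathsf{m}(s), s)$ and $\varepsilon_g(\mathsf{m}(s), s)$.

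Existence would follow from a sign-change/IVT argument on $s \mapsto G_\eta(s)$. As $s \to 0^+$, the first term $(2\Delta s/\chi)\,S(\mathsf{m}(s), s)$ vanishes while $\eta\,\varepsilon_g$ converges to $\eta\,\varepsilon_g^{(0)} > 0$, so $G_\eta(s) < 0$; for $s$ of order one, the first term dominates and $G_\eta(s) > 0$ provided $\eta$ is small. Continuity then gives a root, and uniqueness on the relevant scale would follow because the first term of $G_\eta$ is strictly increasing in $s$ to leading order while $\varepsilon_g(\mathsf{m}(s), s)$ is essentially flat.

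For the interior regime $p \in (0, 1)$, I would substitute Lemmas~\ref{lem:m-dominant}, \ref{lem:S-dominant} and the generalization-error proposition into $G_\eta(s) = 0$ to obtain
\begin{equation}
\frac{2\Delta s}{\chi}\bigl(\phi(z^*) + \mathcal{O}(\phi(\Delta/(2s)))\bigr) = \eta\bigl(\varepsilon_g^{(0)} + o(1)\bigr).
\end{equation}
Since $\phi(z^*) = \phi(\Phi^{-1}(p))$ is a strictly positive constant, dividing through and isolating $s$ gives the explicit leading-order expression. Bootstrapping confirms the error is absorbed: once the rough bound $s = \mathcal{O}(\eta)$ is substituted into the correction $\phi(\Delta/(2s))$, it becomes $\exp(-\Theta(1/\eta^2))$, hence superpolynomially small. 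This yields both $s = \Theta_\eta(\eta)$ and the displayed formula.

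The hard part will be the boundary case $p \in \{0, 1\}$. Here $\phi(\Phi^{-1}(p))$ degenerates, so Lemmas~\ref{lem:m-dominant} and \ref{lem:S-dominant} do not supply a useful leading term and a dedicated local expansion is required. I would argue that $c = v_{i^*}$ forces $\mathsf{m}(s) \to v_{i^*}$, the midpoint of $[\theta_{i^*}, \theta_{i^*+1}]$, and that the two nearest-threshold contributions $\Phi((\mathsf{m}(s) - \theta_{i^*})/s)$ and $\Phi((\mathsf{m}(s) - \theta_{i^*+1})/s)$ balance, producing $S(\mathsf{m}(s), s) = \Theta(\phi(\Delta/(2s)))$ and making the reduced equation
\begin{equation}
C\,s\,\phi\!\left(\frac{\Delta}{2s}\right) = \eta\bigl(\varepsilon_g^{(0)} + o(1)\bigr)
\end{equation}
for some constant $C > 0$. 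Taking logarithms and using $\phi(x) = e^{-x^2/2}/\sqrt{2\pi}$ reduces this to $\Delta^2/(8 s^2) = \log(1/\eta)(1 + o(1))$, whence $s = \Theta_\eta(1/\sqrt{\log(1/\eta)})$. Rigorously controlling the $o(1)$ corrections during the logarithmic inversion (e.g., via a two-term Lambert-$W$-type expansion) is the principal technical obstacle.
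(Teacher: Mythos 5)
Your proposal is correct and follows essentially the same route as the paper: reduce the two-dimensional fixed-point system to the scalar equation $G_\eta(s)=0$ via the inverse map $\mathsf m(s)$ of Lemma~\ref{lem:exist-unique-m}, substitute the small-$s$ expansions of Lemmas~\ref{lem:m-dominant}, \ref{lem:S-dominant} and the $\varepsilon_g$ proposition, and solve to leading order. For $p\in(0,1)$ you and the paper both use $\phi(z^\ast)>0$ to obtain $s=\Theta_\eta(\eta)$ with the explicit coefficient (after bootstrapping the exponentially small remainder $\phi(\Delta/2s)$), and for $p\in\{0,1\}$ both observe that $\phi(z^\ast)$ degenerates to the same order as the tail terms so $S(\mathsf m(s),s)=\Theta(\phi(\Delta/2s))$, reducing to $s\,\phi(\Delta/2s)\asymp\eta$ and hence $s=\Theta_\eta(1/\sqrt{\log(1/\eta)})$. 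Where your proposal is marginally more careful than the paper: the paper does not explicitly state an intermediate-value existence argument, and its claim that Lemma~\ref{lem:S-dominant} ``yields'' $S=\Theta(\phi(\Delta/(2s)))$ in the boundary case is really asserting a two-sided bound (the lemma only gives an upper bound on the remainder), which your nearest-threshold balance argument actually supplies; the paper is also content to stop at leading-order scaling without the Lambert-$W$-style control of the $o(1)$ in the logarithmic inversion that you flag as a technical obstacle.
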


\begin{proof}
    Substituting these into the fixed-point equation gives
    \begin{equation}
        G_{\eta}(s) = \frac{2s\Delta}{c} \ab(\phi(z^{\ast}) + \mac{O}\ab(\phi\ab(\frac{\Delta}{2s}))) - \eta \ab(\underbrace{(1-c^{2}) +\Delta^{2} p (1-p)}_{\varepsilon_{g}^{(0)}} + \mac{O}\ab(s \phi\ab(\frac{\Delta}{2s}))) = 0
    \end{equation}
    Therefore,
    \begin{equation}
        \frac{2s\Delta}{c} \phi(z^{\ast}) - \eta \varepsilon_{g}^{(0)} = \mac{O}\ab(e^{-\frac{\Delta^{2}}{8s^{2}}}) 
    \end{equation}
    Hence,
    \begin{equation}
        \lim_{\eta \to 0}\frac{s(\eta)}{\eta} = \frac{\varepsilon_{g}^{(0)}}{2(1+\lambda)\Delta \phi(z^{\ast})}.
    \end{equation}
    When $p \in \{0, 1\}$, Lemma \ref{lem:S-dominant} yields $S(\mathsf{m}(s), s) = \Theta(\phi(\Delta/(2s)))$.
    Eq.~\eqref{eq:G-eta-def} becomes
    \begin{equation}
        s \phi\ab(\frac{\Delta}{2s}) \asymp \eta
    \end{equation}
    and substituting $\phi(t) =(2\pi)^{-1/2} e^{-t^{2}/2}$ with $t=\Delta/2s$ gives
    \begin{equation}
        e^{-\frac{\Delta^{2}}{8s^{2}}} \asymp \eta \implies s(\eta) = \frac{\Delta}{2\sqrt{2 \log (1/\eta)}} (1 + o_{\eta}(1)).
    \end{equation}
    The asserted order for $s(\eta)\phi(\Delta/2s(\eta))$ then follows by direct substitution. 
\end{proof}

In the above section, we can estimate the order of parameter $s$.
Here, we evaluate the generalization error at the fixed point for small $\eta$.

\begin{Thm}
    \label{thm:small-learning-limit-eg}
    In the small-learning-rate limit, the generalization error $\varepsilon_{g}^{\ast}$ satisfies
    \begin{equation} 
        \varepsilon_{g}^{\ast}= 
        \begin{cases} \varepsilon_{g}^{(0)} + \sigma_{\psi}^{2} \Delta^{2} p(1-p) + o(\eta), &|c| < \omega, p \in (0, 1), \\ \varepsilon_{g}^{(0)} + o(\nicefrac{1}{\sqrt{\log(1/\eta)}}), &|c| < \omega, p \in \{0, 1\}, \\ \rho+\sigma^{2}-2\kappa_{\psi}\omega + \sigma_{\psi}^{2} \omega^{2} + o(\eta), &|c| \ge \omega, 
        \end{cases} 
    \end{equation} 
    where $\varepsilon_{g}^{(0)} = \rho+\sigma^{2} - 2 \kappa_{\psi}c + \sigma_{\psi}^{2} \omega^{2}$
    denotes the generalization error of the input-only quantized model in the small-learning-rate limit.
\end{Thm}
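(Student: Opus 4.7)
The plan is to substitute the small-$s$ expansions already derived into the expression $\varepsilon_g = \rho + \sigma^2 + \sigma_\psi^2 q_\psi - 2\kappa_\psi m_\psi$ evaluated at the fixed point and then track which term contributes at which order in $\eta$. The interior regime $|c|<\omega$ reduces to plugging in the fixed-point identity $m_\psi=c$ together with the expansion of $q_\psi$ at leading order in $s$; the saturated regime $|c|\ge\omega$ will require a separate argument because the equation $m_\psi(m,s)=c$ has no solution.

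For the first case, $|c|<\omega$ and $p\in(0,1)$, Lemma \ref{lem:exist-unique-m} supplies the unique root $\mathsf{m}(s)$ and Lemma \ref{lem:q-dominant} gives
\[
  q_\psi(\mathsf{m}(s),s) \;=\; c^2 + \Delta^2 p(1-p) + \mathcal{O}\!\left(s\,\phi\!\left(\tfrac{\Delta}{2s}\right)\right).
\]
Proposition \ref{eq:order-s} identifies $s(\eta)=\Theta(\eta)$ in this regime, so $\Delta/(2s)=\Theta(1/\eta)$ and the remainder $s\,\phi(\Delta/(2s))$ decays faster than any polynomial in $\eta$; in particular it is $o(\eta)$, which yields the first claim. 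For $|c|<\omega$ with $p\in\{0,1\}$, the algebraic term $\Delta^2 p(1-p)$ vanishes exactly, while Proposition \ref{eq:order-s} supplies the slower scaling $s(\eta)=\Theta(1/\sqrt{\log(1/\eta)})$. A direct substitution gives $\phi(\Delta/(2s))=\Theta(\eta)$ and hence $s\,\phi(\Delta/(2s))=\Theta(\eta/\sqrt{\log(1/\eta)})=o(1/\sqrt{\log(1/\eta)})$, as required.

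For the saturated regime $|c|\ge\omega$, Lemma \ref{lem:exist-unique-m} shows that $m_\psi$ takes values in $(-\omega,\omega)$, so the fixed-point equation $m_\psi(m,s)=c$ has no root. The plan is to interpret $\varepsilon_g^{\ast}$ as $\lim_{\tau\to\infty}\varepsilon_g(\tau)$: the $m$-ODE has the strictly signed right-hand side $-\eta(\sigma_\psi^2+\lambda)(m_\psi-c)$, which drives $m\to\mathrm{sgn}(c)\cdot\infty$. Along this trajectory both $m_\psi$ and $q_\psi$ saturate, giving $m_\psi\to\mathrm{sgn}(c)\,\omega$ and $q_\psi\to\omega^2$ by dominated convergence (using $|\psi|\le\omega$), provided $s$ remains bounded. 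Inserting these limits into the generalization-error formula yields the leading value $\rho+\sigma^2-2\kappa_\psi\omega+\sigma_\psi^2\omega^2$, with an $o(\eta)$ correction coming from the $\eta^2\sigma_\psi^2\varepsilon_g$ diffusion term in the $q$-dynamics together with the exponentially small deviation of $m_\psi$ from $\omega$.

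The main obstacle is this saturated regime. One must show that the coupled $(m,q)$-system, lacking a genuine fixed point, nevertheless produces a well-defined long-time limit for $\varepsilon_g$, and that the approach to this limit is controlled at rate $o(\eta)$. This requires Gaussian tail estimates analogous to those used in Lemmas \ref{lem:m-dominant}--\ref{lem:q-dominant} to bound how rapidly $m_\psi(m,s)$ and $q_\psi(m,s)$ approach their saturation values as $|m|\to\infty$, together with a uniform bound on $s$ along the trajectory (for which the $q$-ODE, dominated by the linear restoring term once $m_\psi$ saturates, should suffice). The two interior cases, by contrast, are essentially routine once Proposition \ref{eq:order-s} is in hand: the only work is matching the orders of $\eta$ against the Gaussian tails appearing in the remainder of Lemma \ref{lem:q-dominant}.
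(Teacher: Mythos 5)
Your treatment of the two interior cases ($|c|<\omega$) is essentially identical to the paper's: both substitute the fixed-point identity $m_{\psi}=c$ and the expansion of $q_{\psi}$ from Lemma \ref{lem:q-dominant}, then use the scaling of $s(\eta)$ from Proposition \ref{eq:order-s} to show the Gaussian-tail remainder $s\,\phi(\Delta/2s)$ is $o(\eta)$ when $s=\Theta(\eta)$ and $o(1/\sqrt{\log(1/\eta)})$ when $s=\Theta(1/\sqrt{\log(1/\eta)})$; your order-matching here is correct. Where you diverge is the saturated regime $|c|\ge\omega$. The paper replaces the infeasible stationarity condition $m_{\psi}=c$ by its \emph{constrained} counterpart — the minimizer of the affine map $\mu\mapsto(\sigma_{\psi}^{2}+\lambda)\mu-\rho\kappa_{\psi}$ over $\mu\in[-\omega,\omega]$ sits at the boundary $\mathrm{sign}(c)\,\omega$ — and then reads off $m_{\psi}=\omega+o_{\eta}(1)$, $q_{\psi}=\omega^{2}+o_{\eta}(1)$ from the same tail estimates. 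You instead interpret $\varepsilon_{g}^{\ast}$ dynamically as $\lim_{\tau\to\infty}\varepsilon_{g}(\tau)$, noting that the strictly signed drift $-\eta(\sigma_{\psi}^{2}+\lambda)(m_{\psi}-c)$ pushes $m$ to infinity while $m_{\psi},q_{\psi}$ saturate by boundedness of $\psi$. Both routes land on the same limiting value, and both leave the same technical points open (uniform control of $s$ along the trajectory and the rate at which $m_{\psi},q_{\psi}$ approach saturation); the paper's version is no more rigorous on these points than yours, so your candid flagging of the obstacle is appropriate rather than a defect relative to the published argument. The dynamical framing arguably better reflects what $\varepsilon_{g}^{\ast}$ means when no genuine fixed point exists, while the paper's constrained-stationarity framing is shorter and plugs directly into the existing fixed-point machinery.
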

\begin{proof}
    We treat the three regimes separately.
    
    \emph{Case 1: $|c| < \omega$ and $p \in (0, 1)$.}
    By Proposition \ref{eq:order-s}, the fixed-point solution satisfies $s(\eta)=\Theta(\eta)$.
    Substituting $(m(s(\eta)), s(\eta))$ into Proposition \ref{eq:order-s} yields
    \begin{equation}
        \varepsilon_{g}^{\ast} = \varepsilon_{g}^{(0)}+ \mac{O}_{s(\eta)}\ab(s(\eta)\phi\ab(\frac{\Delta}{2s(\eta)})).
    \end{equation}
    Since $\phi(\Delta/2s)$ is exponentially small in $\nicefrac{1}{\eta^{2}}$, the remainder is $o_{\eta}(\eta)$.

    \emph{Case 2: $|c| < \omega$ and $p \in \{0, 1\}$.}
    Proposition~\ref{eq:order-s} yields
    \begin{equation}
        s(\eta) \phi\ab(\frac{\Delta}{2 s(\eta)}) = \Theta\ab(\frac{\eta}{\sqrt{\log(\nicefrac{1}{\eta})}}),
    \end{equation}
    which implies that
    \begin{equation}
        \varepsilon_{g}^{\ast} = \varepsilon_{g}^{(0)} + \mac{O}_{s(\eta)}\ab(s(\eta) \phi\ab(\frac{\Delta}{2s(\eta)})) = \varepsilon_{g}^{(0)} + o_{\eta}\ab(\frac{1}{\sqrt{\log(\nicefrac{1}{\eta})}}),
    \end{equation}
    Since $\nicefrac{\eta}{\sqrt{\log(\nicefrac{1}{\eta}}} = o_{\eta}(\nicefrac{1}{\sqrt{\log(1/\eta)}})$, we show second claim.
    
    \emph{Case 3: $|c| \ge \omega$.}
    Since $m_{\psi}(m, s) \in [-\omega, \omega]$ for all $m$, $s$, the interior constraint $m_{\psi}=c$ is infeasible.
    In the small-$\eta$ limit, the first stationarity condition is replaced by its constrained counterpart: the unique minimizer of the affine function $\mu \mapsto (\sigma_{\psi}^{2}+\lambda)\mu - \rho \kappa_{\psi}$ over $\mu \in [-\omega, \omega]$ is attained at the boundary $\mu = \mathrm{sign}(c) \omega$.
    Since $c \ge 0$ under the assumption, the relevant boundary is $+\omega$. 
    Therefore any small-$\eta$ fixed point satisfy
    \begin{equation}
        m_{\psi}(m, s) = \omega + o_{\eta}(1), ~~ q_{\psi}(m, s) = \omega^{2} + o_{\eta}(1),~~s \to +0,
    \end{equation}
    where the last convergence follows from the same tail estimates used earlier, since all terms except the final one are negligible while the last decays as  $\phi(\Delta/2s)$.
    Substituting into $\varepsilon_{g}$ yields
    \begin{equation}
        \varepsilon_{g}^{\ast} = \rho + \sigma^{2} - 2 \kappa_{\psi} \omega + \sigma_{\psi}^{2} \omega^{2} + r(\eta),
    \end{equation}
    with $|r(\eta)| \le C(|\omega- m_{\psi}|+|\omega^{2}-q_{\psi}|)$.
    As in Proposition~\ref{eq:order-s} implies that
    \begin{equation}
        s(\eta) \phi\ab(\frac{\Delta}{2s(\eta)}) = \Theta_{\eta}\ab(\frac{\eta}{\sqrt{\log(\nicefrac{1}{\eta})}}),
    \end{equation}
    where $|\omega-m_{\psi}|+|\omega^{2}-q_{\psi}|= o_{\eta}(\eta)$. 
    Therefore $r(\eta)=o_{\eta}(\eta)$, which proves the third claim.
\end{proof}

\section{RELATION TO LAYER-WISE PTQ}\label{sec:extended-related-work}
The analysis of the linear regression model is closely related to layer-wise post-training quantization (PTQ), a recently emerging approach for LLM quantization that has gained widespread adoption after pre-training.
In layer-wise PTQ \cite{lin2024awq, achiam2023gpt, zhao2025benchmarking}, each linear layer of the LLM is quantized by solving the following optimization problem:
\begin{equation}
    \min_{\hat{W} \in \R^{m \times d}} \ab[ 
    \frac{1}{2} \sum_{\mu=1}^{n} \ab\|W\B{x}_{\mu}-\hat{W} \B{x}_{\mu}\|^{2} + \frac{\lambda}{2} \|\hat{W}\|_{F}^{2}
    ],
\end{equation}
where $W$ denotes the pre-quantization weights, and $\hat{W}$ denotes the quantized weights.
When we set $m=1$ and identify vectors so that $\B{w} \coloneqq \psi(\B{w}; b, \omega)$ and $\B{w}^{\ast}=\psi(\B{w}; b, \omega)$ with  $\B{w}^{\ast}=\B{w}$, the formulation coincides up to the scaling factor $\nicefrac{1}{\sqrt{d}}$.
This correspondence, therefore, provides insight into the subproblems that arise in layer-wise PTQ.
Moreover, recent work argues that the following objective should be solved when applying layer-wise PTQ \cite{arai2025quantization}:
\begin{equation}
    \min_{\hat{W} \in \R^{m \times d}} \ab[ 
    \frac{1}{2} \sum_{\mu=1}^{n} \ab\|W\B{x}_{\mu}-\hat{W} \hat{\B{x}}_{\mu}\|^{2} + \frac{\lambda}{2} \|\hat{W}\|_{F}^{2}
    ]
\end{equation}
where $\hat{\B{x}}$ denotes the activation produced by the quantized weights of the preceding layers, i.e., the input to the current layer.
The quantization error is scalar in our setting and can be partially captured by the noise term $\xi_{\mu}$. 
Building on this analysis, evaluating the performance of layer-wise PTQ is an interesting direction for future work.

\section{ADDITIONAL EXPERIMENTS}\label{sec:additional-experiments}

\subsection{Dependence on the Ridge Regularization Parameter}

We study how the ridge coefficient $\lambda$ influences the learning dynamics of a weight-only quantized linear model.
We report the generalization error $\varepsilon_{g}(\tau)$ as a function of training time $\tau$.
In the numerical STE simulation, the input dimension is fixed at $d=900$, and each curve represents an average over five independent runs.
We compare the empirical STE trajectories with the deterministic ODE predictions under the isotropy assumption.
The ablation study is organized in two complementary parts, as in Figure~\ref{fig:bits-omega-2x3}.
In the \emph{top row}, the quantization range is fixed at $\omega=1.0$ while the bit width varies as $b\in\{2,3,4,5\}$;  
in the \emph{bottom row}, we fix $b=3$ and vary the quantization range $\omega\in\{0.25,0.50,1.00,1.25,1.50\}$.  
From left to right, the columns correspond to $\lambda \in \{0.5,1.0,1.5\}$.
Across all panels, the ODE predictions closely follow the empirical STE trajectories.

\textbf{Effect of Bit Width at Fixed $\omega=1.0$.}
Increasing the bit width consistently reduces both the transient and the steady-state error. 
The improvement is most notable when increasing from $b=2$ to $b=3$, while gains beyond $b=4$ are marginal: the $b=4$ and $b=5$ trajectories exhibit similar behavior in $\lambda\in\{0.5,1.0\}$ and remain very close even at $\lambda=1.5$. 

\textbf{Effect of Quantization Range at Fixed $b=3$.}
The choice of $\omega$ influences the achievable generalization error.
Small ranges, $\omega\in\{0.25,0.50\}$ saturate early and lead to significantly higher plateaus.  
Moderate to large ranges, $\omega\in\{1.0,1.25,1.5\}$, yield much lower errors, with $\omega\approx 1.0 \sim 1.25$ performing best at $\lambda\in\{0.5,1.0\}$. 
Thus, beyond the bit budget, the quantization range serves as a key parameter for controlling quantization performance.

\textbf{Practical Implication: Early Stopping.}
At $\lambda=1.5$, the intermediate minima of $\varepsilon_{g}(\tau)$ are much lower than the eventual steady-state values.
This indicates that \emph{early stopping}, choosing the iterate at the transient minimum, yields better generalization than training to convergence.
For smaller $\lambda$, the trajectories are nearly monotonic, and the gap between the transient and final errors becomes small, making early stopping less important.

\begin{figure*}[tb]
  \centering
  \begin{subfigure}[t]{0.32\textwidth}
    \centering
    \includegraphics[width=0.9\linewidth]{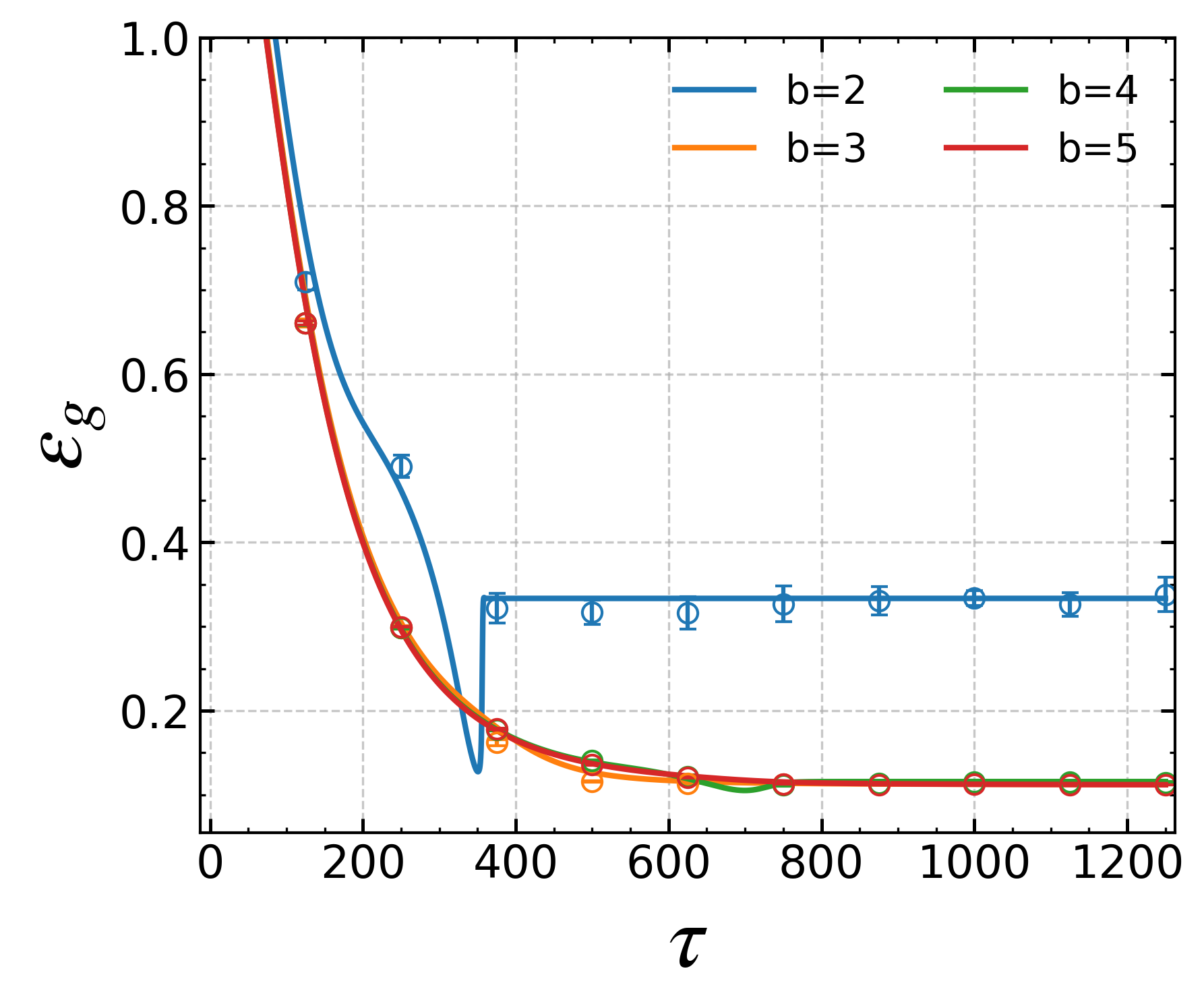}
    \subcaption{$\lambda=0.5$}
  \end{subfigure}\hfill
  \begin{subfigure}[t]{0.32\textwidth}
    \centering
    \includegraphics[width=0.9\linewidth]{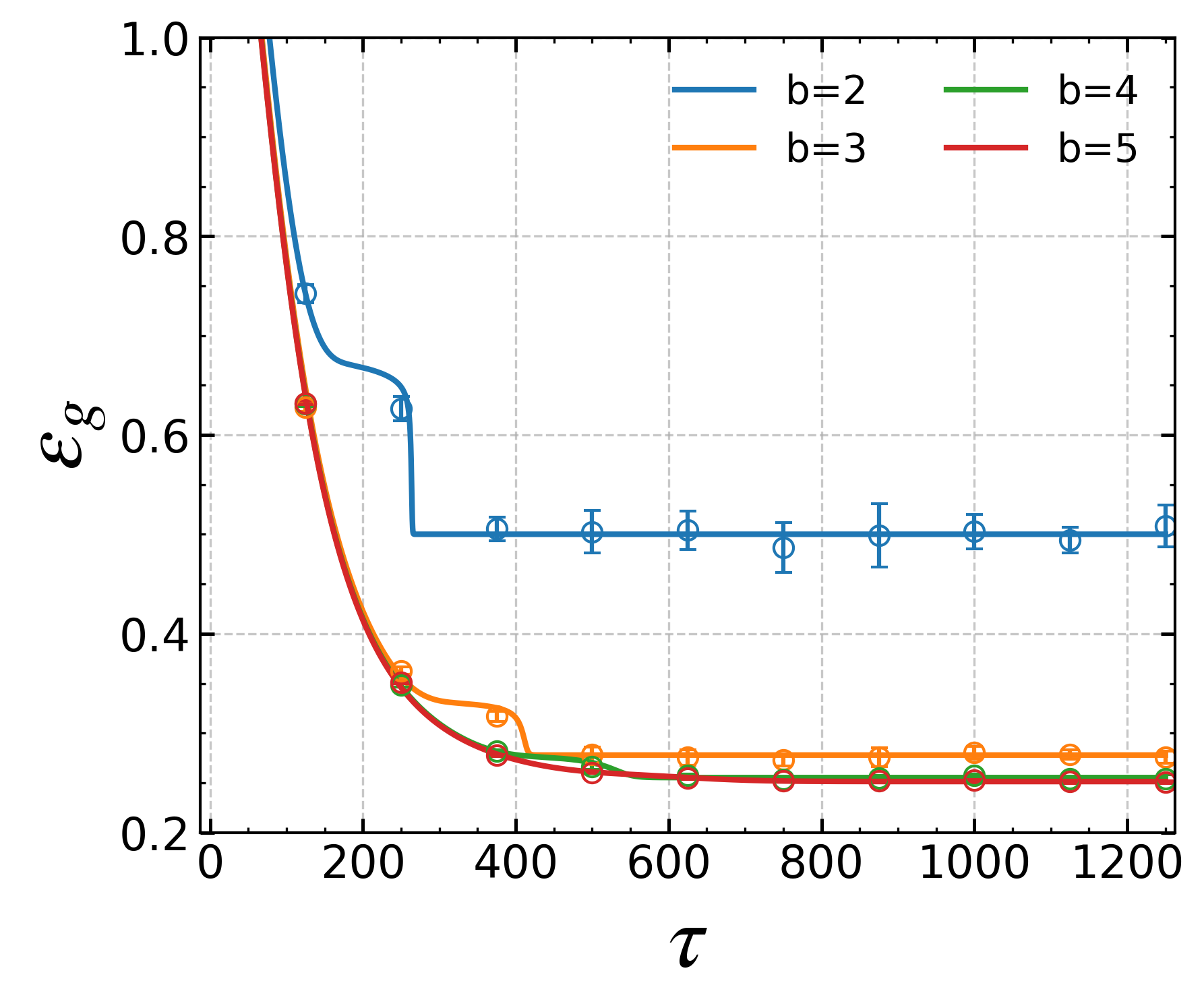}
    \subcaption{$\lambda=1.0$}
  \end{subfigure}\hfill
  \begin{subfigure}[t]{0.32\textwidth}
    \centering
    \includegraphics[width=0.9\linewidth]{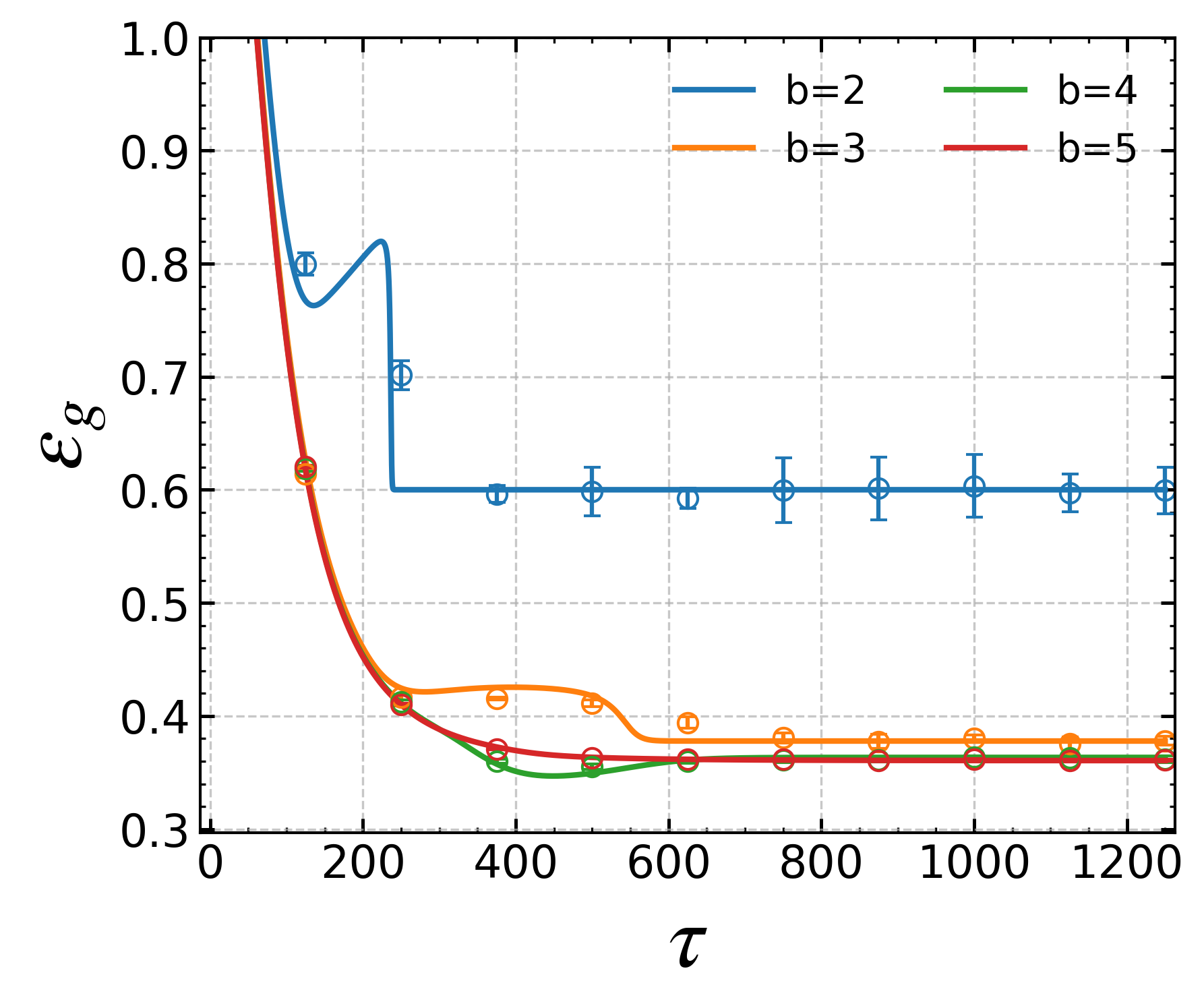}
    \subcaption{$\lambda=1.5$}
  \end{subfigure}

  \vspace{0.6em}
  \begin{subfigure}[t]{0.32\textwidth}
    \centering
    \includegraphics[width=0.9\linewidth]{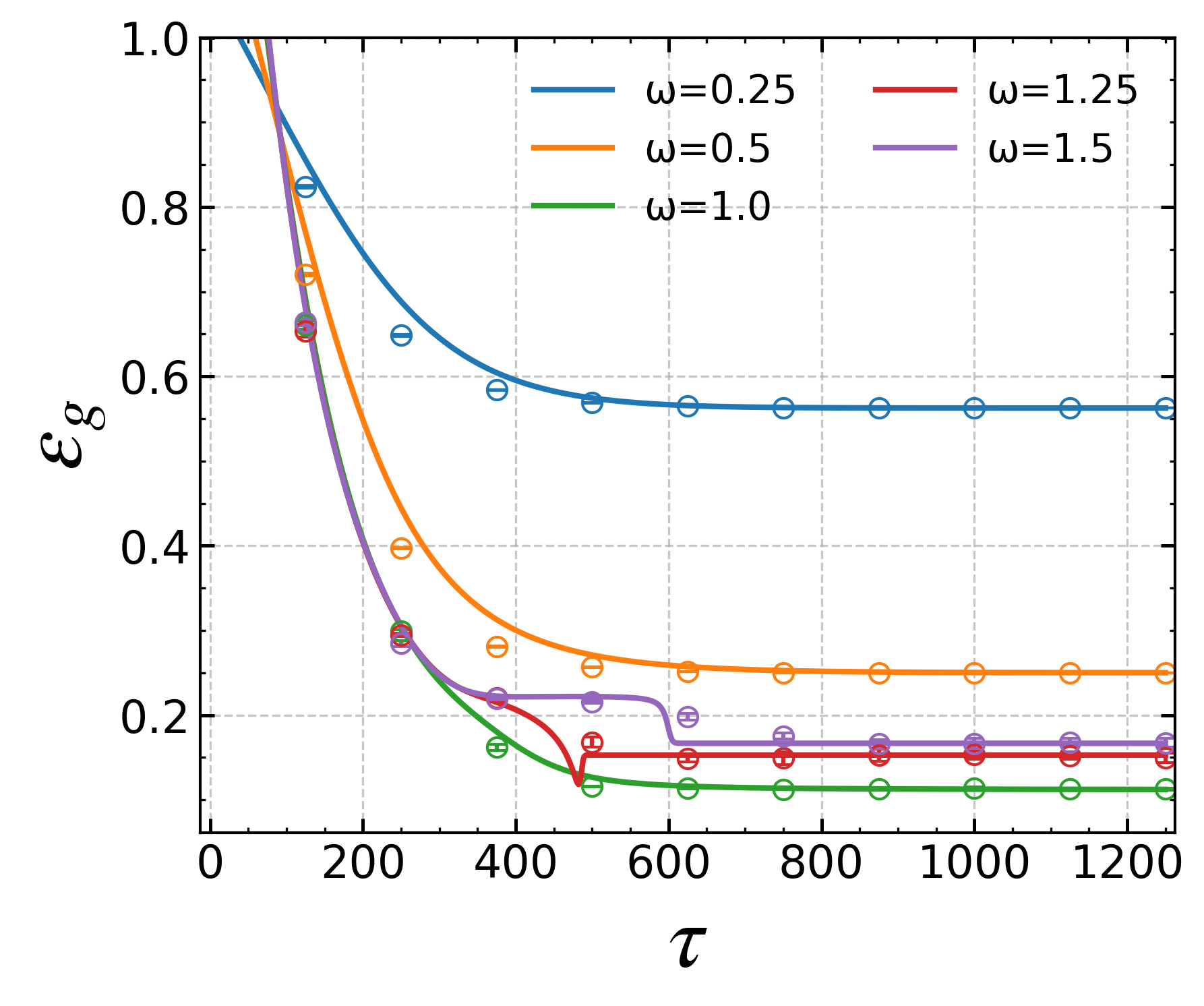}
    \subcaption{$\lambda=0.5$}
  \end{subfigure}\hfill
  \begin{subfigure}[t]{0.32\textwidth}
    \centering
    \includegraphics[width=0.9\linewidth]{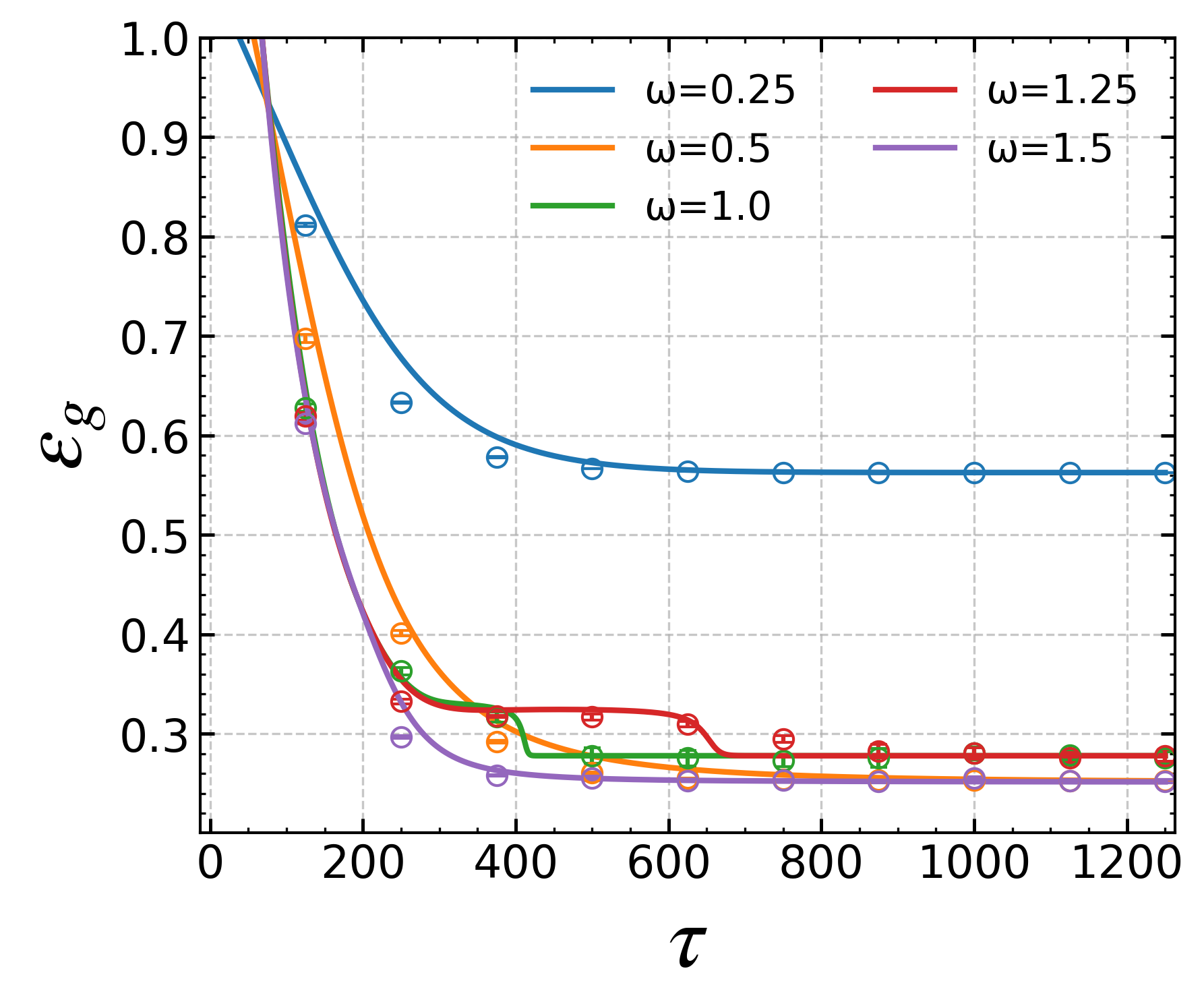}
    \subcaption{$\lambda=1.0$}
  \end{subfigure}\hfill
  \begin{subfigure}[t]{0.32\textwidth}
    \centering
    \includegraphics[width=0.9\linewidth]{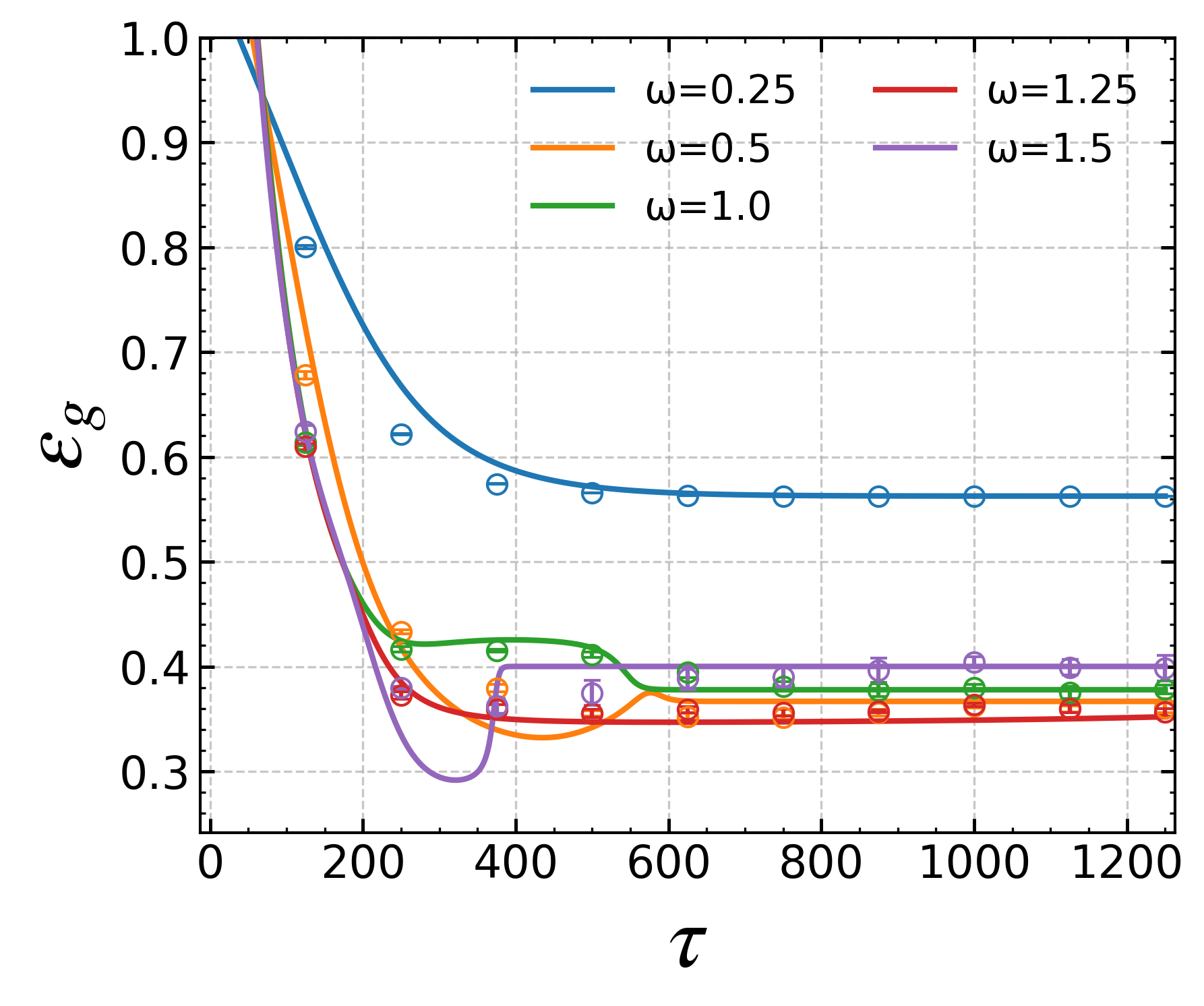}
    \subcaption{$\lambda=1.5$}
  \end{subfigure}

  \caption{Each panel shows the generalization error $\varepsilon_{g}$ as a function of training time $\tau$. 
  \emph{Top row:} effect of bit width $b\in\{2,3,4,5\}$ at fixed quantization range $\omega=1.0$.  
  \emph{Bottom row:} effect of quantization range $\omega$ at fixed bit width $b=3$. 
  Columns are ordered left to right by $\lambda$. 
  Solid curves show the ODE predictions; circular markers denote empirical averages from the STE simulation in dimension $d=900$ averaged over five runs.}
  \label{fig:bits-omega-2x3}
\end{figure*}

\end{document}